\icmltitlerunning{Taylor Expansions of Discount Factors}
\newtheoremstyle{definition}
{3pt} 
{3pt} 
{} 
{} 
{\bfseries} 
{.} 
{.5em} 
{} 
\theoremstyle{definition}
\renewcommand{\epsilon}{\varepsilon}
\renewcommand{\hat}{\widehat}
\renewcommand{\tilde}{\widetilde}
\renewcommand{\bar}{\overline}
\begin{document}

\twocolumn[
\icmltitle{Taylor Expansions of Discount Factors}

\icmlsetsymbol{equal}{*}

\begin{icmlauthorlist}
\icmlauthor{Yunhao Tang}{cu}
\icmlauthor{Mark Rowland}{dml}
\icmlauthor{R\'emi Munos}{dmp}
\icmlauthor{Michal Valko}{dmp}
\end{icmlauthorlist} 

\icmlaffiliation{cu}{Columbia University, New York, USA}
\icmlaffiliation{dmp}{DeepMind, Paris, France}
\icmlaffiliation{dml}{DeepMind, London, UK}
\icmlcorrespondingauthor{yt2541@columbia.edu}{Yunhao}

\icmlkeywords{Machine Learning, ICML}

\vskip 0.3in
]



\printAffiliationsAndNotice{}  

\begin{abstract}
  In practical reinforcement learning (RL), the discount factor used for estimating value functions often differs from that used for defining the evaluation objective. In this work, we study the effect that this discrepancy of discount factors has during learning, and discover a family of objectives that interpolate value functions of two distinct discount factors. Our analysis suggests new ways for estimating value functions and performing policy optimization updates, which demonstrate empirical performance gains. This framework also leads to new insights on commonly-used deep RL heuristic modifications to policy optimization algorithms.
\end{abstract}

\section{Introduction}
\label{sec:intro}

One of the most popular models for reinforcement learning (RL) is the Markov decision process (MDP) with exponential discounting over an infinite horizon \citep{sutton1998introduction,puterman2014markov}, with discounted objectives of the following form
\begin{align*}
    V_\gamma^\pi(x)=   \mathbb{E}_{\pi}\left\lbrack \sum_{t=0}^{\infty} \gamma^t r_t \; \middle| \; x_0=x \right\rbrack \, .
\end{align*}
Discounted models enjoy favorable theoretical properties, and are also the foundation of many practical RL algorithms that enjoy empirical success (e.g. see \citep{mnih2015human,schulman2015trust,lillicrap2015continuous,schulman2017proximal}). However, in most applications of RL, the objective of interest is the expected \emph{undiscounted cumulative return},
\begin{align}
    \mathbb{E}_{\pi}\left\lbrack \sum_{t=0}^T r_t \; \middle| \; x_0=x \right\rbrack,\label{eq:rl-obj}
\end{align} 
where $T<\infty$ is a (possibly random) evaluation horizon, which usually also denotes the end of the trajectory. For example, $T$ could be the first time the MDP gets into a terminal state (e.g., a robot falls); when the MDP does not have a natural terminal state, $T$ could be enforced as a deterministic horizon. This creates a technical gap between algorithmic developments and implementations: it is tempting to design algorithms that optimize $V_\gamma^\pi(x)$, however, further  heuristics are often needed to get strong practical performance. This issue manifests itself with the policy gradient (PG) theorem \citep{sutton2000policy}. Let $\pi_\theta$ be a parameterized policy. The policy gradient (PG) $\nabla_\theta V_\gamma^{\pi_\theta}(x)$ is computed as
\begin{align}
       \mathbb{E}_{\pi_\theta}\left\lbrack \sum_{t=0}^{\infty} \gamma^t Q_\gamma^{\pi_\theta}(x_t,a_t)\nabla_\theta \log\pi_\theta(a_t|x_t) \; \middle| \; x_0=x \right\rbrack\,.
      \label{eq:pg}
\end{align}
However, the practical implementation of PG updates usually omits the discount factors (see for example the high-quality open source packages \citep{dhariwal2017openai,SpinningUp2018}, leading to an approximate gradient of the form
\begin{align}
      \mathbb{E}_{\pi_\theta}\left\lbrack \sum_{t=0}^{T}  Q_\gamma^{\pi_\theta}(x_t,a_t)\nabla_\theta \log\pi_\theta(a_t|x_t)\; \middle| \; x_0=x \right\rbrack \, .
      \label{eq:uniform-gradient}
\end{align}
Most prior work on PG algorithms rely on this heuristic update to work properly in deep RL applications. The intuitive argument for dropping the factor $\gamma^t$ is that Eqn~\eqref{eq:pg} optimizes $V_\gamma^{\pi_\theta}(x)$, which is very myopic compared to the objective in Eqn~\eqref{eq:rl-obj}. Consequently, the exponential discount $\gamma^t$ is too aggressive for weighting updates with large $t$. As a concrete example, in many MuJoCo control tasks \citep{brockman2016openai}, the most commonly used discount factor is $\gamma=0.99$. This leads to an effective horizon of $\frac{1}{1-\gamma}=100$, which is much smaller than the evaluation horizon $T=1000$. This technical gap between theory and practice has been alluded to previously (by e.g., \citealp{o2016combining}) and is explicitly discussed by \citet{nota2019policy}.

To bypass this gap, a straightforward solution would be to na\"ively increase the discount factor $\gamma\geq1-\frac{1}{T}$ and apply the PG in Eqn~\eqref{eq:pg}. In the example above, this implies using $\gamma\geq0.999$. Unfortunately, this rarely works well in practice, as we will also see in experiments. The failure might be due to the higher variance of the estimation \citep{schulman2015high} or the collapse of the action gaps \citep{lehnert2018value,laroche2018reinforcement}, which is aggravated when combined with function approximations.

Nevertheless, as a theoretical framework, it is insightful to emulate the undiscounted objective in Eqn~\eqref{eq:rl-obj}  using the (un)discounted objective $V_{\gamma^\prime}^\pi(x)$ with $\gamma^\prime\geq 1-\frac{1}{T}$.  To build intuitions about this approximation, note that when the time step is small $t\ll T$, the multiplicative factor $(\gamma^\prime)^t \approx 1$ and the cumulative rewards are almost undiscounted; even when $t=T$, we have $(\gamma^\prime)^t\geq (1-\frac{1}{T})^T \approx \frac{1}{e} \gg 0$. Overall, this is a much more accurate approximation than $V_\gamma^\pi(x)$. This naturally prompts us to answer the following general question: 
\emph{How do we evaluate and optimize $V_{\gamma^\prime}^\pi(x)$ with estimates built for $V_\gamma^\pi(x)$ where $0<\gamma<\gamma^\prime \leq 1$?}

\paragraph{Main idea.} 
We study the relation between $V_\gamma^\pi(x)$ and $V_{\gamma^\prime}^\pi(x)$ via Taylor expansions. In Section~\ref{sec:taylor}, we identify a family of interpolating objectives between the more myopic objective $V_\gamma^\pi(x)$ and the true objective of interest $V_{\gamma^\prime}^\pi(x)$. In Section~\ref{sec:opt}, 
we start with insights on why the heuristic in Eqn~\eqref{eq:uniform-gradient} might be useful in practice. Then, we apply Taylor expansions directly to the heuristic updates, to arrive at a family of interpolating updates. In Section~\ref{sec:algo}, we build on theoretical insights to derive improvements to established deep RL algorithms. We show their performance gains in Section~\ref{sec:exp}.

\section{Background}

Consider the setup of a MDP. At any discrete time $t\geq 0$, the agent is in state $x_t\in \mathcal{X}$, takes an action $a_t\in \mathcal{A}$, receives an instant reward $r_t=r(x_t,a_t)\in [0, R_{\text{max}}]$ and transitions to a next state $x_{t+1}\sim p(\cdot|x_t,a_t)$. For simplicity, we assume $r(x,a)$ to be deterministic. Let policy $\pi:\mathcal{X} \rightarrow \mathcal{P}(\mathcal{A})$ be a mapping from states to distributions over actions. Let $\gamma\in [0,1)$ be a discount factor, define the Q-function $Q_\gamma^\pi(x,a) \coloneqq \mathbb{E}_\pi\left\lbrack\sum_{t=0}^\infty \gamma^t r_t \; \middle| \; x_0=x, a_0=a \right\rbrack$  and value function $V_\gamma^\pi(x) \coloneqq \mathbb{E}_\pi\left\lbrack\sum_{t=0}^\infty \gamma^t r_t \; \middle| \; x_0=x \right\rbrack$. We also define the advantage function $A_\gamma^\pi(x,a)\coloneqq Q_\gamma^\pi(x,a)-V_\gamma^\pi(x)$. Here, $\mathbb{E}_\pi \left\lbrack \cdot\right\rbrack$ denotes that the trajectories $(x_t,a_t,r_t)_{t=0}^\infty$ are generated under policy $\pi$. Throughout the paper, we use subscripts $\gamma$ to emphasize that RL quantities implicitly depend on discount factors.

\subsection{Linear programs for reinforcement learning}

Henceforth, we assume all vectors to be column vectors. The value functions $V_\gamma^\pi$ satisfy the Bellman equations $V_\gamma^\pi(x)=\mathbb{E}_{\pi}\left\lbrack r(x,a) +\gamma V_\gamma^\pi (x^\prime) \mid x_0 = x \right\rbrack$ \citep{bellman1957markovian}. Such equations can be encoded into a linear program (LP) \citep{de2003linear,puterman2014markov}. Let $V\in\mathbb{R}^{\mathcal{X}}$ be the primal variables, consider the following LP,
\begin{align}
            \max \  \delta_x^T V, \ \ 
            V=r^\pi + \gamma P^\pi V,
            \label{eq:primal-lp}
\end{align}

where  $r^\pi\in\mathbb{R}^{\mathcal{X}}$ is the state-dependent reward $r^\pi(x^\prime)\coloneqq \sum_{a^\prime}\pi(a^\prime|x^\prime)r(x^\prime,a^\prime) $ and $P^\pi\in\mathbb{R}^{\mathcal{X} \times \mathcal{X}}$ is the transition matrix under $\pi$. Here, $\delta_x\in\mathbb{R}^{\mathcal{X}}$ encodes the one-hot distribution (Dirac) at $x$. Similar results hold for considering the LP objective $v^T V$ with a general distribution $v\in\mathcal{P}(\mathcal{X})$. It then follows that the optimal solution to the above LP is $V^\ast=V_\gamma^\pi$. Now, consider the dual LP to Eqn~\eqref{eq:primal-lp}, let $d\in\mathbb{R}^{\mathcal{X}}$ be the dual variables,
\begin{align}
            \min \  (1-\gamma)^{-1} (r^\pi)^T d, \ \ 
            d=(1-\gamma)\delta_x + \gamma (P^\pi)^T d.
            \label{eq:dual-lp}
\end{align}

The optimal solution to the dual program has a natural probabilistic interpretation. It is the discounted visitation distribution $d_{x,\gamma}^\pi$ under policy $\pi$ with starting state $x$ as $d_{x,\gamma}^\pi(x^\prime)\coloneqq (1-\gamma) \sum_{t\geq 0} \gamma^t P_\pi(x_t=x^\prime|x_0=x)$ where $P_\pi(x_t=x^\prime|x_0=x)$ is a probability measure induced by the policy $\pi$ and the MDP transition kernel. By strong duality, the value function can be equivalently written as
\begin{align}
    V_\gamma^\pi(x) = \frac{1}{1-\gamma}\mathbb{E}_{x^\prime\sim d_{x,\gamma}^\pi,a^\prime\sim\pi(\cdot|x^\prime)}\left\lbrack r(x^\prime,a^\prime) \right\rbrack.
    \label{eq:visitation-value}
\end{align}  

\section{Taylor Expansions of Value Functions}
\label{sec:taylor}

Below, we show how to estimate $V_{\gamma^\prime}^\pi(x)$ with approximations constructed from value functions $V_\gamma^\pi(x)$ for $\gamma<\gamma'$. Unless otherwise stated, we always assume $\gamma'<1$ for a more convenient mathematical treatment of the problem.

\subsection{Taylor expansions of discount factors}
\label{sec:primal}
We start with some notations: we abuse the notation of value functions $V_\gamma^\pi \in \mathbb{R}^{\mathcal{X}}$ to both refer to the scalar function as well as a vector. The Bellman equation for the value-function is expressed in the matrix form \citep{puterman2014markov}
\begin{align}
    V_{\gamma^\prime}^\pi = r^\pi + \gamma^\prime P^\pi V_{\gamma^\prime}^\pi.
    \label{eq:primal-bellman}
\end{align}
Inverting the equation, 
\begin{align}
    V_{\gamma^\prime}^\pi = (I-\gamma^\prime P^\pi)^{-1} r^\pi.\label{eq:primal-start-expansion}
\end{align}
Now, we present the main result of Taylor expansions.
\begin{restatable}{proposition}{propstaylor}\label{prop:taylor-expansion}
The following holds for all $K\geq0$,
\begin{align}
    V_{\gamma^\prime}^\pi &= \sum_{k=0}^K \left((\gamma^\prime-\gamma) (I-\gamma P^\pi)^{-1} P^\pi \right)^k V_\gamma^\pi \nonumber \\
    &+ \underbrace{\left((\gamma^\prime-\gamma) (I-\gamma P^\pi)^{-1} P^\pi \right)^{K+1} V_{\gamma^\prime}^\pi}_{\text{residual}}. \label{eq:fundamental-taylor}
\end{align}
When $\gamma<\gamma^\prime<1$, the residual norm converges to $0$, which implies 
\begin{align}
    V_{\gamma^\prime}^\pi = \sum_{k= 0}^\infty \left((\gamma^\prime-\gamma) (I-\gamma P^\pi)^{-1} P^\pi \right)^k V_\gamma^\pi. \label{eq:primal-expansion}
\end{align}
\end{restatable}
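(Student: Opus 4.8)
The plan is to derive a one-step fixed-point identity relating $V_{\gamma^\prime}^\pi$ to $V_\gamma^\pi$, iterate it, and then control the resulting residual. Starting from the matrix Bellman equation~\eqref{eq:primal-bellman}, I would split the discount as $\gamma^\prime P^\pi = \gamma P^\pi + (\gamma^\prime-\gamma)P^\pi$ to get $(I-\gamma P^\pi)V_{\gamma^\prime}^\pi = r^\pi + (\gamma^\prime-\gamma)P^\pi V_{\gamma^\prime}^\pi$. Since $\gamma<1$, the matrix $I-\gamma P^\pi$ is invertible with inverse given by the Neumann series $\sum_{t\ge 0}\gamma^t (P^\pi)^t$, so multiplying through and using~\eqref{eq:primal-start-expansion} yields
\begin{align*}
    V_{\gamma^\prime}^\pi = V_\gamma^\pi + B_\pi V_{\gamma^\prime}^\pi, \qquad B_\pi \coloneqq (\gamma^\prime-\gamma)(I-\gamma P^\pi)^{-1}P^\pi .
\end{align*}

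Next I would establish~\eqref{eq:fundamental-taylor} by induction on $K$. The case $K=0$ is exactly the identity above. For the inductive step, assuming~\eqref{eq:fundamental-taylor} holds for $K$, I substitute the displayed identity into the residual term, $B_\pi^{K+1} V_{\gamma^\prime}^\pi = B_\pi^{K+1}\bigl(V_\gamma^\pi + B_\pi V_{\gamma^\prime}^\pi\bigr) = B_\pi^{K+1} V_\gamma^\pi + B_\pi^{K+2} V_{\gamma^\prime}^\pi$, which reassembles into the formula for $K+1$. This step is purely mechanical.

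For the convergence claim I would bound $B_\pi$ in the $\ell_\infty$ operator norm. Because $P^\pi$ is row-stochastic, $\|(P^\pi)^j\|_\infty = 1$ for every $j\ge 0$, hence $\|(I-\gamma P^\pi)^{-1}P^\pi\|_\infty \le \sum_{t\ge 0}\gamma^t = (1-\gamma)^{-1}$ and therefore $\|B_\pi\|_\infty \le (\gamma^\prime-\gamma)/(1-\gamma)$. The hypothesis $\gamma^\prime<1$ is exactly what makes this ratio strictly less than $1$, since then $\gamma^\prime-\gamma < 1-\gamma$. Thus $\|B_\pi^{K+1} V_{\gamma^\prime}^\pi\|_\infty \le \|B_\pi\|_\infty^{K+1}\,\|V_{\gamma^\prime}^\pi\|_\infty \to 0$ as $K\to\infty$ (and $\|V_{\gamma^\prime}^\pi\|_\infty \le R_{\text{max}}/(1-\gamma^\prime)<\infty$), so letting $K\to\infty$ in~\eqref{eq:fundamental-taylor} gives~\eqref{eq:primal-expansion}; the same bound shows the series on the right-hand side converges absolutely.

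The only genuinely delicate point is this last step: one must choose a norm in which $P^\pi$ is nonexpansive — the sup norm works precisely because $P^\pi$ is stochastic — and observe that the contraction factor $(\gamma^\prime-\gamma)/(1-\gamma)$ drops below $1$ exactly under the assumption $\gamma^\prime<1$. If instead $\gamma^\prime=1$ this argument breaks down, consistent with the fact that $V_1^\pi$ need not even be finite. Everything else reduces to algebraic manipulation of the Bellman equation together with a routine induction.
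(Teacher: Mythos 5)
Your proof is correct and follows essentially the same route as the paper's: the same one-step recursion $V_{\gamma^\prime}^\pi = V_\gamma^\pi + B_\pi V_{\gamma^\prime}^\pi$ with $B_\pi = (\gamma^\prime-\gamma)(I-\gamma P^\pi)^{-1}P^\pi$, iterated by induction and closed with the bound $\lVert B_\pi\rVert_\infty \le (\gamma^\prime-\gamma)/(1-\gamma) < 1$ and $\lVert V_{\gamma^\prime}^\pi\rVert_\infty \le R_{\text{max}}/(1-\gamma^\prime)$. The only cosmetic difference is that you derive the recursion directly from the Bellman equation by splitting $\gamma^\prime P^\pi = \gamma P^\pi + (\gamma^\prime-\gamma)P^\pi$, whereas the paper packages the same splitting as an application of the Woodbury matrix identity to $(I-\gamma^\prime P^\pi)^{-1}$.
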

We provide a proof sketch here: Note that $\gamma^\prime P^\pi = (\gamma^\prime-\gamma)P^\pi + \gamma P^\pi$ and apply the Woodbury matrix identity to obtain $(I-{\gamma^\prime}P^\pi)^{-1}=(I-\gamma P^\pi)^{-1}+(\gamma^\prime-\gamma)(I-\gamma P^\pi)^{-1}P^\pi (I-\gamma^\prime P^\pi)^{-1}$. We can then recursively expand Eqn~\eqref{eq:primal-start-expansion} $K$ times to arrive at Eqn~\eqref{eq:fundamental-taylor}. In particular, by expanding the equation once, we see that $(I-\gamma' P^\pi)^{-1}$ is equivalent to the following,
\begin{align*}
    & (I-\gamma P^\pi)^{-1} +  (\gamma'-\gamma)(I-\gamma P^\pi)^{-1} P^\pi (I-\gamma P^\pi)^{-1}  \\
    &+ (\gamma'-\gamma)^2 \left((I-\gamma P^\pi)^{-1} P^\pi \right)^2 \underbrace{ (I-\gamma' P^\pi)^{-1} }_{\text{can be expanded further}},
\end{align*}
where the last term can be expanded further by plugging in the Woodbury matrix identity. See the complete proof in Appendix~\ref{appendix:proof}.

\paragraph{Extensions to $\gamma'=1$.} The above result can extend to the case $\gamma'=1$.  We make two assumptions: \textbf{A.1} The Markov chain induced by $\pi$ is absorbing and $T$ is the absorption time; \textbf{A.2}  $r^\pi(x)=0$ for absorbing states $x$. Under these assumptions, we can interpret such absorbing states as the terminal states. As a result,  $V_{\gamma'=1}^\pi(x)=\mathbb{E}_\pi\left\lbrack \sum_{t=0}^{T} r_t \; \middle| \; x_0=x \right\rbrack$ is well-defined and  Proposition~\ref{prop:taylor-expansion} still holds; see Appendix~\ref{appendix:proof} for the complete proof. 

In practice, it is infeasible to sum up all infinite number of terms in the Taylor expansion. It is then of interest to consider the $K$\textsuperscript{th}-order expansion of $V_{\gamma^\prime}^\pi$, which truncates the infinite series. Specifically, we define the $K$\textsuperscript{th}-order expansion as 
\begin{align}
    V_{K,\gamma,\gamma^\prime}^\pi \coloneqq  \sum_{k=0}^K ((\gamma^\prime-\gamma) (I-\gamma P^\pi)^{-1} P^\pi )^k V_\gamma^\pi \, .
    \label{eq:kth-order-primal}
\end{align}

As $K$ increases, the $K$\textsuperscript{th} order expansion becomes increasingly close to the infinite series, which evaluates to $V_{\gamma^\prime}^\pi(x)$. This is formalized next. 
\begin{restatable}{proposition}{properrork}\label{prop:kth-expansion}
The following bound holds for all $K\geq0$,
\begin{align}
    \left|V_{\gamma^\prime}^\pi(x) - V_{K,\gamma,\gamma^\prime}^\pi(x) \right| \leq \left(\frac{\gamma^\prime-\gamma}{1-\gamma}\right)^{K+1}\frac{R_\text{max}}{1-\gamma^\prime}.
\end{align}
\end{restatable}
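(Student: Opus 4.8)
The plan is to read the bound straight off the exact residual identity already established in Proposition~\ref{prop:taylor-expansion}. Retaining only the first $K+1$ terms of the series in Equation~\eqref{eq:fundamental-taylor} gives
\[
  V_{\gamma'}^\pi - V_{K,\gamma,\gamma'}^\pi = \left((\gamma'-\gamma)\,(I-\gamma P^\pi)^{-1}P^\pi\right)^{K+1} V_{\gamma'}^\pi ,
\]
so it suffices to bound the $\ell_\infty$-norm of the right-hand side and then evaluate the resulting inequality at the coordinate $x$.

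Next I would collect three elementary estimates, all in the operator norm $\|\cdot\|_\infty$ induced by the sup-norm on $\mathbb{R}^{\mathcal{X}}$. (i) Since $P^\pi$ is row-stochastic with nonnegative entries, $\|P^\pi\|_\infty = 1$; more generally, for a nonnegative matrix $M$ the $\ell_\infty$ operator norm equals its largest row sum, i.e. $\|M\mathbf{1}\|_\infty$. (ii) Because $\gamma < 1$, the Neumann series $(I-\gamma P^\pi)^{-1} = \sum_{t\ge 0}\gamma^t (P^\pi)^t$ converges, so $(I-\gamma P^\pi)^{-1}P^\pi = \sum_{t\ge 0}\gamma^t (P^\pi)^{t+1}$ is a nonnegative matrix with $(I-\gamma P^\pi)^{-1}P^\pi\,\mathbf{1} = (1-\gamma)^{-1}\mathbf{1}$, hence $\|(I-\gamma P^\pi)^{-1}P^\pi\|_\infty = (1-\gamma)^{-1}$. (iii) Since $r(x,a)\in[0,R_\text{max}]$, the value function obeys $\|V_{\gamma'}^\pi\|_\infty \le R_\text{max}/(1-\gamma')$.

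Combining these by submultiplicativity of the operator norm and factoring out the scalar $(\gamma'-\gamma)^{K+1}$ then yields
\[
  \bigl|V_{\gamma'}^\pi(x) - V_{K,\gamma,\gamma'}^\pi(x)\bigr|
  \le (\gamma'-\gamma)^{K+1}\,\bigl\|(I-\gamma P^\pi)^{-1}P^\pi\bigr\|_\infty^{\,K+1}\,\bigl\|V_{\gamma'}^\pi\bigr\|_\infty
  \le \left(\frac{\gamma'-\gamma}{1-\gamma}\right)^{K+1}\frac{R_\text{max}}{1-\gamma'},
\]
which is exactly the claimed bound.

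There is no genuinely hard step; the two things to be careful about are (a) working in the sup-norm, in which $P^\pi$ has operator norm exactly $1$ and the Neumann-series estimate $\|(I-\gamma P^\pi)^{-1}P^\pi\|_\infty = (1-\gamma)^{-1}$ is tight, whereas a spectral or Euclidean norm would leave $\|P^\pi\|$ uncontrolled in general and weaken the bound; and (b) justifying the term-by-term norm estimate on the infinite Neumann series, which follows by applying the triangle inequality to partial sums and using convergence of $\sum_t \gamma^t$. The bound is meaningful only for $\gamma' < 1$, consistent with the standing assumption of this section, since the right-hand side diverges as $\gamma'\to 1$; the residual itself still vanishes in that limit under A.1--A.2, but obtaining a useful rate there would require exploiting the absorbing structure rather than the crude norm bound used here.
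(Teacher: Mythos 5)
Your proof is correct and follows essentially the same route as the paper: both read the error off the exact residual term $\left((\gamma'-\gamma)(I-\gamma P^\pi)^{-1}P^\pi\right)^{K+1}V_{\gamma'}^\pi$ from Proposition~\ref{prop:taylor-expansion} and bound it in the sup-norm via $\|(I-\gamma P^\pi)^{-1}P^\pi\|_\infty=(1-\gamma)^{-1}$ and $\|V_{\gamma'}^\pi\|_\infty\le R_\text{max}/(1-\gamma')$. Your version is in fact slightly more careful than the paper's, which asserts $\|P^\pi\|_\infty<1$ where the correct statement for a row-stochastic matrix is $\|P^\pi\|_\infty=1$ (the final non-strict bound is unaffected).
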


\subsection{Sample-based approximations of Taylor expansions}

We now describe how to estimate $V_{K,\gamma,\gamma^\prime}^\pi(x)$ via samples. First, we build some intuition on the behavior of expansions at different orders $K$ by considering a few special cases.

\paragraph{Zeroth-order expansion.} By setting $K=0$, we see that 
\begin{align}
    V_{0,\gamma,\gamma^\prime}^\pi = V_\gamma^\pi. \label{eq:zeroth-order-primal}
\end{align}
The zeroth order expansion approximates the value function $V_{\gamma^\prime}^\pi(x)$ of the discount factor $\gamma^\prime$ with that $V_\gamma^\pi(x)$ of a lower discount factor $\gamma<\gamma^\prime$. This is a very straightforward approximation to use in that no sampling at all is required, but it may not be accurate.

\paragraph{First-order expansion.} When $K=1$, we consider the increments of the expansions,
\begin{align}
    V_{1,\gamma,\gamma^\prime}^\pi - V_{0,\gamma,\gamma^\prime}^\pi = (\gamma^\prime-\gamma) (I-\gamma P^\pi)^{-1} P^\pi V_\gamma^\pi. \label{eq:primal-first-order}
\end{align}

To understand the first order expansion, recall that in the definition of value function $V_\gamma^\pi=(I-\gamma P^\pi)^{-1}r^\pi$, immediate rewards $r^\pi$ are \emph{accumulated} via the matrix $(I-\gamma P^\pi)^{-1}$. In general, for any $X,Y\in\mathbb{R}^{\mathcal{X}}$, we can interpret $X=(I-\gamma P^\pi))^{-1}Y$ as accumulating $Y$ as rewards to compute $X$ as value functions. By analogy, we can interpret the RHS of Eqn~\eqref{eq:primal-first-order} as the value function assuming $(\gamma'-\gamma)P^\pi V_\gamma^\pi$ as immediate rewards. In other words, the first order expansion bootstraps the zeroth order expansion $V_\gamma^\pi$ to form a more accurate approximation. Combined with the zeroth order expansion, we can also conveniently write the difference of first- and zeroth-order expansions as an expectation $V_{1,\gamma,\gamma'}^\pi(x) - V_{0,\gamma,\gamma'}(x) = (\gamma'-\gamma) \mathbb{E}_{\pi}\left\lbrack \sum_{t=1}^\infty \gamma^{t-1} V_\gamma^\pi(x_{t}) \; \middle| \; x_0=x \right\rbrack$. Let $\tau\sim\text{Geometric}(1-\gamma)$ be a random time such that $P(\tau=t)=(1-\gamma)\gamma^{t},\forall t\in\mathbb{Z}_{\geq 1}$. The difference can also be expressed via this random time
\begin{align*}
    V_{1,\gamma,\gamma'}^\pi(x) - V_{0,\gamma,\gamma'}(x)  = \frac{\gamma'-\gamma}{1-\gamma} \mathbb{E}_{\pi,\tau}\left\lbrack V_\gamma^\pi(x_{\tau}) \right\rbrack.
\end{align*}
Note that from this expression, we obtain a simple unbiased estimate for $V_{1,\gamma,\gamma'}^\pi(x) - V_{0,\gamma,\gamma'}(x)$, using a sampled trajectory and a random time step $\tau$.

\paragraph{General $K$\textsuperscript{th}-order expansion.}

We now present results for general $K$. Consider the incremental term,
\begin{align}
    V_{K,\gamma,\gamma^\prime}^\pi - V_{K-1,\gamma,\gamma^\prime}^\pi = (\gamma^\prime-\gamma)^K \left( (I-\gamma P^\pi)^{-1} P^\pi \right)^K V_\gamma^\pi. \label{eq:primal-k-order}
\end{align}
Note that the aggregate matrix $\left( (I-\gamma P^\pi)^{-1} P^\pi \right)^K$ suggests a recursive procedure to bootstrap from lower order expansions to construct higher order expansions. To see why, we can rewrite the right-hand side of Eqn~\eqref{eq:primal-k-order} as
\begin{align*}
     (\gamma'-\gamma) (I-\gamma P^\pi)^{-1} P^\pi \left( V_{K-1,\gamma,\gamma^\prime}^\pi - V_{K-2,\gamma,\gamma^\prime}^\pi \right). 
\end{align*}
Indeed, we can interpret the difference $ V_{K,\gamma,\gamma^\prime}^\pi - V_{K-1,\gamma,\gamma^\prime}^\pi$ as the value function under the immediate reward  $(\gamma'-\gamma) P^\pi\left( V_{K-1,\gamma,\gamma^\prime}^\pi - V_{K-2,\gamma,\gamma^\prime}^\pi\right)$. This generalizes the bootstrap procedure of the first order expansion as a special case where we naturally assume $V_{-1,\gamma,\gamma'}^\pi=0$. Given $K$ i.i.d.\,random times $\tau_i\sim \text{Geometric}(1-\gamma)$, we can write $V_{K,\gamma,\gamma^\prime}^\pi(x) - V_{K-1,\gamma,\gamma^\prime}^\pi(x)$ as the expectation
\begin{align*}
    \left(\frac{\gamma'-\gamma}{1-\gamma}\right)^K \mathbb{E}_{\tau_i,1\leq i\leq K}\left\lbrack V_\gamma^\pi\left(x_{\tau_1+\dots+\tau_K}\right) \right\rbrack\,.
\end{align*} 
Based on the above expression, Algorithm~\ref{algo:estimate} provides a subroutine that generates unbiased estimates of $V_{K,\gamma,\gamma^\prime}^\pi(x)$ by sub-sampling an infinite trajectory $(x_t,a_t,r_t)_{t=0}^\infty$ with the random times.

\paragraph{Practical implementations.} While the above and Algorithm 1 show how to compute one-sample estimates, in practice, we might want to average multiple samples along a single trajectory for variance reduction. See Appendix~\ref{appendix:exp} for further details on the practical estimates.

\begin{algorithm}[h]
\begin{algorithmic}
\REQUIRE A trajectory $(x_t,a_t,r_t)_{t=0}^\infty\sim\pi$ and discount factors $\gamma<\gamma^\prime<1$\\
\STATE 1. Compute an unbiased estimate $\hat{V}_\gamma^\pi(x_t)$ for states along the trajectory, e.g., $\hat{V}_\gamma^\pi(x_t)=\sum_{t^\prime\geq t} \gamma^{t^\prime-t} r_{t^\prime}$.
\STATE 2. Sample $K$ random time $\{\tau_i\}_{1\leq i\leq K}$, all \vfill i.i.d.\,geometrically distributed $\tau_i\sim \text{Geometric}(1-\gamma)$.
\STATE 3. Return the unbiased estimate $ \sum_{k=0}^K \left(\frac{\gamma'-\gamma}{1-\gamma}\right)^k  \hat{V}_\gamma^\pi(x_{t_k})$ where $t_k=\sum_{i=1}^k \tau_i$.
\caption{Estimating the $K$\textsuperscript{th} order expansion}
\label{algo:estimate}
\end{algorithmic}
\end{algorithm}

\paragraph{Interpretation of expansions in the dual space.}
Recall that $V_{\gamma^\prime}^\pi = (I-\gamma^\prime P^\pi)^{-1}r^\pi = I (I-\gamma^\prime P^\pi)^{-1}r^\pi $ where the identity matrix $I=[\delta_0,\delta_1,...\delta_{\mathcal{X}}]$ concatenates Dirac delta vectors $\delta_x,\forall x\in\mathcal{X}$. Since $r^\pi$ is a constant vector, Taylor expansions essentially construct approximations to the matrix $(I-\gamma^\prime P^\pi)^{-1}$. By grouping the matrix with the reward vector (or the density matrix), we arrive at the primal expansion (or the dual expansion),
\begin{align*}
     I\underbrace{ (I-\gamma^\prime P^\pi)^{-1}r^\pi}_{\text{primal\ expansions\ of}\ V_{\gamma'}^\pi(x)} = \underbrace{I (I-\gamma^\prime P^\pi)^{-1}}_{\text{dual\ expansions\ of}\ d_{x,\gamma'}^\pi}r^\pi
\end{align*}
The derivations above focus on the primal expansion view. We show a parallel theory of dual expansion in Appendix~\ref{appendix:dual}. The equivalence of primal-dual view of Taylor expansions suggests connections with seemingly disparate lines of prior work: \citet{janner2020gamma} propose a density model for visitation distribution of different $\gamma$ in the context of model-based RL. They show that predictions of large discount factors could be bootstrapped from predictions of small discount factors. This corresponds exactly to the dual space expansions, which is equivalent to the primal space expansions.

\paragraph{Extensions to Q-functions.} In Appendix~\ref{appendix:gamma}, we show that it is possible to build approximations to $Q_{\gamma'}^\pi$ using $Q_\gamma^\pi$ as building blocks. The theoretical guarantees and estimation procedures are similar to the case of value functions.

\subsection{Approximation errors with finite samples}
Proposition~\ref{prop:kth-expansion} shows that the \emph{expected} approximation error decays as   $\left| V_{K,\gamma,\gamma^\prime}^\pi(x)-V_{\gamma'}^\pi(x)\right|=O\left(\left(\frac{\gamma^\prime-\gamma}{1-\gamma}\right)^{K+1} \right)$ for $\gamma<\gamma'<1$. This motivates using a high value of $K$ when constructing the approximation. However, in practice, all constituent terms in the $K$\textsuperscript{th} order expansion are random estimates, each with a non-zero variance. This might lead the variance of the overall estimate to increase as $K$ increases. As a result, $K$ mediates a trade-off between bias (expected approximation error) and variance. We formalize such intuitions in Appendix~\ref{appendix:bound}, where we theoretically analyze the trade-off using the phased TD-learning framework \citep{kearns2000bias}.

\paragraph{A numerical example.}
To get direct intuition about the effect of $K$, we focus on a tabular MDP example. The MDP has $|\mathcal{X}|=10$ states and $|\mathcal{A}|=2$ actions. All entries of the transition table $p(y|x,a)$ are generated from a Dirichlet distribution with parameters $(\alpha,\ldots,\alpha)$ with $\alpha=0.01$. The policy $\pi(a|x)$ is uniformly random. We take $\gamma=0.2$ and $\gamma'=0.8$. The agent generates $N=10$ trajectories $(x_t,a_t,r_t)_{t=0}^T$ with a very large horizon $T$ with a fixed starting state $x_0$. We assume access to base estimates $\hat{V}_\gamma^\pi(x_t)$ and the Taylor expansion estimates $\hat{V}_{K,\gamma,\gamma'}^\pi(x_0)$ are computed based on Algorithm~\ref{algo:estimate}. We estimate the relative error as $\hat{E}_K(x_0)=\left| V_{\gamma'}^\pi(x_0) - \hat{V}_{K,\gamma,\gamma'}^\pi(x_0)\right|$. For further experiment details, see Appendix~\ref{appendix:exp}. 

In Figure~\ref{fig:errors}(a), we show how errors vary as a function of $K$. We study two settings: \textbf{(1)} Expected estimates (red), where $\hat{V}_{K,\gamma,\gamma'}^\pi(x_0)$ is computed analytically through access to transition tables. In this case, similar to how the theory suggests, the error decays exponentially; \textbf{(2)} Sample-based estimates (blue) with base estimates $\hat{V}_\gamma^\pi(x_t)=\sum_{s=0}^\infty \gamma^s r_{t+s}$. The errors decay initially with $K$ but later start to increase a bit as $K$ gets large. The optimal $K$ in the middle achieves the best bias-variance trade-off. Note that in this particular example, the estimates do not pay a very big price in variance for large $K$. We speculate this is because increments to the estimates are proportional to $\left(\frac{\gamma'-\gamma}{1-\gamma}\right)^{K+1}$, which scales down additional variance terms quickly as $K$ increases.

In Figure~\ref{fig:errors}(b), we study how the optimal expansion order $K^\ast$ depends on the noise level of base estimates. To emulate the noise, we assume access to base  estimates $\hat{V}_\gamma^\pi(x_t)=V_\gamma^\pi(x_t)+\mathcal{N}(0,\sigma^2)$ for some noise level $\sigma$. The optimal order $K^\ast$ is computed as $K^\ast=\arg\min_k \hat{E}_k(x_0)$. In general, we observe that when $\sigma$ increases, $K^\ast$ decreases. Intuitively, this implies that as the base estimates $\hat{V}_\gamma^\pi(x)$ become noisy, we should prefer smaller value of $K$ to control the variance. This result bears some insights for practical applications such as downstream policy optimization, where we need to select an optimal $K$ for the tasks at hand.

\begin{figure}
    \centering
     \subfigure[Trade-off of $K$]{\includegraphics[keepaspectratio,width=.23\textwidth]{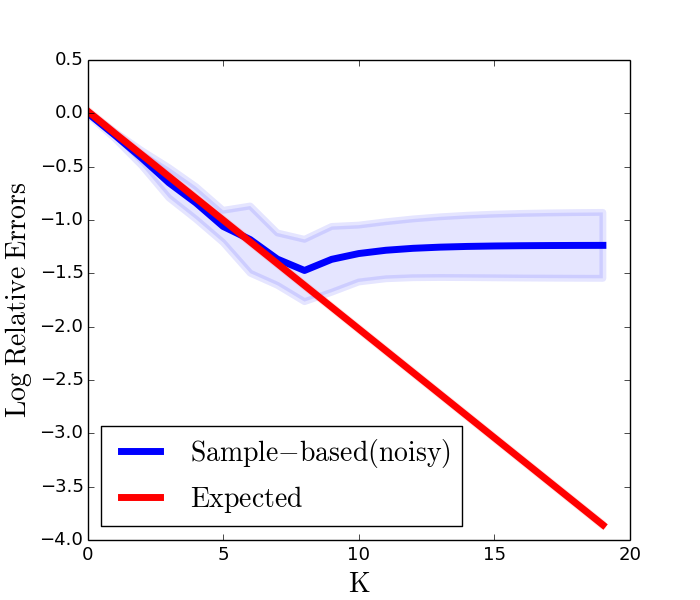}}
     \subfigure[Optimal $K$]{\includegraphics[keepaspectratio,width=.23\textwidth]{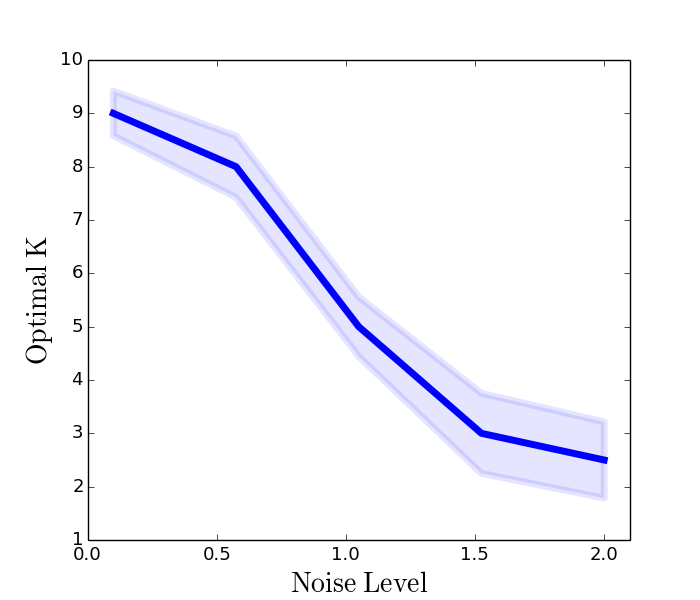}}
    \caption{Comparison of Taylor expansions with different orders. The x-axis shows the order $K$, the y-axis shows the log relative errors of the approximations. The blue curve shows the exact computations while the red curve shows the sample based estimations. See Appendix~\ref{appendix:exp} for more details.}
    \label{fig:errors}
\end{figure}

\section{Taylor Expansions of Gradient Updates}
\label{sec:opt}

In Section~\ref{sec:taylor}, we discussed how to construct approximations to $V_{\gamma'}^\pi(x)$. For the purpose of policy optimization, it is of direct interest to study approximations to $ \nabla_\theta V_{\gamma^\prime}^{\pi_\theta}(x)$. As stated in Section~\ref{sec:intro}, a major premise of our work is that in many practical contexts, estimating discounted values under $\gamma'\approx 1$ is difficult. As a result, directly evaluating the full gradient  $ \nabla_\theta V_{\gamma^\prime}^{\pi_\theta}(x)$ is challenging, because it requires estimating Q-functions $Q_{\gamma'}^{\pi_\theta}(x,a)$. Below, we start by showing how the decomposition of $\nabla_\theta V_{\gamma^\prime}^{\pi_\theta}(x)$ motivates a particular form of gradient update, which is generally considered a deep RL heuristic. Then we construct approximations to this update based on Taylor expansions.

\subsection{$V_{\gamma'}^\pi$ as a weighted mixture of $V_\gamma^\pi$}

We can explicitly rewrite $V_{\gamma'}^\pi(x)$ as a weighted mixture of value functions $V_\gamma^\pi(x'),x'\in\mathcal{X}$. This result was alluded to in \citep{romoff2019separating} and formally shown below.
\begin{restatable}{lemma}{lemmamixture}\label{lemma:value-mixture}
Assume $\gamma<\gamma'<1$. We can write
$
    V_{\gamma'}^\pi(x) = (\rho_{x,\gamma,\gamma'}^\pi)^T V_\gamma^\pi
$, where the weight vector $\rho_{x,\gamma,\gamma'}^\pi\in\mathbb{R}^{\mathcal{X}}$ is
\begin{align*}
    \left(I-\gamma (P^\pi)^T\right) \left(I-\gamma' (P^\pi)^T\right)^{-1}\delta_x.
\end{align*} 
Also we can rewrite $V_{\gamma'}^\pi(x)$,  using an expectation, as:
\begin{align}
     V_\gamma^\pi(x) + \mathbb{E}_\pi\left\lbrack\sum_{t=1}^\infty (\gamma^\prime-\gamma) (\gamma^\prime)^{t-1} V_\gamma^\pi(x_t)  \; \middle| \; x_0=x  \right\rbrack.
    \label{eq:undiscounted-rewards}
\end{align}
When $\gamma'=1$, $\rho_{x,\gamma,\gamma'}^\pi$ might be undefined. However, Eqn~\eqref{eq:undiscounted-rewards} still holds if assumptions \textbf{A.1} and \textbf{A.2} are satisfied. 
\end{restatable}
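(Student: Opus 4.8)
The plan is to start from the matrix Bellman identity for $V_{\gamma'}^\pi$ and rewrite it so that $V_\gamma^\pi$ appears explicitly, then read off the weight vector. First I would recall that $V_{\gamma'}^\pi = (I-\gamma' P^\pi)^{-1} r^\pi$ and $r^\pi = (I-\gamma P^\pi) V_\gamma^\pi$ from Eqn~\eqref{eq:primal-bellman} applied at discount $\gamma$. Substituting the second into the first gives $V_{\gamma'}^\pi = (I-\gamma' P^\pi)^{-1}(I-\gamma P^\pi) V_\gamma^\pi$. Therefore $V_{\gamma'}^\pi(x) = \delta_x^T (I-\gamma' P^\pi)^{-1}(I-\gamma P^\pi) V_\gamma^\pi$, and transposing the scalar $\delta_x^T M V_\gamma^\pi = ((M^T \delta_x)^T) V_\gamma^\pi$ identifies the weight vector as $\rho_{x,\gamma,\gamma'}^\pi = (I-\gamma (P^\pi)^T)(I-\gamma'(P^\pi)^T)^{-1}\delta_x$, exactly as claimed. (One should note the two commuting factors $(I-\gamma' P^\pi)^{-1}$ and $(I-\gamma P^\pi)$ can be written in either order since both are polynomials/rational functions of the single matrix $P^\pi$; this lets the transpose land cleanly.)

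For the expectation form in Eqn~\eqref{eq:undiscounted-rewards}, I would expand the resolvent as a Neumann series, which is valid because $\gamma'<1$ and $P^\pi$ is a stochastic matrix, so $\|\gamma' P^\pi\|_\infty = \gamma' < 1$. Writing $(I-\gamma' P^\pi)^{-1}(I-\gamma P^\pi) = (I-\gamma' P^\pi)^{-1}\big((I-\gamma' P^\pi) + (\gamma'-\gamma)P^\pi\big) = I + (\gamma'-\gamma)(I-\gamma' P^\pi)^{-1}P^\pi$, and then $(I-\gamma' P^\pi)^{-1}P^\pi = \sum_{t\ge 1}(\gamma')^{t-1}(P^\pi)^t$. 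Applying this operator to $V_\gamma^\pi$ and reading the $(P^\pi)^t$ action as the conditional expectation $\mathbb{E}_\pi[V_\gamma^\pi(x_t)\mid x_0=x]$ yields precisely $V_\gamma^\pi(x) + \mathbb{E}_\pi\big[\sum_{t\ge 1}(\gamma'-\gamma)(\gamma')^{t-1} V_\gamma^\pi(x_t)\,\big|\,x_0=x\big]$. Convergence of the series and the interchange with the expectation is justified by $|V_\gamma^\pi|\le R_{\max}/(1-\gamma)$ bounded and the geometric factor $(\gamma')^{t-1}$ summable.

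For the extension to $\gamma'=1$ under \textbf{A.1} and \textbf{A.2}, I would not use the resolvent $(I-(P^\pi)^T)^{-1}$ (which need not exist, hence the caveat in the statement), but instead argue directly on the expectation form: with $\gamma'=1$ the series becomes $V_\gamma^\pi(x) + \mathbb{E}_\pi[\sum_{t\ge 1}(1-\gamma)V_\gamma^\pi(x_t)\mid x_0=x]$, and since the chain is absorbing with absorption time $T$ and $V_\gamma^\pi$ vanishes on absorbing states (as $r^\pi=0$ there), the sum is effectively over $t=1,\dots,T$ and is finite almost surely with finite expectation (absorbing chains have $\mathbb{E}[T]<\infty$). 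I would then verify the identity either by a telescoping argument — note $(1-\gamma)\sum_{t\ge 0}V_\gamma^\pi(x_t) = \sum_{t\ge 0}\big(V_\gamma^\pi(x_t) - \gamma V_\gamma^\pi(x_t)\big)$ and use $\gamma V_\gamma^\pi(x_t) = \mathbb{E}_\pi[V_\gamma^\pi(x_{t+1}) - r(x_t,a_t) + \text{(bootstrap terms)}]$ from the Bellman equation to collapse the sum to $\mathbb{E}_\pi[\sum_{t=0}^T r_t\mid x_0=x] = V_1^\pi(x)$ — or simply by invoking the $\gamma'\to 1$ limit of the already-established finite-$\gamma'$ identity together with dominated convergence, which is the cleaner route. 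The main obstacle is this last step: being careful that all the infinite sums and expectation interchanges are legitimate at the boundary $\gamma'=1$, where one genuinely needs the absorbing-chain assumptions rather than a spectral-radius bound; everything before that is essentially the Woodbury/Neumann-series bookkeeping already rehearsed in the proof sketch of Proposition~\ref{prop:taylor-expansion}.
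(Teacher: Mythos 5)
Your proof is correct and rests on the same core identity as the paper's: $V_{\gamma'}^\pi=(I-\gamma'P^\pi)^{-1}(I-\gamma P^\pi)V_\gamma^\pi$, obtained from $r^\pi=(I-\gamma P^\pi)V_\gamma^\pi$, together with the commutativity of $(I-\gamma P^\pi)$ and $(I-\gamma'P^\pi)^{-1}$, from which $\rho_{x,\gamma,\gamma'}^\pi$ is read off by transposition. Two differences are worth flagging. First, for the expectation form you reach $(I-\gamma'P^\pi)^{-1}(I-\gamma P^\pi)=I+(\gamma'-\gamma)(I-\gamma'P^\pi)^{-1}P^\pi$ by a one-line rearrangement and then expand the single resolvent $(I-\gamma'P^\pi)^{-1}P^\pi=\sum_{t\geq1}(\gamma')^{t-1}(P^\pi)^t$; the paper instead writes out the full Woodbury/Taylor infinite series from Proposition~\ref{prop:taylor-expansion} and regroups its terms to arrive at the same intermediate identity. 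Your route is more direct and sidesteps the series regrouping, at no loss of rigor. Second, the paper's written proof of this lemma stops after deriving $\rho_{x,\gamma,\gamma'}^\pi$ and never actually argues the $\gamma'=1$ case, whereas you supply an argument. Of your two suggestions, the limiting one is the clean choice: $V_{\gamma'}^\pi(x)\uparrow V_{1}^\pi(x)$ by monotone convergence (rewards are nonnegative and the limit is finite under \textbf{A.1}--\textbf{A.2}), and the right-hand side converges by dominated convergence with dominating function $\tfrac{R_{\max}}{1-\gamma}$ times the number of transient visits, which has finite expectation for an absorbing chain since $V_\gamma^\pi$ vanishes on absorbing states. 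The telescoping alternative, as sketched, is loose about the bootstrap terms and would need tightening before it counts as a proof, so I would commit to the limit argument. Neither difference affects correctness.
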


\subsection{Decomposing the full gradient $\nabla_\theta V_{\gamma'}^{\pi_\theta}(x)$}

Lemma~\ref{lemma:value-mixture} highlights that $V_{\gamma^\prime}^\pi(x)$ depends on $\pi$ in two aspects: \textbf{(1)} the value functions $V_\gamma^\pi(x^\prime),x^\prime\in\mathcal{X}$; \textbf{(2)} the state-dependent distribution $\rho_{x,\gamma,\gamma^\prime}^\pi(x^\prime)$. Let $\pi_\theta$ be a parameterized policy. For conceptual clarity, we can write $V_{\gamma'}^{\pi_\theta}(x) = F( V_\gamma^{\pi_\theta},\rho_{x,\gamma,\gamma'}^{\pi_\theta})$ with a function $F: \mathbb{R}^{\mathcal{X}}\times\mathbb{R}^{\mathcal{X}}\rightarrow \mathbb{R} $. Though this function is essentially the inner product, i.e., $F(V,\rho)=V^T\rho$, notationally, it helps stress that $V_{\gamma'}^{\pi_\theta}(x)$ depends on $\theta$ through two vector arguments. Now, we can decompose $\nabla_\theta V_{\gamma'}^{\pi_\theta}(x)$.
\begin{restatable}{lemma}{lemmagrad}\label{lemma:grad}
The full gradient $\nabla_\theta V_{\gamma'}^{\pi_\theta}(x)$ can be decomposed into the sum of two partial gradients as follows,
\begin{align*}
    & \left(\partial_V F(V,\rho)\right)^T \nabla_\theta V_\gamma^{\pi_\theta} + \left(\partial_\rho F(V,\rho) \right)^T \nabla_\theta \rho_{x,\gamma,\gamma'}^{\pi_\theta} \\
    &=\underbrace{\mathbb{E}\left\lbrack \nabla_\theta V_\gamma^{\pi_\theta}(x^\prime)\right\rbrack}_{\text{first\ partial\ gradient}} + \underbrace{ \mathbb{E}\left\lbrack  V_\gamma^{\pi_\theta}(x^\prime)\nabla_\theta \log \rho_{x,\gamma,\gamma^\prime}^{\pi_\theta}(x^\prime) \right\rbrack}_{\text{second\ partial\ gradient}},
\end{align*}
where the above partial gradients are both evaluated at $V=V_\gamma^{\pi_\theta},\rho=\rho_{x,\gamma,\gamma'}^{\pi_\theta}$ and both expectations are with respect to $x'\sim \rho_{x,\gamma,\gamma'}^{\pi_\theta}$.
\end{restatable}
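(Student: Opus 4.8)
The plan is to differentiate the exact bilinear representation supplied by Lemma~\ref{lemma:value-mixture} via the multivariate chain rule. Throughout I assume $\gamma<\gamma'<1$ (so that $\rho_{x,\gamma,\gamma'}^{\pi_\theta}$, and hence $\log\rho_{x,\gamma,\gamma'}^{\pi_\theta}$, is well defined) and that $\theta\mapsto\pi_\theta$ is differentiable. Then $P^{\pi_\theta}$ and $r^{\pi_\theta}$ have entries differentiable in $\theta$, and since $I-\gamma P^{\pi_\theta}$ and $I-\gamma'(P^{\pi_\theta})^T$ are invertible (the spectral radii of $\gamma P^{\pi_\theta}$ and $\gamma' P^{\pi_\theta}$ are below $1$) with matrix inversion smooth on invertible matrices, both $V_\gamma^{\pi_\theta}=(I-\gamma P^{\pi_\theta})^{-1}r^{\pi_\theta}$ and $\rho_{x,\gamma,\gamma'}^{\pi_\theta}=(I-\gamma(P^{\pi_\theta})^T)(I-\gamma'(P^{\pi_\theta})^T)^{-1}\delta_x$ are differentiable vector-valued functions of $\theta$. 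By Lemma~\ref{lemma:value-mixture} we have the pointwise identity $V_{\gamma'}^{\pi_\theta}(x)=F\big(V_\gamma^{\pi_\theta},\rho_{x,\gamma,\gamma'}^{\pi_\theta}\big)$ with $F(V,\rho)=\rho^T V$, and this is the only route through which $\theta$ enters $V_{\gamma'}^{\pi_\theta}(x)$.

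The core step is then the chain rule through the two vector arguments,
\begin{align*}
\nabla_\theta V_{\gamma'}^{\pi_\theta}(x) = \big(\partial_V F(V,\rho)\big)^T \nabla_\theta V_\gamma^{\pi_\theta} + \big(\partial_\rho F(V,\rho)\big)^T \nabla_\theta \rho_{x,\gamma,\gamma'}^{\pi_\theta},
\end{align*}
with the partials evaluated at $V=V_\gamma^{\pi_\theta}$, $\rho=\rho_{x,\gamma,\gamma'}^{\pi_\theta}$ and $\nabla_\theta V_\gamma^{\pi_\theta},\nabla_\theta\rho_{x,\gamma,\gamma'}^{\pi_\theta}$ the corresponding Jacobians. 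Because $F$ is an inner product, $\partial_V F(V,\rho)=\rho$ and $\partial_\rho F(V,\rho)=V$; substituting the evaluation point gives the first partial gradient $(\rho_{x,\gamma,\gamma'}^{\pi_\theta})^T\nabla_\theta V_\gamma^{\pi_\theta}=\sum_{x'}\rho_{x,\gamma,\gamma'}^{\pi_\theta}(x')\nabla_\theta V_\gamma^{\pi_\theta}(x')=\mathbb{E}[\nabla_\theta V_\gamma^{\pi_\theta}(x')]$ and the second partial gradient $(V_\gamma^{\pi_\theta})^T\nabla_\theta\rho_{x,\gamma,\gamma'}^{\pi_\theta}=\sum_{x'}V_\gamma^{\pi_\theta}(x')\nabla_\theta\rho_{x,\gamma,\gamma'}^{\pi_\theta}(x')$, with expectations over $x'\sim\rho_{x,\gamma,\gamma'}^{\pi_\theta}$.

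To put the second term into the stated form I would apply the score-function identity $\nabla_\theta\rho_{x,\gamma,\gamma'}^{\pi_\theta}(x')=\rho_{x,\gamma,\gamma'}^{\pi_\theta}(x')\nabla_\theta\log\rho_{x,\gamma,\gamma'}^{\pi_\theta}(x')$ on the support of $\rho_{x,\gamma,\gamma'}^{\pi_\theta}$ (for policies with full action support this support is locally independent of $\theta$, so states outside it contribute zero to both sides). Since Eqn~\eqref{eq:undiscounted-rewards} exhibits $\rho_{x,\gamma,\gamma'}^{\pi_\theta}$ as a vector with nonnegative entries (and a strictly positive entry at $x$), this rewrite is clean, yielding $\sum_{x'}V_\gamma^{\pi_\theta}(x')\rho_{x,\gamma,\gamma'}^{\pi_\theta}(x')\nabla_\theta\log\rho_{x,\gamma,\gamma'}^{\pi_\theta}(x')=\mathbb{E}[V_\gamma^{\pi_\theta}(x')\nabla_\theta\log\rho_{x,\gamma,\gamma'}^{\pi_\theta}(x')]$, which together with the first partial gradient is exactly the claimed decomposition.

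The proof is essentially bookkeeping, so I do not anticipate a deep obstacle; the two points that need care are (i) confirming that $V_{\gamma'}^{\pi_\theta}(x)$ factors \emph{exactly} through the pair $(V_\gamma^{\pi_\theta},\rho_{x,\gamma,\gamma'}^{\pi_\theta})$ as a bilinear form — this is what licenses attributing all the $\theta$-dependence to just these two arguments, and it is precisely Lemma~\ref{lemma:value-mixture}; and (ii) the mild abuse of notation that $\rho_{x,\gamma,\gamma'}^{\pi_\theta}$ sums to $\frac{1-\gamma}{1-\gamma'}\neq1$, so $\mathbb{E}[\cdot]$ under $x'\sim\rho_{x,\gamma,\gamma'}^{\pi_\theta}$ must be read as the weighted sum $\sum_{x'}\rho_{x,\gamma,\gamma'}^{\pi_\theta}(x')(\cdot)$, which leaves every displayed equality valid. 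The case $\gamma'=1$ is excluded here since $\rho_{x,\gamma,\gamma'}^{\pi_\theta}$ and its logarithm need not exist; an analogue there would instead be derived from Eqn~\eqref{eq:undiscounted-rewards} directly.
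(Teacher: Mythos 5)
Your proof is correct and follows the argument the paper intends: the paper states this lemma without a separate proof, treating it as an immediate chain-rule consequence of Lemma~\ref{lemma:value-mixture} and the bilinearity of $F(V,\rho)=\rho^T V$, which is exactly what you carry out. Your additional care about applying the score-function identity only on the support of $\rho_{x,\gamma,\gamma'}^{\pi_\theta}$, and your observation that $\rho_{x,\gamma,\gamma'}^{\pi_\theta}$ sums to $\frac{1-\gamma}{1-\gamma'}\neq 1$ so the ``expectations'' must be read as unnormalized weighted sums, are both appropriate and consistent with the paper's own (implicit) abuse of notation.
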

We argue that the second partial gradient introduces most challenges in practical optimization. Intuitively, this is because its unbiased estimator is equivalent to a REINFORCE gradient estimator which requires estimating discounted values that accumulate $V_\gamma^\pi(x^\prime)$ as `reward' under discount factor $\gamma^\prime$. By the premise of our work, this estimation would be difficult. We will detail the discussions in Appendix~\ref{appendix:kweight}.

The following result characterizes the first partial gradient.

\begin{restatable}{proposition}{proppartial}\label{prop:partial-grad} For any $\gamma<\gamma'<1$, the first partial gradient $(\partial_V F(V_\gamma^{\pi_\theta},\rho_{x,\gamma,\gamma'}^{\pi_\theta}))^T \nabla_\theta V_\gamma^{\pi_\theta}$  can be expressed as 
\begin{align}
    \mathbb{E}_{\pi_\theta}\left\lbrack \sum_{t=0}^\infty (\gamma')^t  Q_\gamma^{\pi_\theta}(x_t,a_t)\nabla_\theta \log\pi_\theta(a_t|x_t)  \; \middle| \;x_0=x\right\rbrack.
    \label{eq:first-partial-grad}
\end{align} 
When $\gamma'=1$, under assumptions \textbf{A.1} and \textbf{A.2}, the first partial gradient exists and is expressed as 
\begin{align}
    \mathbb{E}_{\pi_\theta}\left\lbrack \sum_{t=0}^{T} Q_\gamma^{\pi_\theta}(x_t,a_t)\nabla_\theta \log\pi_\theta(a_t|x_t)  \; \middle| \;x_0=x \right\rbrack
    \label{eq:undiscounted-first-partial-grad}.
\end{align} 
\end{restatable}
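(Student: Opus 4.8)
The plan is to reduce the first partial gradient to a matrix identity and then expand it as a Neumann series. Since $F(V,\rho) = V^T\rho$ we have $\partial_V F(V,\rho) = \rho$, so the first partial gradient equals $(\rho_{x,\gamma,\gamma'}^{\pi_\theta})^T \nabla_\theta V_\gamma^{\pi_\theta}$, where $\nabla_\theta V_\gamma^{\pi_\theta}$ denotes the $\mathcal{X}$-indexed collection of gradient vectors $\nabla_\theta V_\gamma^{\pi_\theta}(x')$. The first step is to establish the auxiliary identity $(I - \gamma P^{\pi_\theta})\,\nabla_\theta V_\gamma^{\pi_\theta} = g$, where $g\in\mathbb{R}^{\mathcal{X}}$ has entries $g(y) = \sum_a \nabla_\theta\pi_\theta(a|y)\,Q_\gamma^{\pi_\theta}(y,a) = \mathbb{E}_{a\sim\pi_\theta(\cdot|y)}\!\left[Q_\gamma^{\pi_\theta}(y,a)\nabla_\theta\log\pi_\theta(a|y)\right]$. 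This follows by differentiating the matrix Bellman equation $V_\gamma^{\pi_\theta} = r^{\pi_\theta} + \gamma P^{\pi_\theta} V_\gamma^{\pi_\theta}$ with respect to $\theta$, moving the $\nabla_\theta V_\gamma^{\pi_\theta}$ term to the left, and identifying the residual $\nabla_\theta r^{\pi_\theta} + \gamma(\nabla_\theta P^{\pi_\theta})V_\gamma^{\pi_\theta}$ entrywise with $g$ via the definition of $Q_\gamma^{\pi_\theta}$; this is just the matrix form of the classical policy gradient theorem.

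Next I would invoke the explicit form of the mixture weights from Lemma~\ref{lemma:value-mixture}, which transposed reads $(\rho_{x,\gamma,\gamma'}^{\pi_\theta})^T = \delta_x^T (I - \gamma' P^{\pi_\theta})^{-1}(I - \gamma P^{\pi_\theta})$. Combining with the auxiliary identity, the $(I-\gamma P^{\pi_\theta})$ factor telescopes against $\nabla_\theta V_\gamma^{\pi_\theta} = (I-\gamma P^{\pi_\theta})^{-1}g$, giving $(\rho_{x,\gamma,\gamma'}^{\pi_\theta})^T \nabla_\theta V_\gamma^{\pi_\theta} = \delta_x^T (I - \gamma' P^{\pi_\theta})^{-1} g$. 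For $\gamma' < 1$ I would then expand $(I - \gamma' P^{\pi_\theta})^{-1} = \sum_{t\geq 0}(\gamma')^t (P^{\pi_\theta})^t$ and read off $\delta_x^T (P^{\pi_\theta})^t g = \mathbb{E}_{\pi_\theta}[\,\mathbb{E}_{a_t\sim\pi_\theta(\cdot|x_t)}[Q_\gamma^{\pi_\theta}(x_t,a_t)\nabla_\theta\log\pi_\theta(a_t|x_t)] \mid x_0=x\,]$; summing over $t$ yields exactly Eqn~\eqref{eq:first-partial-grad}. As a sanity check, taking $\gamma=\gamma'$ makes $\rho_{x,\gamma,\gamma'}^{\pi_\theta} = \delta_x$ and recovers the ordinary policy gradient for $V_\gamma^{\pi_\theta}(x)$.

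For the extension to $\gamma'=1$ the matrix $(I - P^{\pi_\theta})$ is singular, so instead of inverting I would argue directly with the series $\sum_{t\geq 0}\delta_x^T(P^{\pi_\theta})^t g$. Under assumptions \textbf{A.1}--\textbf{A.2} the chain is absorbing with absorption time $T$, and since the reward vanishes on absorbing states, $V_\gamma^{\pi_\theta}$ and hence $Q_\gamma^{\pi_\theta}(y,a)$ vanish there, so $g(y)=0$ for every absorbing $y$. Because the occupation probabilities of the transient states of a finite absorbing chain decay to zero, the partial sums $\sum_{t=0}^{n}\delta_x^T(P^{\pi_\theta})^t g$ converge, and the terms for $t$ beyond absorption contribute nothing; the limit is $\mathbb{E}_{\pi_\theta}[\sum_{t=0}^{T}Q_\gamma^{\pi_\theta}(x_t,a_t)\nabla_\theta\log\pi_\theta(a_t|x_t)\mid x_0=x]$, which is Eqn~\eqref{eq:undiscounted-first-partial-grad}.

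I expect this $\gamma'=1$ case to be the main obstacle: it requires justifying that the infinite series both truncates (via $g$ vanishing on absorbing states) and converges (via the fundamental matrix of an absorbing Markov chain), and one must be careful that the differentiation of the Bellman equation and the manipulations of $(I-\gamma' P^{\pi_\theta})^{-1}$ are still valid in the limit $\gamma'\to 1$, e.g.\ by carrying out the argument at transient states only. By contrast, the $\gamma'<1$ case is essentially pure linear algebra once the Bellman-gradient identity $(I-\gamma P^{\pi_\theta})\nabla_\theta V_\gamma^{\pi_\theta}=g$ is in place.
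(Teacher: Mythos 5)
Your proposal is correct and follows essentially the same route as the paper's own (matrix-notation) proof: both reduce the first partial gradient to $\delta_x^T (I-\gamma' P^{\pi_\theta})^{-1} g$ via the identity $\nabla_\theta V_\gamma^{\pi_\theta} = (I-\gamma P^{\pi_\theta})^{-1} g$ and the explicit form of $\rho_{x,\gamma,\gamma'}^{\pi_\theta}$, then expand the Neumann series, and both handle $\gamma'=1$ by noting that $g$ vanishes on absorbing states so the series truncates at the absorption time and converges by finiteness of the fundamental matrix.
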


\paragraph{Connections to common deep RL heuristic.}
Many high-quality deep RL algorithms (see, e.g. \citealp{dhariwal2017openai,SpinningUp2018}) implement parameter updates 
 which are very similar to  Eqn~\eqref{eq:undiscounted-first-partial-grad}. As such, Proposition~\ref{prop:partial-grad} provides some insights on why implementing such a heuristic might be useful in practice: though in general Eqn~\eqref{eq:undiscounted-first-partial-grad} is not a gradient \citep{nota2019policy}, it is a partial gradient of $V_{\gamma'=1}^{\pi_\theta}(x)$, which is usually the objective of interest at evaluation time. Compared with the formula of vanilla PG  in Eqn~\eqref{eq:pg}, Eqn~\eqref{eq:undiscounted-first-partial-grad} offsets the \emph{over-discounting} by via a uniform average over states. 

However, it is worth noting that in deep RL practice, the definition of the evaluation horizon $T$ might slightly differ from that specified in \textbf{A.1}. In such cases, Proposition~\ref{prop:partial-grad} does not hold. By \textbf{A.1}, $T$ is the absorption time that defines when the MDP enters a terminal absorbing state. In many applications, however, for MDPs without a natural terminatal state, $T$ is usually enforced by an external time constraint which does not depend on states. In other words, an environment can terminate even when it does not enter any terminal state (see, e.g., \citealp{brockman2016openai} for such examples). To bypass this subtle technical gap, one idea is to incorporate time steps as part of the state $\tilde{x}\leftarrow [x,t]$. This technique was hinted at in early work such as \citep{schulman2015high} and empirically studied in \citep{pardo2018time}. In this case, the random absorbing time $T$ depends fully on the augmented states, and Proposition~\ref{prop:partial-grad} holds.

\subsection{Taylor expansions of partial gradients}

We now consider approximations to the first partial gradients \begin{align*}
    \left(\partial_V F(V_\gamma^{\pi_\theta},\rho_{x,\gamma,\gamma'}^{\pi_\theta})\right)^T \nabla_\theta V_\gamma^{\pi_\theta} = (\rho_{x,\gamma,\gamma'}^{\pi_\theta})^T \nabla_\theta V_\gamma^{\pi_\theta}.
\end{align*}
Since $\nabla_\theta V_\gamma^{\pi_\theta}$ does not depend on $\gamma'$, the approximation is effectively with respect to the weight vector $\rho_{x,\gamma,\gamma'}^{\pi_\theta}$.
Below, we show results for the $K$\textsuperscript{th} order approximation.

\begin{restatable}{proposition}{proprho}\label{pro:rho}
Assume $\gamma<\gamma'<1$. For any $x\in\mathcal{X}$, define the $K$\textsuperscript{th} Taylor expansion to $\rho_{x,\gamma,\gamma'}^\pi$ as
\begin{align*}
   \rho_{x,K,\gamma,\gamma'}^\pi =  \sum_{k=0}^K \left((\gamma^\prime-\gamma) \left(I-\gamma (P^\pi)^T\right)^{-1} (P^\pi)^T \right)^k  \delta_x.
\end{align*}
It can be shown that $V_{K,\gamma,\gamma'}^\pi(x)=(\rho_{x,K,\gamma,\gamma'}^\pi)^T V_\gamma^\pi$ and $\left\lVert \rho_{x,K,\gamma,\gamma'}^\pi - \rho_{K,\gamma,\gamma'}^\pi \right\rVert_\infty = O\left(\left(\frac{\gamma'-\gamma}{1-\gamma}\right)^{K+1}\right)$.
\end{restatable}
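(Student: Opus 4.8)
The plan is to treat the two assertions in turn, both ultimately resting on the fact that $P^\pi$ commutes with every rational function of $P^\pi$ (in particular with $(I-\gamma P^\pi)^{-1}$), together with a contraction estimate in the $\ell_1$ operator norm.

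\textbf{The identity $V_{K,\gamma,\gamma'}^\pi(x)=(\rho_{x,K,\gamma,\gamma'}^\pi)^T V_\gamma^\pi$.} First I would note that $\big((I-\gamma P^\pi)^{-1}P^\pi\big)^T=(P^\pi)^T(I-\gamma(P^\pi)^T)^{-1}=(I-\gamma(P^\pi)^T)^{-1}(P^\pi)^T$, where the last equality is the commutativity remark applied to $(P^\pi)^T$. Since transposition commutes with taking powers and finite sums, this gives
\[
\rho_{x,K,\gamma,\gamma'}^\pi=\Bigg(\sum_{k=0}^K\big((\gamma'-\gamma)(I-\gamma P^\pi)^{-1}P^\pi\big)^k\Bigg)^{\!T}\delta_x .
\]
Transposing back and contracting with $V_\gamma^\pi$ yields $(\rho_{x,K,\gamma,\gamma'}^\pi)^TV_\gamma^\pi=\delta_x^T\sum_{k=0}^K\big((\gamma'-\gamma)(I-\gamma P^\pi)^{-1}P^\pi\big)^kV_\gamma^\pi$, which is exactly $\delta_x^TV_{K,\gamma,\gamma'}^\pi=V_{K,\gamma,\gamma'}^\pi(x)$ by the definition~\eqref{eq:kth-order-primal}.

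\textbf{The error bound.} Write $M\coloneqq(\gamma'-\gamma)(I-\gamma(P^\pi)^T)^{-1}(P^\pi)^T$, so $\rho_{x,K,\gamma,\gamma'}^\pi=\sum_{k=0}^K M^k\delta_x$ is a truncated Neumann series. The pivotal step is the bound $\|M\|_1\le\frac{\gamma'-\gamma}{1-\gamma}<1$ in the $\ell_1\!\to\!\ell_1$ operator norm (maximum absolute column sum): indeed $(P^\pi)^T$ is column-stochastic, so $\|(P^\pi)^T\|_1=1$, whence $\|(I-\gamma(P^\pi)^T)^{-1}\|_1\le\sum_{j\ge0}\gamma^j=(1-\gamma)^{-1}$. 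With this, the Neumann series $\sum_{k\ge0}M^k$ converges to $(I-M)^{-1}$, and a short computation gives $(I-M)^{-1}=(I-\gamma'(P^\pi)^T)^{-1}(I-\gamma(P^\pi)^T)=(I-\gamma(P^\pi)^T)(I-\gamma'(P^\pi)^T)^{-1}$ (the two factors commute, both being rational in $(P^\pi)^T$); applying this to $\delta_x$ and comparing with Lemma~\ref{lemma:value-mixture} identifies $\sum_{k\ge0}M^k\delta_x$ with $\rho_{x,\gamma,\gamma'}^\pi$. Consequently $\rho_{x,\gamma,\gamma'}^\pi-\rho_{x,K,\gamma,\gamma'}^\pi=M^{K+1}(I-M)^{-1}\delta_x$, and
\[
\big\lVert\rho_{x,\gamma,\gamma'}^\pi-\rho_{x,K,\gamma,\gamma'}^\pi\big\rVert_\infty\le\big\lVert\rho_{x,\gamma,\gamma'}^\pi-\rho_{x,K,\gamma,\gamma'}^\pi\big\rVert_1\le\|M\|_1^{K+1}\,\|(I-M)^{-1}\|_1\le\Big(\tfrac{\gamma'-\gamma}{1-\gamma}\Big)^{K+1}\tfrac{1-\gamma}{1-\gamma'},
\]
which is $O\big((\tfrac{\gamma'-\gamma}{1-\gamma})^{K+1}\big)$ for fixed $\gamma,\gamma'$. (I read the $\rho_{K,\gamma,\gamma'}^\pi$ in the statement as the exact weight vector $\rho_{x,\gamma,\gamma'}^\pi$.)

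\textbf{Anticipated obstacle.} The only real care point is the choice of norm: because $P^\pi$ is row-stochastic but $(P^\pi)^T$ is column-stochastic, the clean contraction estimate naturally lives in the $\ell_1$ operator norm, so one must carry the $\ell_1$ bound through and only pass to $\ell_\infty$ on vectors at the very end via $\|v\|_\infty\le\|v\|_1$ (alternatively, one can phrase the whole dual expansion on the primal side $P^\pi$ and transpose once at the very end). A secondary bookkeeping point is verifying that the Neumann sum $\sum_{k\ge0}M^k\delta_x$ really coincides with the closed form of Lemma~\ref{lemma:value-mixture}; this is precisely the dual-space incarnation of Proposition~\ref{prop:taylor-expansion}, so if desired one can invoke that result's argument verbatim rather than re-deriving the convergence.
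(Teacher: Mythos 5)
Your proof is correct and follows essentially the same route as the paper's: identify the exact weight vector via Lemma~\ref{lemma:value-mixture} as $(I-\gamma(P^\pi)^T)(I-\gamma'(P^\pi)^T)^{-1}\delta_x$, expand the resolvent as a Neumann series in $M=(\gamma'-\gamma)(I-\gamma(P^\pi)^T)^{-1}(P^\pi)^T$, truncate at order $K$, and bound the geometric tail. If anything your bookkeeping is tighter than the paper's: working in the $\ell_1$ operator norm (since $(P^\pi)^T$ is column- rather than row-stochastic) and only passing to $\ell_\infty$ on the vector at the end fixes a looseness in the paper's residual estimate, which asserts the analogous bound directly in $\lVert\cdot\rVert_\infty$, and you also explicitly verify the identity $V_{K,\gamma,\gamma'}^\pi(x)=(\rho_{x,K,\gamma,\gamma'}^\pi)^T V_\gamma^\pi$ by transposition, which the paper leaves implicit.
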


We build some intuitions about the approximations. Note that
in general we can write the partial gradient as a weighted mixture of \emph{local gradients} $Q_t\nabla_\theta \log \pi_\theta(a_t|x_t)$ where $Q_t\coloneqq Q_\gamma^{\pi_\theta}(x_t,a_t)$,
\begin{align}
    \mathbb{E}_\pi\left[\sum_{t=0}^\infty w_{K,\gamma,\gamma'}(t) Q_t \nabla_\theta \log \pi_\theta(a_t|x_t) \; \middle| \; x_0=x \right],\label{eq:state-weight}
\end{align}
for some weight function $w_{K,\gamma,\gamma'}(t)\in\mathbb{R}$. 
When $K\rightarrow \infty$, $\lim w_{K,\gamma,\gamma'}(t)=(\gamma')^t$ and we recover the original first partial gradient defined in Eqn~\eqref{eq:first-partial-grad}; when $K=0$, $w_{K,\gamma,\gamma'}(t)=\gamma^t$ recovers the vanilla PG in Eqn~\eqref{eq:pg}. For other values of $K$, we show the analytic weights $w_{K,\gamma,\gamma'}(t)$ in Appendix~\ref{appendix:kweight}. Similar to how $V_{K,\gamma,\gamma'}^\pi$ interpolates  $V_{\gamma}^\pi$ and $V_{\gamma'}^\pi$, here the $K$\textsuperscript{th} order expansion to the partial gradients interpolate the full partial gradients and vanilla PG. In practice, we might expect an intermediate value of $K$ achieve the best bias and variance trade-off of the update.

\section{Policy optimization with Taylor expansions}
\label{sec:algo}

Based on theoretical insights of previous sections, we propose two algorithmic changes to baseline algorithms. Based on Section~\ref{sec:taylor}, we propose Taylor expansion advantage estimation; based on Section~\ref{sec:opt}, we propose Taylor expansion update weighting. It is important to note that other algorithmic changes are possible, which we leave to future work.

\subsection{Baseline near on-policy algorithm}
We briefly introduce backgrounds for near on-policy policy optimization algorithms \citep{schulman2015trust,mnih2016asynchronous,schulman2017proximal,espeholt2018impala}. We assume that the data are collected under a behavior policy $(x_t,a_t,r_t)_{t=0}^\infty\sim \mu$, which is close to the target policy $\pi_\theta$. The on-policyness is ensured by constraining $D(\pi_\theta,\mu)\leq \epsilon$ for some divergence $D$ and threshold $\epsilon>0$. Usually, $\epsilon$ is chosen to be small such that little off-policy corrections are needed for estimating value functions. With data $(x_t,a_t,r_t)_{t=0}^\infty$, the algorithms estimate Q-functions $\hat{Q}_\gamma^{\pi_\theta}\approx Q_\gamma^{\pi_\theta}$. Then the estimates $\hat{Q}_\gamma^{\pi_\theta}(x,a)$ are used as plug-in alternatives to the Q-functions in the definition of gradient updates such as Eqn~\eqref{eq:pg} for sample-based updates.

\subsection{Taylor expansion Q-function estimation}

In Section~\ref{sec:taylor}, we discussed how to construct approximations to $Q_{\gamma'}^{\pi_\theta}$ using $Q_{\gamma}^{\pi_\theta}$ as building blocks. As the first first algorithmic change, we propose to construct the $K$\textsuperscript{th} order expansion $Q_{K,\gamma,\gamma^\prime}^{\pi_\theta}$ as a plug-in alternative to $Q_\gamma^{\pi_\theta}$ when combined with downstream optimization. Since $Q_{K,\gamma,\gamma^\prime}^{\pi_\theta} \approx Q_{\gamma'}^{\pi_\theta}$, we expect the optimization subroutine to account for an objective of a longer effective horizon.

In many baseline algorithms, we have access to a value function critic $V_\phi(x)$ and a subroutine which produces Q-function estimates $\hat{Q}_\gamma^{\pi_\theta}(x,a)$ (e.g., $\hat{Q}_\gamma^{\pi_\theta}(x_t,a_t)=\sum_{s= 0}^\infty\gamma^s r_{t+s}$). We then construct the $K$\textsuperscript{th} order expansion $\hat{Q}_{K,\gamma,\gamma'}^{\pi_\theta}(x,a)$ using $\hat{Q}_\gamma^{\pi_\theta}$. This procedure is similar to Algorithm~\ref{algo:estimate}
and we show the full algorithm in Appendix~\ref{appendix:gamma}. See also Appendix~\ref{appendix:exp} for further experimental details.
 
 \begin{algorithm}[h]
\label{algo:k-adv}
\begin{algorithmic}
\REQUIRE policy $\pi_\theta$ with parameter $\theta$ and $\alpha$ \\
\WHILE{not converged}
\STATE 1. Collect partial trajectories $(x_t,a_t,r_t)_{t=1}^T \sim \mu$.
\STATE 2. Estimate Q-functions $\hat{Q}_\gamma^{\pi_\theta}(x_t,a_t)$.
\STATE 3. Construct $K$\textsuperscript{th} order Taylor expansion estimator $\hat{Q}_{K,\gamma,\gamma^\prime}^{\pi_\theta}(x_t,a_t)$ using $\hat{Q}^{\pi_\theta}(x_t,a_t)$.
\STATE 4. Update the parameter via gradient ascent $\theta \leftarrow \theta + \alpha  \sum_{t=1}^T  \hat{Q}_{K,\gamma,\gamma}^{\pi_\theta}(x_t) \nabla_\theta \log\pi_\theta(a_t|x_t) $.
\ENDWHILE
\caption{Taylor expansion Q-function estimation}
\end{algorithmic}
\end{algorithm}

\subsection{Taylor expansion update weighting}

In Section~\ref{sec:opt}, we discussed Taylor expansions approximation $\rho_{x,K,\gamma,\gamma'}^{\pi_\theta}$ to the weight vector $\rho_{x,\gamma,\gamma'}^{\pi_\theta}$. As the second algorithmic change to the baseline algorithm, we update parameters in the direction of $K$\textsuperscript{th} order approximations to the partial gradient
$
    \theta\leftarrow \theta + \alpha \left(\rho_{x,K,\gamma,\gamma'}^{\pi_\theta}\right)^T \nabla_\theta V_\gamma^{\pi_\theta}
$. Eqn~\eqref{eq:state-weight} shows that the update effectively translates into adjusting the weight $w_t= w_{K,\gamma,\gamma'}(t)$. When combined with other components of the algorithm, the pseudocode is shown in Algorithm 3. Under this framework, the common deep RL heuristic could be recovered by setting $w_t=1$.

\begin{algorithm}[h]
\label{algo:weight-onpolicy}
\begin{algorithmic}
\REQUIRE policy $\pi_\theta$ with parameter $\theta$ and $\alpha$ \\
\WHILE{not converged}
\STATE 1. Collect partial trajectories $(x_t,a_t,r_t)_{t=1}^T\sim\mu$.
\STATE 2. Estimate Q-functions  $\hat{Q}_t=\hat{Q}_\gamma^{\pi_\theta}(x_t,a_t)$. 
\STATE 3. Compute weights for each state $w_t=w_{x_0,K,\gamma,\gamma'}(t)$, and average $g_\theta =  \sum_{t=1}^T w_t \hat{Q}_t \nabla_\theta \log\pi_\theta(a_t|x_t) $. 
\STATE 4. Update parameters $\theta\leftarrow\theta+\alpha g_\theta$.
\ENDWHILE
\caption{Taylor expansion update weighting}
\end{algorithmic}
\end{algorithm}

\section{Related work}

\paragraph{Discount factors in RL.} Discount factors impact RL agents in various aspects. A number of work suggest that RL problems with large discount factors are generally more difficult to solve \citep{jiang2016structural}, potentially due to increased  complexities of the optimal value functions or collapses of the action gaps \citep{lehnert2018value,laroche2018reinforcement}. However, optimal policies defined with small discounts can be very sub-optimal for RL objectives with a large discount factor. To entail numerical stability of using large discounts, prior work has suggested non-linear transformation of the Bellman targets for Q-learning \citep{pohlen2018observe,van2019general,kapturowski2018recurrent,van2019using}. However, when data is scarce, small discount factors might prove useful due to its implicit regularization effect \citep{amit2020discount}. 

As such, there is a trade-off mediated by choosing different values of discount factors. Similar trade-off effects are most well-known in the context of TD($\lambda$), where $\lambda\in[0,1]$ trades-off the bias and variance of the TD updates \citep{sutton1998introduction,kearns2000bias}. 

\paragraph{Adapting discount factors \& multiple discount factors.} In general, when selecting a single optimal discount factor for training is difficult, it might be desirable to adjust the discount during training. This could be achieved by  human-designed  \citep{prokhorov1997adaptive,franccois2015discount} or blackbox adaptation \citep{xu2018meta}. Alternatively, it might also be beneficial to learn with multiple discount factors at the same time, which could improve TD-learning  \citep{sutton1995td} or representation learning \citep{fedus2019hyperbolic}. Complementary to all such work, we study the connections between value functions defined with different discounts.

\paragraph{Taylor expansions for RL.} Recently in \citep{tang2020taylor}, Taylor expansions were applied to study the relationship between $V_\gamma^\pi$ and $V_\gamma^\mu$, i.e., value functions under the same discount factor but different policies $\pi\neq\mu$. This is useful in the context of off-policy learning. Our work is orthogonal and could be potentially combined with this approach.

\section{Experiments}
\label{sec:exp}

In this section, we evaluate the empirical performance of new algorithmic changes to the baseline algorithms. We focus on robotics control experiments with continuous state and action space. The tasks are available in OpenAI gym \citep{brockman2016openai}, with backends such as MuJoCo  \citep{todorov2012mujoco} and bullet physics \citep{coumans2015bullet}. We label the tasks as gym (G) and bullet (B) respectively. We always compare the undiscounted cumulative rewards evaluated under a default evaluation horizon $T=1000$. 

\paragraph{Hyper-parameters.} Throughout the experiments, we use the same hyper-parameters across all algorithms. The learning rate is tuned for the baseline PPO, and fixed across all algorithms. See Appendix~\ref{appendix:exp} for further details.

\subsection{Taylor expansion Q-function estimation}

\begin{figure}[t]
    \centering
    \subfigure[HalfCheetah(G)]{\includegraphics[keepaspectratio,width=.22\textwidth]{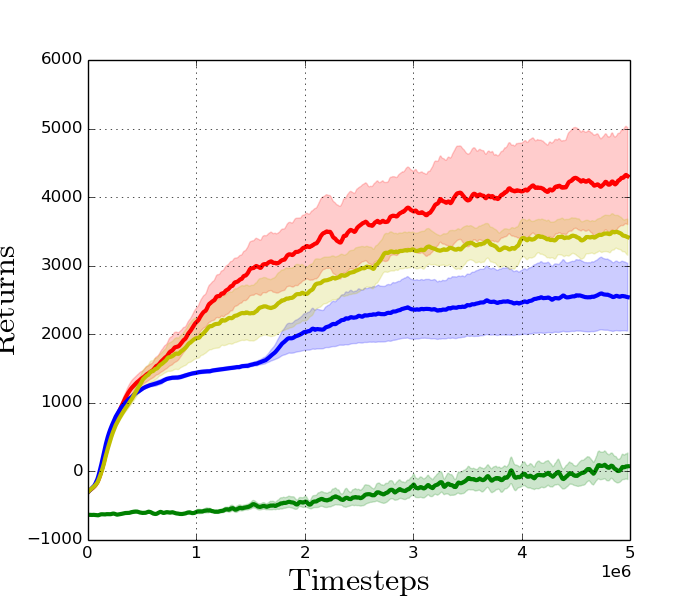}}
    \subfigure[Ant(G)]{\includegraphics[keepaspectratio,width=.22\textwidth]{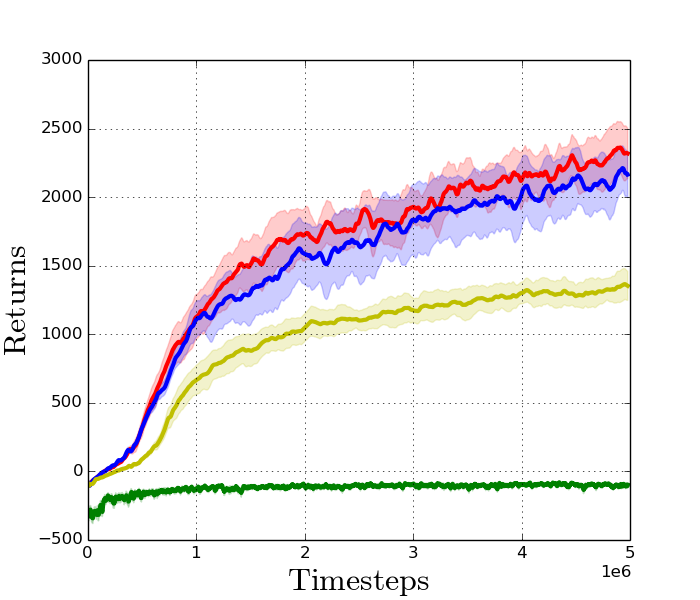}}
    \subfigure[Walker2d(G)]{\includegraphics[keepaspectratio,width=.22\textwidth]{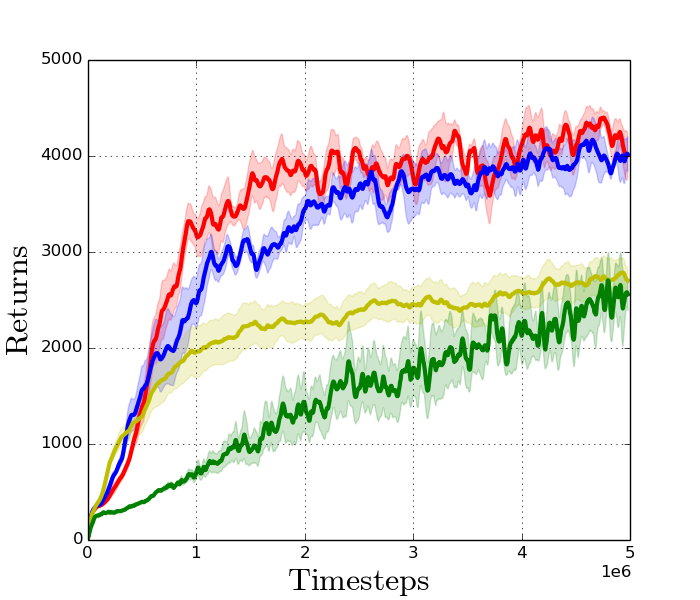}}
    \subfigure[HalfCheetah(B)]{\includegraphics[keepaspectratio,width=.22\textwidth]{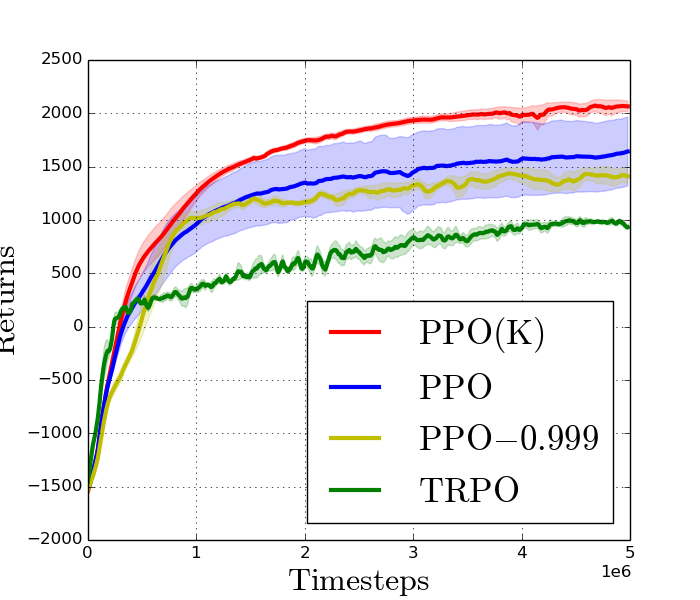}}
    \subfigure[Ant(B)]{\includegraphics[keepaspectratio,width=.22\textwidth]{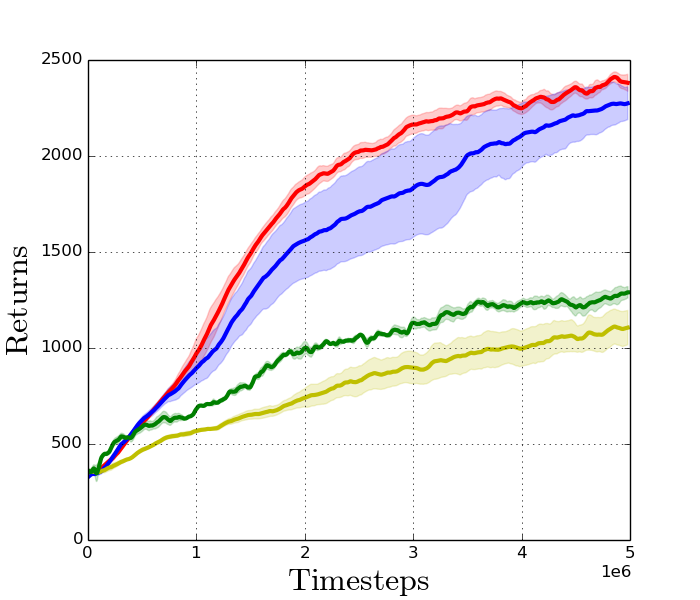}}
    \subfigure[Walker2d(B)]{\includegraphics[keepaspectratio,width=.22\textwidth]{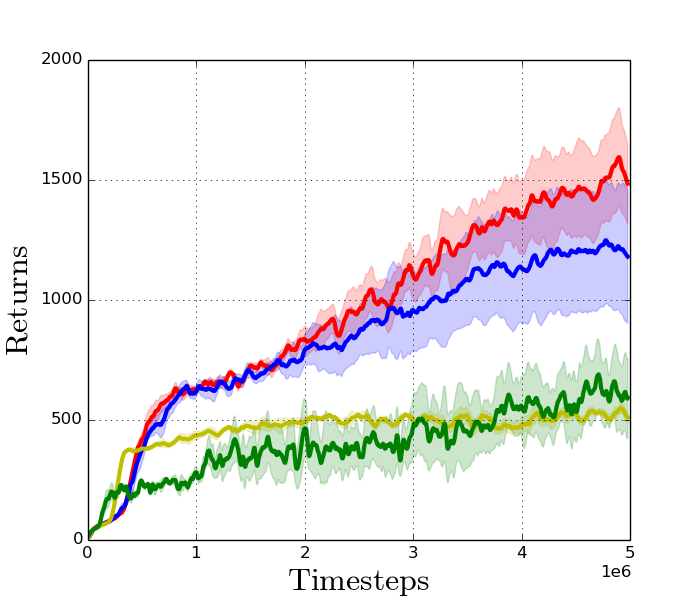}}
    \caption{Comparison of Taylor expansion Q-function estimation with other baselines. Each curve shows median $\pm$ std results across $5$ seeds. Taylor expansion outperforms PPO baselines with both lower and high discount factors.}
    \label{fig:onpolicy}
\end{figure}

We use $\hat{Q}_{K,\gamma,\gamma^\prime}^{\pi_\theta}(x,a)$ with $K=1$ as the Q-function estimator plug-in for the gradient update. When combining with PPO \citep{schulman2017proximal}, the resulting algorithm is named PPO($K$). We compare with the baseline PPO and TRPO \citep{schulman2015trust}. In practice, we consider a mixture of advantage estimator $\hat{Q}^{\pi_\theta}(x,a)=(1-\eta)\hat{Q}_\gamma^{\pi_\theta}(x,a)+\eta\hat{Q}_{K,\gamma,\gamma^\prime}^{\pi_\theta}(x,a)$ with $\eta\in[0,1]$ a constant that interpolates between the PPO (i.e., $\eta=0$) and PPO($K$). Note that though $\eta$ should be selected such that it balances the numerical scales of the two extremes, as a result, we usually find $\eta$ to work well when it is small in absolute scale ($\eta= 0.01$ works the best). 

\paragraph{Results.} In Figure~\ref{fig:onpolicy}, we compare a few baselines: \textbf{(1)} PPO with $\gamma=0.99$ (default); \textbf{(2)} PPO with high discount factor $\gamma=1-\frac{1}{T}=0.999$; \textbf{(3)} PPO with Taylor expansion based advantage estimator, PPO($K$). Throughout, we use a single hyper-parameter $\eta=0.01$. We see that in general, PPO($K$) leads to better performance (faster learning speed, better asymptotic performance or smaller variance across $5$ seeds). This shows Taylor expansion Q-function estimation could lead to performance gains across tasks, given that the hyper-parameter $\eta$ is carefully tuned. We provide a detailed ablation study on $\eta$ in Appendix~\ref{appendix:exp}, where we show how the overall performance across the benchmark tasks vary as $\eta$ changes from small to large values.

A second observation is that simply increasing the discount factor to $\gamma=1-\frac{1}{T}=0.999$ generally degrades the performance. This confirms  issue with instability of directly applying high discount factors which motivates this work.

We also compare with the open source implementation of \citep{romoff2019separating} in Appendix~\ref{appendix:exp}, where they estimate $\hat{Q}_{\gamma'}^\pi$ based on recursive bootstraps of Q-function differences. Conceptually, this is similar to Taylor expansions with $K=\infty$. We show that without a careful trade-off mediated by smaller $K$, this algorithm does not improve performance out of the box. 

\subsection{Taylor expansion update weighting}

As introduced in Section~\ref{sec:algo}, we weigh local gradients $\hat{Q}_t\nabla_\theta\log \pi_\theta(a_t|x_t)$ with $K$\textsuperscript{th} order expansion weights $w_{K,\gamma,\gamma'}(t)$. Here, we take $\gamma'=1-\frac{1}{T}$. Note that since $K=\infty$ corresponds to $\lim w_{K,\gamma,\gamma'}(t) = (\gamma')^t\approx 1$, this is very close to the commonly implemented PPO baseline. We hence expect the algorithm to work better with relatively large values of $K$ and set $K=100$ throughout experiments. In practice, we find the performance to be fairly robust in the choice of $K$. We provide further analysis and ablation study in Appendix~\ref{appendix:exp}. 

\begin{figure}[t]
    \centering
    \subfigure[HalfCheetah(G)]{\includegraphics[keepaspectratio,width=.22\textwidth]{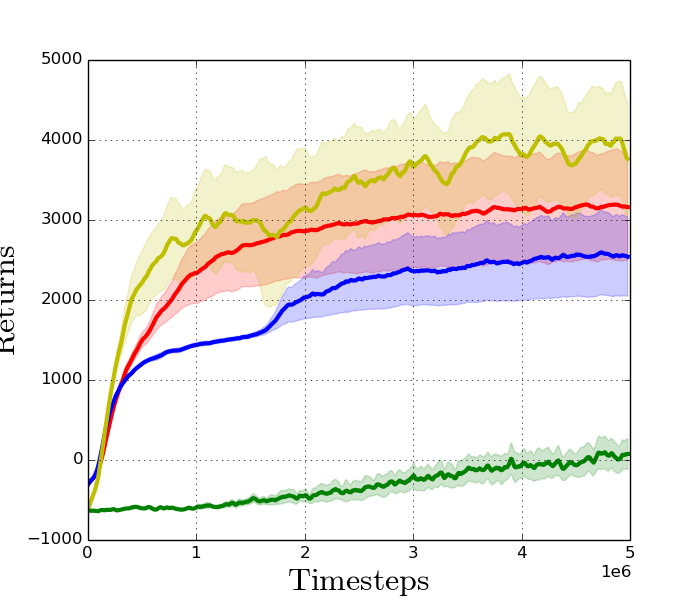}}
    \subfigure[Ant(G)]{\includegraphics[keepaspectratio,width=.22\textwidth]{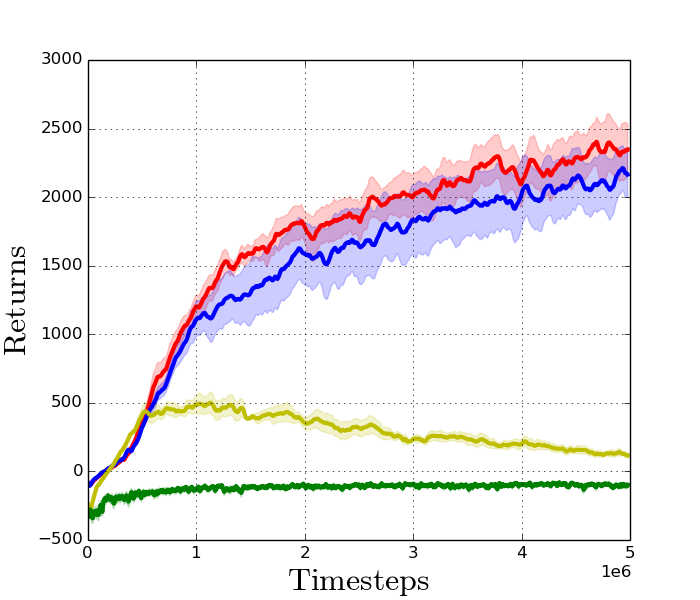}}
    \subfigure[Walker2d(G)]{\includegraphics[keepaspectratio,width=.22\textwidth]{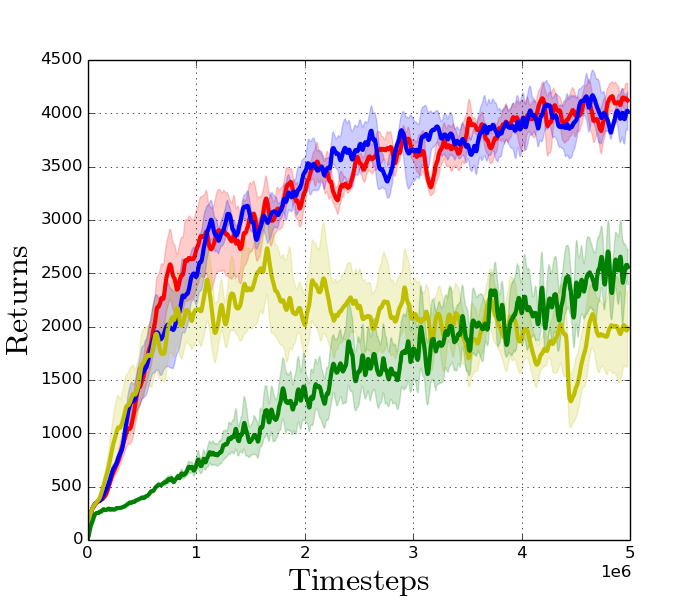}}
    \subfigure[HalfCheetah(B)]{\includegraphics[keepaspectratio,width=.22\textwidth]{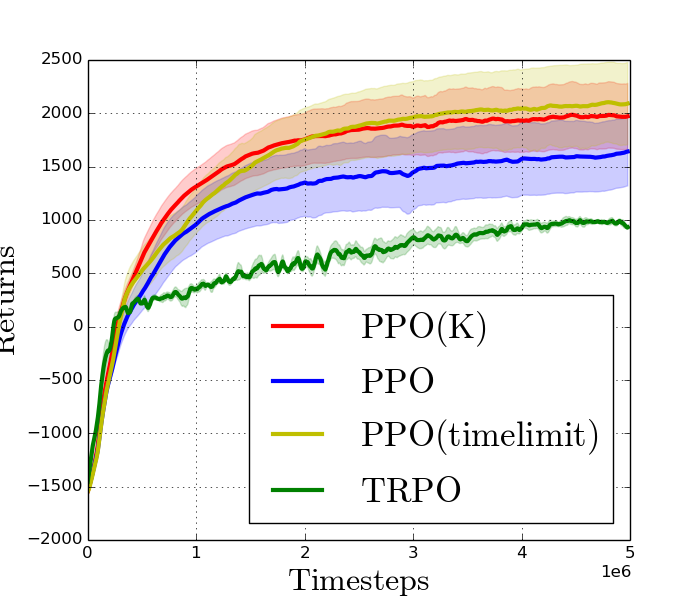}}
    \subfigure[Ant(B)]{\includegraphics[keepaspectratio,width=.22\textwidth]{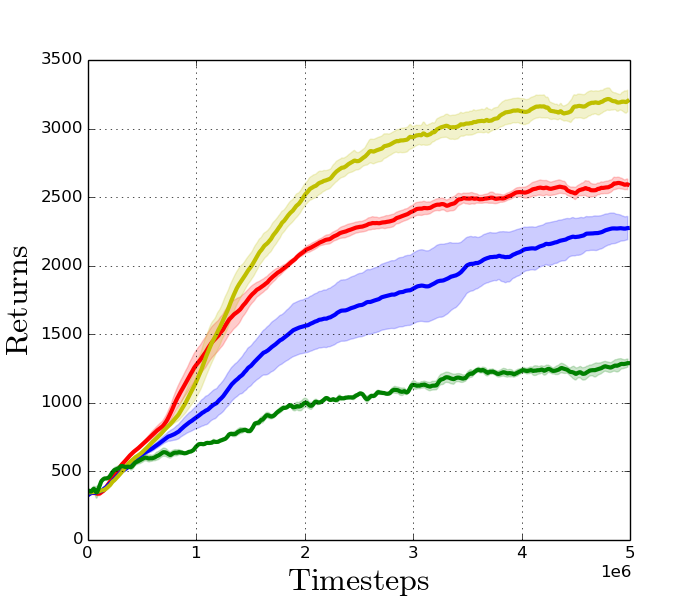}}
    \subfigure[Walker2d(B)]{\includegraphics[keepaspectratio,width=.22\textwidth]{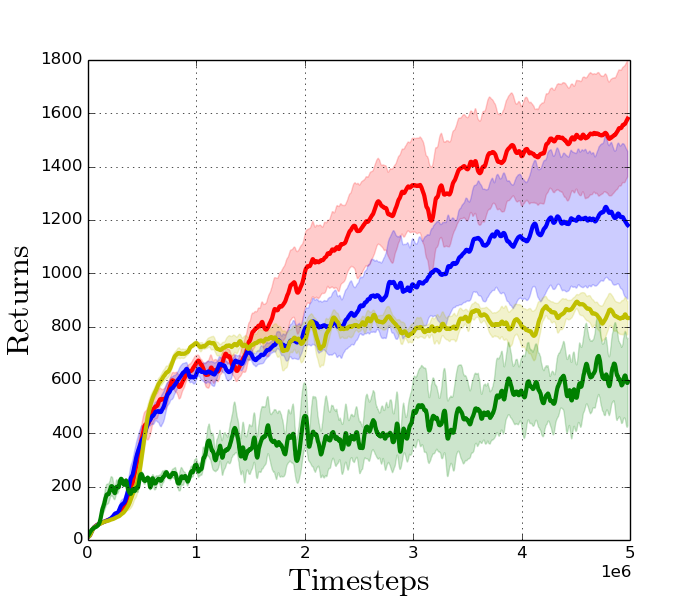}}
    \caption{Comparison of Taylor expansion update weighting with other baselines. Each curve shows median $\pm$ std results across $5$ seeds. Taylor expansion outperforms the default PPO baseline most stably.}
    \label{fig:onpolicy-state}
\end{figure}

\paragraph{Results.} We compare a few baselines: \textbf{(1)} default PPO; \textbf{(2)} PPO with time limit \citep{pardo2018time}. In this case, the states are augmented with time steps $\tilde{x}\leftarrow [x,t]$ such that the augmented states $\tilde{x}$ are Markovian; \textbf{(3)} PPO with Taylor expansion update weighting PPO($K$). In Figure~\ref{fig:onpolicy-state}, we see that in general, PPO($K$) and PPO with time limit outperform the baseline PPO. We speculate that the performance gains arise from the following empirical motivation: since the evaluation stops at $t=T$, local gradients close to $t=T$ should be weighed down because they do not contribute as much to the final objective. However, the default PPO ignores such an effect and weighs all updates uniformly. To tackle this issue, PPO($K$) explicitly weighs down the update while and PPO with time limit augments the state space to restore stationarity. Empirically, though in some cases PPO with time limit also outperforms PPO($K$), it behaves fairly unstably in other cases. 

\paragraph{Extensions to off-policy algorithms.} 
Above, we mainly focused on on-policy algorithms. The setup is simpler because the data are collected (near) on-policy. It is possible to extend similar results to off-policy algorithms \citep{mnih2015human,lillicrap2015continuous,fujimoto2018addressing,haarnoja2018soft}.  Due to the space limit, we present extended results in Appendix~\ref{appendix:exp}, where we show how to combine similar techniques to off-policy actor-critic algorithms such as TD3 \citep{fujimoto2018addressing} and SAC \citep{haarnoja2018soft} in  continuous control domains.

\section{Conclusion}
We have proposed a family of objectives that interpolate value functions defined with two discount factors. We have shown that similar techniques are applicable to other cumulative quantities defined through discounts, such as PG updates. This framework allowed us to achieve trade-off in estimating value functions or gradient updates, and led to empirical performance gains.

We also highlighted a new direction for bridging the gap between theory and practice: the gap between a fully discounted objective (in theory) and an undiscounted objective (in practice). By building a better understanding of this gap, we shed light on seemingly opaque heuristics which are necessary to achieve good empirical performance. We expect this framework to be useful for new practical algorithms.

\paragraph{Acknowledgements.} Yunhao thanks Tadashi Kozuno and Shipra Agrawal for discussions on the discrepancy between policy gradient theory and practices. Yunhao acknowledges the support from Google Cloud Platform for computational resources. The authors also thank Pooria Joulani for reviewing a draft of the paper.

\bibliographystyle{plainnat}
\bibliography{main}

\clearpage
\onecolumn

\begin{appendix}

\section*{APPENDICES: Taylor Expansions of Discount Factors}

\section{Proofs}
\label{appendix:proof}
\propstaylor*
\begin{proof}
Recall the Woodbury matrix identity
\begin{align*}
    (I-{\gamma^\prime}P^\pi)^{-1}=(I-\gamma P^\pi)^{-1}+(\gamma^\prime-\gamma)(I-\gamma P^\pi)^{-1}P^\pi (I-\gamma^\prime P^\pi)^{-1}.
\end{align*}
Recall the equality $V_{\gamma^\prime}^\pi=(I-\gamma^\prime P^\pi)^{-1} r^\pi$. By plugging in the Woodbury matrix identity, this immediate shows
\begin{align*}
    V_{\gamma^\prime}^\pi &= (I-\gamma P^\pi)^{-1} r^\pi +(\gamma^\prime-\gamma)(I-\gamma P^\pi)^{-1}P^\pi (I-\gamma^\prime P^\pi)^{-1} r^\pi \nonumber \\
    &= V_\gamma^\pi + (\gamma^\prime-\gamma)(I-\gamma P^\pi)^{-1}P^\pi V_{\gamma^\prime}^\pi.
\end{align*}
Now, observe that the second term involves $V_{\gamma^\prime}^\pi$. We can plug in the definition of $V_{\gamma^\prime}^\pi=(I-\gamma^\prime P^\pi)^{-1} r^\pi$ and invoke the Woodbury matrix identity again. This produces
\begin{align*}
    V_{\gamma^\prime}^\pi &=  V_\gamma^\pi +(\gamma^\prime-\gamma)(I-\gamma P^\pi)^{-1}P^\pi V_\gamma^\pi +   \left((\gamma^\prime-\gamma)(I-\gamma P^\pi)^{-1}P^\pi\right)^2 V_{\gamma^\prime}^\pi.
\end{align*}
By induction, it is straightforward to show that iterating the above procedure $K\geq  0$ times produces the following equalities
\begin{align*}
    V_{\gamma^\prime}^\pi = \sum_{k=0}^K \left((\gamma^\prime-\gamma) (I-\gamma P^\pi)^{-1} P^\pi \right)^k V_\gamma^\pi +
    &+ \underbrace{\left((\gamma^\prime-\gamma) (I-\gamma P^\pi)^{-1} P^\pi \right)^{K+1} V_{\gamma^\prime}^\pi}_{\text{residual}}.
\end{align*}
Consider the norm of the residual term. Since $P^\pi$ is a transition matrix, $\left\lVert P^\pi \right\rVert_\infty < 1$. As a result, $\left\lVert (I-\gamma P^\pi)^{-1}\right\rVert_\infty = \left\lVert \sum_{t=0}^\infty \gamma^t (P^\pi)^t \right\rVert_\infty < (1-\gamma)^{-1}$. This implies
\begin{align*}
    \left\lVert \left((\gamma^\prime-\gamma) (I-\gamma P^\pi)^{-1} P^\pi \right)^{K+1} V_{\gamma^\prime}^\pi \right\rVert_\infty < \left(\frac{\gamma^\prime-\gamma}{1-\gamma}\right)^{K+1} \cdot \frac{R_\text{max}}{1-\gamma^\prime}.
\end{align*}
When $\gamma<\gamma^\prime<1$, the residual norm decays exponentially and $\rightarrow 0$ as $K\rightarrow \infty$. This implies that the infinite series converges,
\begin{align*}
    V_{\gamma^\prime}^\pi = \sum_{k=0}^\infty \left((\gamma^\prime-\gamma) (I-\gamma P^\pi)^{-1} P^\pi \right)^k V_\gamma^\pi .
\end{align*}

\paragraph{Additional consideration when $\gamma'=1$.} When $\gamma'=1$, in order to ensure finiteness of $V_{\gamma'=1}^\pi$, we assume the following two conditions: \textbf{(1)} The Markov chain induced by $\pi$ is absorbing; \textbf{(2)} for any absorbing state $x$, $r^\pi(x)=0$.
Without loss of generality, assume there exists a single absorbing state. In general, the transition matrix $P^\pi$ can be decomposed as follows \citep{grinstead2012introduction,ross2014introduction},
\begin{align*} P^\pi = 
\begin{pmatrix}
  \tilde{P}^\pi & \tilde{p}^\pi \\ 
 0 & 1
\end{pmatrix},
\end{align*}
where $\tilde{P}^\pi\in\mathbb{R}^{(|\mathcal{X}|-1)|\mathcal{A}|\times (|\mathcal{X}|-1)|\mathcal{A}|}$ and $\tilde{p}^\pi \in\mathbb{R}^{\mathcal{X}-1)}$. Here, the first $\mathcal{X}-1$ states are transient and the last state is absorbing. For convenience, define $\tilde{r}^\pi$ as the reward vector $r^\pi$ constrained on the first $\mathcal{X}-1$ transient states.
We provide a few lemmas below.

\begin{restatable}{lemma}{lemma1}\label{lemma:1} The matrix $(I-\tilde{P})^\pi$ is invertible and its inverse is $(I-\tilde{P})^\pi = \sum_{k=0}^\infty (\tilde{P})^k$.
\end{restatable}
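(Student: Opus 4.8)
The plan is to prove that $(I - \tilde{P}^\pi)$ is invertible with inverse $\sum_{k=0}^\infty (\tilde{P}^\pi)^k$, by showing that the powers $(\tilde{P}^\pi)^k$ tend to zero, which ensures the Neumann series converges. First I would recall the standard fact about absorbing Markov chains: since the chain induced by $\pi$ is absorbing, from every transient state there is positive probability of eventually reaching the absorbing state, so with probability one the chain is absorbed in finite time. In terms of the substochastic matrix $\tilde{P}^\pi$ (the restriction of $P^\pi$ to transient states), this means $(\tilde{P}^\pi)^k \to 0$ entrywise as $k \to \infty$; equivalently, the spectral radius of $\tilde{P}^\pi$ is strictly less than $1$. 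Concretely, I would argue that for some finite $m$, every row sum of $(\tilde{P}^\pi)^m$ is at most some $c < 1$ (each row sum being the probability of still being in a transient state after $m$ steps, starting from the corresponding transient state), hence $\left\lVert (\tilde{P}^\pi)^m \right\rVert_\infty \le c < 1$, and so $\left\lVert (\tilde{P}^\pi)^{km} \right\rVert_\infty \le c^k \to 0$.

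Next I would invoke the standard Neumann series argument: whenever $\left\lVert (\tilde{P}^\pi)^{km} \right\rVert_\infty \to 0$, the partial sums $S_N = \sum_{k=0}^N (\tilde{P}^\pi)^k$ form a Cauchy sequence in the $\left\lVert \cdot \right\rVert_\infty$ operator norm and hence converge to some matrix $S$. Multiplying the identity $(I - \tilde{P}^\pi) S_N = I - (\tilde{P}^\pi)^{N+1}$ and taking $N \to \infty$ gives $(I - \tilde{P}^\pi) S = I$, and similarly $S (I - \tilde{P}^\pi) = I$, so $S = (I - \tilde{P}^\pi)^{-1} = \sum_{k=0}^\infty (\tilde{P}^\pi)^k$. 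This is the desired conclusion.

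I expect the main obstacle to be making the step "$(\tilde{P}^\pi)^k \to 0$" rigorous from the definition of an absorbing chain — specifically, justifying the uniform bound $\left\lVert (\tilde{P}^\pi)^m \right\rVert_\infty \le c < 1$ for some finite $m$. The cleanest route is: by definition of absorbing, for each transient state $x$ there is a path of some length $\ell_x$ to the absorbing state with positive probability $p_x > 0$; taking $m = \max_x \ell_x$ and noting that probabilities of reaching absorption are non-decreasing in the number of steps, each row sum of $(\tilde{P}^\pi)^m$ (= probability of remaining transient after $m$ steps) is at most $1 - \min_x p_x < 1$. Everything else is a routine Neumann series / geometric series estimate, entirely analogous to the bound $\left\lVert (I - \gamma P^\pi)^{-1} \right\rVert_\infty < (1-\gamma)^{-1}$ already used in the proof of Proposition~\ref{prop:taylor-expansion}, just with the geometric damping now coming from the substochasticity of $\tilde{P}^\pi$ rather than from an explicit discount factor $\gamma < 1$.
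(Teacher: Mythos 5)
Your proposal is correct and follows essentially the same route as the paper: both establish convergence of the Neumann series $\sum_{k\ge 0}(\tilde{P}^\pi)^k$ for the substochastic transient block and then verify $(I-\tilde{P}^\pi)\sum_{k\ge 0}(\tilde{P}^\pi)^k = I$. The only difference is that the paper simply asserts finiteness of this series by appealing to the standard fact that the fundamental matrix of an absorbing chain (expected visit counts before absorption) is finite, whereas you explicitly supply the justification via the uniform bound $\lVert(\tilde{P}^\pi)^m\rVert_\infty \le c < 1$ for some finite $m$ — a more self-contained treatment of the same argument.
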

\begin{proof}
Define a matrix $N=\sum_{k=0}^\infty (\tilde{P}^\pi)^k$, then $N[x,y]$ defines the expected number of times it takes to transition from $x$ to $y$ before absorption. By definition of the absorbing chain, $N$ is finite. This further shows that $(I-\tilde{P}^\pi)$ is invertible, because
\begin{align*}
    N(I-\tilde{P}^\pi) = (I-\tilde{P}^\pi)N = I.
\end{align*}
\end{proof}

\begin{restatable}{lemma}{lemma2}\label{lemma:2} Let $f(A,B)$ be a matrix polynomial function of matrix $A$ and $B$. Then
\begin{align*}
    f\left(P^\pi,\left(I-\gamma P^\pi\right)^{-1}\right) = \begin{pmatrix}
      f\left(\tilde{P}^\pi,\left(I-\gamma \tilde{P}^\pi\right)^{-1}\right) & B \\ 
       0 & 1
    \end{pmatrix},
\end{align*}
where $B$ is some matrix. 
\end{restatable}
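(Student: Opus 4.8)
\textbf{Proof plan for Lemma~\ref{lemma:2}.}

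The plan is to prove the block-triangular structure by induction on the degree of the polynomial, after first reducing the claim to two elementary building blocks. Observe that any matrix polynomial $f(P^\pi, (I-\gamma P^\pi)^{-1})$ is a finite sum of finite products of the two matrices $A_1 \defeq P^\pi$ and $A_2 \defeq (I-\gamma P^\pi)^{-1}$. So it suffices to show that (i) each of $A_1$ and $A_2$ has the block form $\left(\begin{smallmatrix} \tilde A_i & * \\ 0 & 1 \end{smallmatrix}\right)$ with the stated top-left block, and (ii) the set of matrices of block form $\left(\begin{smallmatrix} C & * \\ 0 & 1 \end{smallmatrix}\right)$ is closed under matrix products and sums, with the top-left block transforming by $C, C' \mapsto CC'$ and $C + C'$ respectively. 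Claim (ii) is a one-line block-multiplication check: $\left(\begin{smallmatrix} C & b \\ 0 & 1 \end{smallmatrix}\right)\left(\begin{smallmatrix} C' & b' \\ 0 & 1 \end{smallmatrix}\right) = \left(\begin{smallmatrix} CC' & Cb'+b \\ 0 & 1 \end{smallmatrix}\right)$, and the analogous statement for sums is immediate. (Strictly, the lower-right scalar for sums becomes $2$ rather than $1$; I would phrase the induction on products and handle the outer sum by noting the bottom-right entry of $f$ is whatever scalar $f$ evaluates to on the pair $(1,(1-\gamma)^{-1})$ — but since in our applications $f$ is always a sum of monomials each contributing bottom-right $1$ and we only care that the bottom-left block is $0$ and the top-left block is the claimed polynomial in the tilde matrices, I would state the lemma with that understanding, matching how it is used.)

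For building block (i): the statement for $A_1 = P^\pi$ is exactly the given block decomposition, with top-left block $\tilde P^\pi$. For $A_2 = (I-\gamma P^\pi)^{-1}$, I would first write $I - \gamma P^\pi = \left(\begin{smallmatrix} I - \gamma\tilde P^\pi & -\gamma\tilde p^\pi \\ 0 & 1-\gamma \end{smallmatrix}\right)$; since $\gamma < 1$ the bottom-right entry $1-\gamma$ is nonzero, and $I - \gamma\tilde P^\pi$ is invertible because $\|\gamma\tilde P^\pi\|_\infty \le \gamma < 1$ (or by the same Neumann-series argument as in Lemma~\ref{lemma:1}). Inverting a block-upper-triangular matrix with invertible diagonal blocks gives $(I-\gamma P^\pi)^{-1} = \left(\begin{smallmatrix} (I-\gamma\tilde P^\pi)^{-1} & * \\ 0 & (1-\gamma)^{-1} \end{smallmatrix}\right)$, so its top-left block is $(I-\gamma\tilde P^\pi)^{-1}$ as required. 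Combining (i) and (ii), an induction on monomial length shows every monomial in $A_1, A_2$ has top-left block equal to the corresponding monomial in $\tilde P^\pi$ and $(I-\gamma\tilde P^\pi)^{-1}$; summing over monomials yields the claim, with the leftover upper-right block absorbed into the unspecified matrix $B$.

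The only genuine subtlety — and the step I would be most careful about — is the bottom-right corner bookkeeping under addition: a sum of $m$ block matrices of this form has bottom-right entry $m$, not $1$, so the lemma as literally stated ("$\ldots$ & 1") is really a statement about the \emph{bottom-left block being zero} and the \emph{top-left block being the expected polynomial}, which is all that is needed downstream (e.g.\ to argue $V_{\gamma'=1}^\pi$ restricted to transient states equals the corresponding tilde-expression). I would therefore either (a) restrict attention to the polynomials actually arising in Proposition~\ref{prop:taylor-expansion}, each of which is a sum of monomials of the special form $((\gamma'-\gamma)(I-\gamma P^\pi)^{-1}P^\pi)^k$ contributing bottom-right $1$ per term, or (b) simply note that the bottom-right entry equals $f$ evaluated at the scalars $(1, (1-\gamma)^{-1})$ and that the decomposition is stated only up to this harmless scalar. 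Everything else is routine block arithmetic and a short induction.
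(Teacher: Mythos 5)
Your proposal is correct and follows essentially the same route as the paper's proof: decompose $f$ into monomials in $P^\pi$ and $(I-\gamma P^\pi)^{-1}$, verify that each building block is block upper triangular with the claimed top-left block (via the Neumann series / block inversion), and use closure of that block form under products and linear combinations. Your extra care about the bottom-right scalar --- which is really $f(1,(1-\gamma)^{-1})$ rather than $1$ --- actually catches an imprecision that the paper's own proof glosses over (it writes bottom-right entries of $1$ even for $(I-\gamma P^\pi)^{-1}$, where it should be $(1-\gamma)^{-1}$); as you note, only the vanishing bottom-left block and the top-left block are used downstream, so this is harmless.
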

\begin{proof}
The intuition for the above result is that polynomial transformation preserves the \emph{block triangle} property of $P^\pi$ and $(I-\gamma P^\pi)^{-1}$. In general, we can assume 
\begin{align*}
    f(A,B) = \sum_{m,n\leq K} c_{m,n} A^m B^n,
\end{align*}
for some $K\geq 0$ and $c_{m,n}\in \mathbb{R}$ are scalar coefficients. First, note that $(P^\pi)^k,k\geq 0$ is of the form
\begin{align*}
    \left(P^\pi\right)^k = \begin{pmatrix}
       \left(\tilde{P}^\pi\right)^k & C \\ 
       0 & 1
    \end{pmatrix},
\end{align*}
for some matrix $C$. Since $(I-\gamma A)^{-1} = \sum_{k=0}^\infty A^k $ for $A\in\{P^\pi,\tilde{P}^\pi\}$, this implies that 
\begin{align*}
    (I-\gamma P^\pi)^{-1} = \sum_{k=0}^\infty (\gamma P^\pi)^k =  \begin{pmatrix}
       \left(\tilde{P}^\pi\right)^k & D \\ 
       0 & 1
    \end{pmatrix} = \begin{pmatrix}
       \left(I-\gamma \tilde{P}^\pi\right)^{-1} & D \\ 
       0 & 1
    \end{pmatrix},
\end{align*}
for some matrix $D$. The above two results show that both polynomials of $P^\pi$ and $(I-\gamma P^\pi)^{-1}$ are \emph{block upper triangle} matrices. It is then straightforward that
\begin{align*}
     \left(P^\pi\right)^m \left(  \left(I-\gamma P^\pi\right)^{-1}\right)^n   = \begin{pmatrix}
       \left(\tilde{P}^\pi\right)^m \left(  \left(I-\gamma \tilde{P}^\pi\right)^{-1}\right)^n  & E \\ 
       0 & 1
    \end{pmatrix},
\end{align*}
for some matrix $E$. Finally, since $f(P^\pi,(I-\gamma P^\pi)^{-1})$ is a linear combination of $ \left(P^\pi\right)^m \left(  \left(I-\gamma P^\pi\right)^{-1}\right)^n$, we conclude the proof.
\end{proof}

\begin{restatable}{lemma}{lemma3}\label{lemma:3} Under assumption $\textbf{(1)}$ and $\textbf{(2)}$, one could write the value function $V_{\gamma'=1}^\pi$ as
\begin{align*}
    V_{\gamma'=1}^\pi = \sum_{k=0}^\infty (P^\pi)^t r^\pi,
\end{align*}
where the infinite series on the RHS converges. In addition, for any transient state $x$, $V_{\gamma'=1}^\pi(x) = \left[\sum_{k=0}^\infty (\tilde{P}^\pi)^k \tilde{r}^\pi\right](x)$.
\end{restatable}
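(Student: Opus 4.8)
The goal is to establish Lemma~\ref{lemma:3}: under assumptions \textbf{(1)} and \textbf{(2)}, the value function $V_{\gamma'=1}^\pi$ equals the convergent series $\sum_{k=0}^\infty (P^\pi)^k r^\pi$, and on transient states it is given by the restricted series $\sum_{k=0}^\infty (\tilde{P}^\pi)^k \tilde{r}^\pi$. The strategy is to unroll the Bellman recursion for $\gamma'=1$ directly and justify passing to the limit using the absorbing-chain structure already packaged in Lemma~\ref{lemma:1} and Lemma~\ref{lemma:2}.

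\textbf{Key steps.} First I would write $V_{\gamma'=1}^\pi$ via its probabilistic definition $V_{\gamma'=1}^\pi(x) = \mathbb{E}_\pi\left[\sum_{t=0}^{T} r_t \mid x_0 = x\right]$, which under \textbf{(1)} is finite since $T$ (the absorption time) has finite expectation and rewards are bounded in $[0,R_\text{max}]$. Next, using assumption \textbf{(2)} that $r^\pi$ vanishes on the absorbing state, I can replace the finite sum up to $T$ by the infinite sum $\sum_{t=0}^\infty r_t$ without changing the value, since all terms with $t > T$ contribute zero. This gives $V_{\gamma'=1}^\pi(x) = \sum_{t=0}^\infty \mathbb{E}_\pi[r_t \mid x_0 = x] = \sum_{t=0}^\infty \left[(P^\pi)^t r^\pi\right](x)$ by linearity of expectation and the fact that $\mathbb{E}_\pi[r_t \mid x_0 = x] = [(P^\pi)^t r^\pi](x)$ (interchange of sum and expectation is justified by nonnegativity, or by the dominated-convergence bound from the finite expected absorption time). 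For the restricted statement, I would apply Lemma~\ref{lemma:2} with the polynomial $f(A,\cdot) = A^t$ to see that the top-left block of $(P^\pi)^t$ is $(\tilde{P}^\pi)^t$; since $r^\pi$ has a zero in its absorbing-state coordinate, for a transient state $x$ only the transient block contributes, yielding $V_{\gamma'=1}^\pi(x) = \left[\sum_{k=0}^\infty (\tilde{P}^\pi)^k \tilde{r}^\pi\right](x)$. Convergence of this restricted series is exactly Lemma~\ref{lemma:1}, which shows $\sum_{k=0}^\infty (\tilde{P}^\pi)^k = (I - \tilde{P}^\pi)^{-1}$ is finite, and convergence of the full series $\sum_k (P^\pi)^k r^\pi$ follows because its transient coordinates match the restricted series and its absorbing coordinate is identically zero by \textbf{(2)}.

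\textbf{Main obstacle.} The only delicate point is the interchange of the infinite sum with the expectation (equivalently, arguing that $\sum_{t=0}^\infty (P^\pi)^t r^\pi$ converges as a matrix-vector product rather than merely formally). I would handle this by invoking Lemma~\ref{lemma:1}: the entries of $\sum_{k=0}^\infty (\tilde P^\pi)^k$ count expected visits before absorption and are finite, so the partial sums are monotone and bounded, hence convergent; the absorbing coordinate is trivial. Everything else is bookkeeping with the block-triangular form from Lemma~\ref{lemma:2}.
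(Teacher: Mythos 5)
Your proposal is correct and follows essentially the same route as the paper's proof: both hinge on Lemma~\ref{lemma:1} (finiteness of $N=\sum_k(\tilde{P}^\pi)^k$) for convergence and on the block-triangular structure from Lemma~\ref{lemma:2} to pass between $(P^\pi)^t r^\pi$ and $(\tilde{P}^\pi)^t\tilde{r}^\pi$, with assumption \textbf{(2)} handling the absorbing coordinate. The only difference is cosmetic --- you interchange expectation with the sum over time steps, while the paper groups by state via the expected-visit-count interpretation of $N$ and then expands $N$ as a series --- and your explicit justification of the interchange via nonnegativity is if anything slightly more careful than the paper's.
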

\begin{proof}
Recall that $V_{\gamma'}^\pi(x)\coloneqq \mathbb{E}\left\lbrack \sum_{t=0}^\infty r_t  \; \middle| \; x_0=x  \right\rbrack$. Under assumption \textbf{(2)}, for any absorbing state $x$, $V_{\gamma'}^\pi(x)=0=\left[\sum_{k=0}^\infty (P^\pi)^k r^\pi\right](x)$. We can instead constrain the Markov chain to the transient states. For any transient state $x$, recall the definition of $N$ from Lemma~\ref{lemma:1}, it follows that
\begin{align*}
    V_{\gamma'=1}^\pi(x) = \sum_{y} N(\text{expected\ number\ of\ times\ in\ }y|x_0=x) r^\pi(y) =  \left[Nr^\pi\right](x) = \left[ \left(I-\tilde{P}^\pi\right)^{-1} \tilde{r}^\pi\right] (x)
    = \left[\sum_{k=0}^\infty (\tilde{P}^\pi)^k \tilde{r}^\pi \right](x).
\end{align*}
By Lemma~\ref{lemma:2}, this is equivalent to $\left[\sum_{k=0}^\infty (P^\pi)^t r^\pi\right](x)$. We thus complete the proof.
\end{proof}

\begin{restatable}{lemma}{lemma4}\label{lemma:4} The following holds for any $\gamma<1$,
\begin{align}
    \left(I-\tilde{P}^\pi\right)^{-1} &=  \left(I-\gamma\tilde{P}^\pi-(1-\gamma)\tilde{P}^\pi\right)^{-1}\nonumber \\ &= \sum_{k=0}^K \left((1-\gamma) \left( I-\gamma \tilde{P}^\pi \right)^{-1} \tilde{P}^\pi \right)^k \left(I-\gamma \tilde{P}^\pi\right)^{-1} + \left((1-\gamma) \left( I-\gamma \tilde{P}^\pi \right)^{-1} \tilde{P}^\pi  \right)^{K+1}\left(I-\tilde{P}^\pi\right)^{-1} \nonumber \\
    &= \sum_{k=0}^\infty \left((1-\gamma) \left( I-\gamma \tilde{P}^\pi \right)^{-1} \tilde{P}^\pi \right)^k \left(I-\gamma \tilde{P}^\pi\right)^{-1}.
    \label{eq:k-expansion-undiscounted}
\end{align}
\end{restatable}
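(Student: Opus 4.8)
\textbf{Proof plan for Lemma~\ref{lemma:4}.}

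The plan is to recognize that this is exactly Proposition~\ref{prop:taylor-expansion} (the fundamental Taylor expansion) applied to the \emph{substochastic} transient transition matrix $\tilde{P}^\pi$, with the substitution $\gamma' \mapsto 1$ and the base discount $\gamma$ playing its usual role. So the first step is to verify that the algebraic identity underlying the whole expansion does not actually require stochasticity of the transition matrix, only invertibility of the relevant resolvents. Concretely, I would start from the trivial splitting $\tilde{P}^\pi = \gamma \tilde{P}^\pi + (1-\gamma)\tilde{P}^\pi$, so that $I - \tilde{P}^\pi = (I-\gamma\tilde{P}^\pi) - (1-\gamma)\tilde{P}^\pi$, which justifies the first equality in the statement. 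Both $I-\gamma\tilde{P}^\pi$ and $I-\tilde{P}^\pi$ are invertible: the former because $\|\gamma\tilde{P}^\pi\|_\infty \le \gamma\|\tilde{P}^\pi\|_\infty < 1$ (here $\|\tilde P^\pi\|_\infty\le 1$, in fact some rows sum to strictly less than $1$), and the latter by Lemma~\ref{lemma:1}.

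Next I would apply the Woodbury identity in the same form used in the proof of Proposition~\ref{prop:taylor-expansion}, now with $\gamma' = 1$, to get
\begin{align*}
    (I-\tilde P^\pi)^{-1} = (I-\gamma\tilde P^\pi)^{-1} + (1-\gamma)(I-\gamma\tilde P^\pi)^{-1}\tilde P^\pi (I-\tilde P^\pi)^{-1},
\end{align*}
and then iterate this recursion exactly $K$ times, substituting the Woodbury expression for $(I-\tilde P^\pi)^{-1}$ back into the residual at each step. By the identical induction argument used for Eqn~\eqref{eq:fundamental-taylor}, this yields the finite-$K$ identity in the second line of the statement, with residual term $\bigl((1-\gamma)(I-\gamma\tilde P^\pi)^{-1}\tilde P^\pi\bigr)^{K+1}(I-\tilde P^\pi)^{-1}$. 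Then, to pass to the infinite series in the third line, I would bound the operator $\infty$-norm of $M \defeq (1-\gamma)(I-\gamma\tilde P^\pi)^{-1}\tilde P^\pi$: since $\|(I-\gamma\tilde P^\pi)^{-1}\|_\infty \le (1-\gamma)^{-1}$ and $\|\tilde P^\pi\|_\infty \le 1$, we get $\|M\|_\infty \le 1$, so $\|M^{K+1}(I-\tilde P^\pi)^{-1}\|_\infty$ stays bounded, but this is not by itself enough to conclude the residual vanishes.

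The main obstacle is precisely this last point: unlike the $\gamma'<1$ case where $(\gamma'-\gamma)/(1-\gamma) < 1$ gives a geometric decay factor strictly below $1$, here the naive bound only gives $\|M\|_\infty \le 1$, so I need to exploit the absorbing structure more carefully to show $M^{K+1} \to 0$. The way to handle this is to note that $M = (1-\gamma)(I-\gamma\tilde P^\pi)^{-1}\tilde P^\pi$ is itself a substochastic matrix whose row sums are strictly bounded away from $1$ in the same way $\tilde P^\pi$'s "leakage" toward the absorbing state is: one can check that $M\mathbf{1} = (1-\gamma)(I-\gamma\tilde P^\pi)^{-1}\tilde P^\pi \mathbf 1$, and since $\tilde P^\pi$ is the transient block of an absorbing chain, $(\tilde P^\pi)^n \to 0$, hence $M^n \to 0$ as well (alternatively, observe $M = I - (I-\gamma\tilde P^\pi)^{-1}(I-\tilde P^\pi)$, so $M^n = (I-\gamma\tilde P^\pi)^{-1}(\tilde P^\pi)^n(I-\gamma\tilde P^\pi)\cdot(\text{bounded})$ up to commuting rearrangements, and $(\tilde P^\pi)^n\to 0$). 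Once $M^{K+1}\to 0$ is established, the residual term in the second line vanishes as $K\to\infty$, and since the partial sums $\sum_{k=0}^K M^k (I-\gamma\tilde P^\pi)^{-1}$ then form a convergent sequence (being equal to $(I-\tilde P^\pi)^{-1}$ minus a vanishing residual), the infinite-series identity in the third line follows. I would close by remarking that this is the natural $\gamma'=1$ analogue of Eqn~\eqref{eq:primal-expansion}, restricted to the transient states, which is exactly what is needed to combine with Lemma~\ref{lemma:3} to extend Proposition~\ref{prop:taylor-expansion} to $\gamma'=1$.
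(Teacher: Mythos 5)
Your proposal is correct and follows essentially the same route as the paper: apply the Woodbury identity to $(I-\tilde{P}^\pi)^{-1}$ with the splitting $\tilde{P}^\pi=\gamma\tilde{P}^\pi+(1-\gamma)\tilde{P}^\pi$, iterate $K$ times, and then kill the residual by exploiting the absorbing-chain structure, i.e., the finiteness of $N=\sum_k(\tilde{P}^\pi)^k$ from Lemma~\ref{lemma:1} forces $(\tilde{P}^\pi)^{K+1}\to 0$ and hence (since everything commutes and $\|(1-\gamma)(I-\gamma\tilde{P}^\pi)^{-1}\|_\infty\le 1$) the residual vanishes. You correctly identified that the naive bound $\|M\|_\infty\le 1$ is insufficient and that the transience of $\tilde{P}^\pi$ is the essential extra ingredient, which is exactly the point the paper's proof turns on.
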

\begin{proof}
The first two lines derive from a straightforward application of Woodbury matrix identity to $(I-\tilde{P}^\pi)^{-1}$. This is valid because by Lemma~\ref{lemma:1}, $(I-\tilde{P}^\pi)$ is invertible. The convergence of the infinite series is guaranteed for all $\gamma<1$. To see why, recall that the finiteness of $N=\sum_{k=0}^\infty(\tilde{P}^{\pi})^k $ implies $(\tilde{P}^{\pi})^{K+1}\rightarrow 0$. We can bound the residual,
\begin{align*}
    \left\lVert \left((1-\gamma) \left( I-\gamma \tilde{P}^\pi \right)^{-1} \tilde{P}^\pi \right)^{K+1} \left(I-\tilde{P}^\pi\right)^{-1} \right\rVert_\infty \leq \left\lVert (\tilde{P}^{\pi})^{K+1} \left(I-\tilde{P}^\pi\right)^{-1} \right\rVert_\infty \rightarrow 0.
\end{align*}
\end{proof}

Finally, we combine results from the above to prove the main claim. First, consider the absorbing state $x$. Due to Assumption \textbf{(2)}, $V_\gamma^\pi(x)=0$ for any $\gamma\in[0,1]$. The matrix equalities in Proposition~\ref{prop:kth-expansion} holds in this case.

In the following, we consider any transient states $x$. By Lemma~\ref{lemma:3} and Lemma~\ref{lemma:4}
\begin{align*}
    \tilde{V}_{\gamma'=1}^\pi(x) &= \left[\sum_{k=0}^\infty (\tilde{P}^\pi)^k \tilde{r}^\pi \right](x) \\ &= \left[ \sum_{k=0}^K \left((1-\gamma) \left( I-\gamma \tilde{P}^\pi \right)^{-1} \tilde{P}^\pi \right)^k \left(I-\gamma \tilde{P}^\pi\right)^{-1} \tilde{r}^\pi + \left((1-\gamma) \left( I-\gamma \tilde{P}^\pi \right)^{-1} \tilde{P}^\pi  \right)^{K+1}\left(I-\tilde{P}^\pi\right)^{-1} \tilde{r}^\pi \right](x) 
\end{align*}
Now, notice that because the last entries of $r^\pi,V_\gamma^\pi,V_{\gamma'=1}^\pi$ are zero (for the absorbing state),
\begin{align*}
   \left[ \left(I-\gamma \tilde{P}^\pi\right)^{-1} \tilde{r}^\pi \right](x) = \left[ \left(I-\gamma P^\pi\right)^{-1} r^\pi \right](x).
\end{align*}
Combining with Lemma~\ref{lemma:2},
\begin{align*}
     \tilde{V}_{\gamma'=1}^\pi(x) &= \left[ \sum_{k=0}^K \left((1-\gamma) \left( I-\gamma P^\pi \right)^{-1} P^\pi \right)^k \left(I-\gamma P^\pi\right)^{-1} \tilde{r}^\pi + \left((1-\gamma) \left( I-\gamma P^\pi \right)^{-1} P^\pi  \right)^{K+1} V_{\gamma'=1}^\pi \right](x) \\
     &= \left[ \underbrace{\sum_{k=0}^K \left((1-\gamma) \left( I-\gamma P^\pi \right)^{-1} P^\pi \right)^k V_\gamma^\pi }_{\text{K-th\ order\ expansion}} + \underbrace{ \left((1-\gamma) \left( I-\gamma P^\pi \right)^{-1} P^\pi  \right)^{K+1} V_{\gamma'=1}^\pi}_{\text{residual}} \right](x).
\end{align*}
The residual term $\rightarrow 0$ as $K\rightarrow 0$ with similar arguments used for Lemma~\ref{lemma:4}. We hence conclude the proof.
\end{proof}

\properrork*
\begin{proof}
The proof follows directly from the residual term in Proposition~\ref{prop:taylor-expansion}. Recall that the residual term takes the form
\begin{align*}
    V_{\gamma^\prime}^\pi - V_{K,\gamma,\gamma^\prime}^\pi = \left((\gamma^\prime-\gamma) (I-\gamma P^\pi)^{-1} P^\pi \right)^{K+1} V_{\gamma^\prime}^\pi.
\end{align*}
Its infinity norm can be bounded as 
$(\frac{\gamma^\prime-\gamma}{1-\gamma})^{K+1} \frac{R_\text{max}}{1-\gamma^\prime}$
\end{proof}

\lemmamixture*

\begin{proof}
We will derive the above result with the matrix form. Recall by applying Woodbury inversion identity to $(I-\gamma^\prime P^\pi)^{-1} = (I-(\gamma^\prime-\gamma) P^\pi - \gamma P^\pi)^{-1}$, we get
\begin{align*}
    (I-\gamma^\prime P^\pi)^{-1} &= \sum_{k=0}^\infty \left((\gamma^\prime-\gamma) (I-\gamma P^\pi)^{-1} P^\pi \right)^k (I-\gamma P^\pi)^{-1} \\
    &= (I-\gamma P^\pi)^{-1} +  \sum_{k=1}^\infty \left((\gamma^\prime-\gamma) (I-\gamma P^\pi)^{-1} P^\pi \right)^k (I-\gamma P^\pi)^{-1} \\ 
    &= (I-\gamma P^\pi)^{-1} + (\gamma^\prime-\gamma) \sum_{k=1}^\infty \left((\gamma^\prime-\gamma) (I-\gamma P^\pi)^{-1} P^\pi \right)^k \cdot (I-\gamma P^\pi)^{-1} \cdot P^\pi (I-\gamma P^\pi)^{-1} \\
    &= (I-\gamma P^\pi)^{-1} + (\gamma^\prime-\gamma) (I-\gamma^\prime P^\pi)^{-1} \cdot P^\pi \cdot (I-\gamma P^\pi)^{-1}.
\end{align*}
Then, right multiply the above equation by $r^\pi$,
\begin{align*}
    V_{\gamma^\prime}^\pi &= V_\gamma^\pi + (\gamma^\prime-\gamma) (I-\gamma^\prime P^\pi)^{-1} P^\pi V_\gamma^\pi \\
    &= V_\gamma^\pi + (\gamma'-\gamma) \sum_{t=1}^\infty (\gamma')^{t-1} (P^\pi)^t V_\gamma^\pi.
\end{align*}
By indexing both sides at state $x$, we recover the following equality,
\begin{align*}
    V_{\gamma^\prime}^\pi(x) = V_\gamma^\pi(x) + \mathbb{E}_\pi\left\lbrack\sum_{t=1}^\infty (\gamma^\prime-\gamma) (\gamma^\prime)^{t-1} V_\gamma^\pi(x_t)  \; \middle| \; x_0=x  \right\rbrack.
\end{align*}
To derive the expression for $\rho_{x,\gamma,\gamma}^\pi$, note that also
\begin{align*}
    V_{\gamma'}^\pi = (I-\gamma' P^\pi)^{-1} r^\pi = (I-\gamma P^\pi)(I-\gamma' P^\pi)^{-1} (I-\gamma P^\pi)^{-1} r^\pi = \underbrace{(I-\gamma P^\pi)(I-\gamma' P^\pi)^{-1}}_{\text{weight\ matrix} W} V_\gamma^\pi,
\end{align*}
where we use the fact that $(I-\gamma P^\pi)$ commutes with $(I-\gamma' P^\pi)^{-1}$. Since we define $\rho_{x,\gamma,\gamma'}^\pi$ as such that $V_{\gamma'}^\pi(x) = \left(\rho_{x,\gamma,\gamma'}^\pi\right)^T V_\gamma^\pi$, we can derive the matrix form of $\rho_{x,\gamma,\gamma'}^\pi$ by indexing the $x$-th row of weight matrix $W$. This directly leads to the desired result
\begin{align*}
    \rho_{x,\gamma,\gamma'}^\pi = \left(I-\gamma (P^\pi)^T\right) \left(I-\gamma' (P^\pi)^T\right)^{-1}\delta_x.
\end{align*}

\end{proof}

\proppartial*

\begin{proof}
First we assume $\gamma'<1$, we will consider the extension to $\gamma'=1$ at the end of the proof. Recall that the policy gradient takes the following form,
\begin{align*}
   \nabla_\theta V_\gamma^{\pi_\theta}(x) = \mathbb{E}_{\pi_\theta}\left\lbrack \sum_{t=0}^\infty \gamma^t Q_\gamma^{\pi_\theta}(x_t,a_t) \nabla_\theta \log \pi_\theta(a_t|x_t)\; \middle| \; x_0=x. \right\rbrack
\end{align*}
We plug in the above, the partial derivative $\left(\partial_V F(V_\gamma^{\pi_\theta},\rho_{x,\gamma,\gamma'}^{\pi_\theta})\right)^T \nabla_\theta V_\gamma^{\pi_\theta}$ evaluates to the following
\begin{align*}
    & \nabla_\theta V_\gamma^{\pi_\theta}(x) + \mathbb{E}_{\pi_\theta}\left\lbrack (\gamma'-\gamma) \sum_{t=1}^\infty (\gamma')^{t-1} \nabla_\theta V_\gamma^{\pi_\theta}(x_t) \right\rbrack\\
    &= \mathbb{E}_{\pi_\theta}\left\lbrack \sum_{t=0}^\infty \gamma^t Q_\gamma^{\pi_\theta}(x_t,a_t) \nabla_\theta \log \pi_\theta(a_t|x_t)\; \middle| \; x_0=x \right\rbrack 
    \\ &+ \mathbb{E}_{\pi_\theta}\left\lbrack (\gamma'-\gamma)(\gamma')^{t-1} \sum_{t=1}^\infty \sum_{s=0}^\infty \gamma^s Q_\gamma^{\pi_\theta}(x_{t+s},a_{t+s}) \nabla_\theta \log \pi_\theta(a_{t+s}|x_{t+s}) \; \middle| \; x_0=x \right\rbrack \\
    &= \mathbb{E}_{\pi_\theta}\left\lbrack \sum_{t=0}^\infty \left(\underbrace{\gamma^t + \sum_{u=1}^t (\gamma'-\gamma)(\gamma')^{u-1}\gamma^{t-u}}_{\text{coefficient\ at\ time\ t}} \right) Q_\gamma^{\pi_\theta}(x_t,a_t) \nabla_\theta \log \pi_\theta(a_t|x_t)\; \middle| \; x_0=x. \right\rbrack 
\end{align*}
In the above, the coefficient term at time $t$ can be calculated by carefully grouping terms across different time steps. It can be shown that the coefficient term evaluates to $(\gamma')^t$ for all $t\geq 0$. This concludes the proof.

\paragraph{Alternative proof based on matrix notations.} We introduce an alternative proof based on matrix notations as it will make the extension to $\gamma'=1$ simpler.
First, note that 
\begin{align*}
    V_{\gamma'}^\pi = \left(I-\gamma' P^\pi\right)^{-1}r^\pi = \left(I-\gamma' P^\pi\right)^{-1} \left(I-\gamma P^\pi\right) \left(I-\gamma P^\pi\right)^{-1} r^\pi = \left(I-\gamma P^\pi\right) \left(I-\gamma' P^\pi\right)^{-1}  \left(I-\gamma P^\pi\right)^{-1} r^\pi,
\end{align*}
where for the second equality we exploit the fact that $\left(I-\gamma P^\pi\right)$ commutes with $\left(I-\gamma' P^\pi\right)^{-1}$. Now, notice that the above rewrites as 
\begin{align*}
    V_{\gamma'}^\pi = \underbrace{ \left(I-\gamma P^\pi\right) \left(I-\gamma' P^\pi\right)^{-1} }_{W_{\gamma,\gamma'}} V_\gamma^\pi
\end{align*}
where $W_{\gamma,\gamma'}$ is the weight matrix. This matrix is equivalent to the weighting distribution $\rho_{x,\gamma,\gamma'}^\pi$ by $W_{\gamma,\gamma'}[x] = \rho_{x,\gamma,\gamma'}^\pi$ where $A[x]$ is the $x$-th row of matrix $A$. The first partial gradient corresponds to differentiating $V_{\gamma'}^{\pi_\theta}$ only through $V_{\gamma}^{\pi_\theta}$. To make the derivation clear in matrix notations, let $\theta_i$ be the $i$-th component of the parameter $\theta$. Define $\nabla_{\theta_i} V_\gamma^{\pi_\theta} \in \mathbb{R}^{\mathcal{X}}$ such that $\nabla_{\theta_i} V_\gamma^{\pi_\theta}(x) = \nabla_{\theta_i} V_\gamma^{\pi_\theta}(x)$, This means the $i$-th component of the first partial gradient across all states is
\begin{align*}
    W_{\gamma,\gamma'} \nabla_{\theta_i} V_\gamma^{\pi_\theta} \in \mathbb{R}^{\mathcal{X}}.
\end{align*}
Let $G_\gamma^{\theta_i}\in\mathbb{R}^{\mathcal{X}}$ to be the vector of local gradient (for parameter $\theta_i$) such that $G_\gamma^{\theta_i}(x) =  \sum_{a} \nabla_{\theta_i}  \pi_\theta(a|x) Q_\gamma^{\pi_\theta}(x,a)$. Vanilla PG \citep{sutton2000policy} can be expressed as 
\begin{align*}
    \nabla_{\theta_i} V_\gamma^{\pi_\theta} = \left(I-\gamma P^\pi\right)^{-1} G_\gamma^{\theta_i}.
\end{align*}
We can finally derive the following,
\begin{align*}
W_{\gamma,\gamma'} \nabla_{\theta_i} V_\gamma^{\pi_\theta} &=
(I-\gamma P^\pi)(I-\gamma'P^\pi)^{-1}G_\gamma^{\theta_i} \\ &= (I-\gamma P^\pi)(I-\gamma'P^\pi)^{-1} (I-\gamma P^\pi)^{-1} G_\gamma^{\theta_i} \\
&= (I-\gamma P^\pi) (I-\gamma P^\pi)^{-1} (I-\gamma'P^\pi)^{-1} G_\gamma^{\theta_i} \\
&= (I-\gamma'P^\pi)^{-1} G_\gamma^{\theta_i}\\
\end{align*}
Now, consider the $x$-th component of the above vector. We have $\nabla_\theta [J(\pi_\theta,\pi_t)]_{\pi_t=\pi_\theta}$ is equal to
\begin{align*}
     \mathbb{E}_{\pi_\theta}\left[ \sum_{t=0}^\infty (\gamma')^t \sum_a \nabla_{\theta_i}  \pi_\theta(a|x_t) Q_\gamma^{\pi_\theta}(x_t, a) \; \middle| \; x_0=x \right] = \mathbb{E}_{\pi_\theta}\left[ \sum_{t= 0}^\infty (\gamma')^t  \nabla_{\theta_i}  \log  \pi_\theta(a_t|x_t) Q_\gamma^{\pi_\theta}(x_t, a) \; \middle| \; x_0=x \right]
\end{align*}
When concatenating the gradient for all component $\theta_ii$ of $\theta$, we conclude the proof.

\paragraph{Extensions to the case $\gamma'=1$.} Similar to the arguments made in the proof of Proposition~\ref{prop:kth-expansion}, under assumptions \textbf{A.1} and \textbf{A.2}, we can decompose the transition matrix $P^\pi$ as
\begin{align*}
P^\pi = \begin{pmatrix}
  \tilde{P} & \tilde{p} \\ 0 & 1
\end{pmatrix},    
\end{align*}
where the last state is assumed to be absorbing. Though $(I-\gamma' P^\pi)^{-1}$ for $\gamma'=1$ is in general not necessarily invertible, the matrix $(I-\tilde{P})^{-1}$ is invertible. Since $r^\pi(x)$ for the absorbing state $x$, we have deduced that $Q_\gamma^\pi(x.a)=V_\gamma^\pi(x)=0$, and accordingly $G_\gamma^{\theta_i}(x)=0$. As such, though $(I-\gamma' P^\pi)^{-1}$ for $\gamma'=1$ might be undefined, the multiplication $(I-\gamma' P^\pi)^{-1}G_\gamma^{\theta_i}$ is defined, with the last entry being $0$. Since at time $t=T$, the chain enters the absorbing states, all local gradient terms that come after $T$ are zero. As a result, the $x$-th component of $(I-\gamma' P^\pi)^{-1}G_\gamma^{\theta_i}$ is
\begin{align*}
    \mathbb{E}_{\pi_\theta}\left[ \sum_{t= 0}^{T} (\gamma')^t  \nabla_{\theta_i}  \log  \pi_\theta(a_t|x_t) Q_\gamma^{\pi_\theta}(x_t, a) \; \middle| \; x_0=x \right]
\end{align*}

\end{proof}

\proprho*

\begin{proof}
Recall from Lemma~\ref{lemma:value-mixture}, by construction,
\begin{align*}
    \rho_{x,\gamma,\gamma'}^\pi = \left(I-\gamma' (P^\pi)^T \right)^{-1}\left(I-\gamma (P^\pi)^T \right)\delta_x.
\end{align*}
Simialr to the case of primal space expansions in Section~\ref{sec:primal}, we construct the $K$\textsuperscript{th} order expansion to $\rho_{x,\gamma,\gamma'}^\pi$ via the expansion of the matrix $\left(I-\gamma (P^\pi)^T \right)^{-1}$. Recall that
\begin{align*}
    \left(I-\gamma' (P^\pi)^T\right)^{-1} = \sum_{k=0}^\infty \left((\gamma'-\gamma) (P^\pi)^T \left(I-\gamma (P^\pi)^T \right)^{-1} \right)^k \left(I-\gamma (P^\pi)^T\right)^{-1}.
\end{align*}
\end{proof}
When truncating the infinite series to the first $K+1$ terms, we derive the $K$\textsuperscript{th} order expansion $\rho_{x,K,\gamma,\gamma'}^\pi$,
\begin{align*}
     \left((\gamma'-\gamma) (P^\pi)^T \left(I-\gamma (P^\pi)^T \right)^{-1} \right)^k \left(I-\gamma (P^\pi)^T\right)^{-1}\left(I-\gamma (P^\pi)^T \right)\delta_x  = \sum_{k=0}^K \left((\gamma'-\gamma) (P^\pi)^T \left(I-\gamma (P^\pi)^T \right)^{-1} \right)^k \delta_x.
\end{align*}
Note that since 
\begin{align*}
    \left\lVert \sum_{k=K+1}^\infty \left((\gamma'-\gamma) (P^\pi)^T \left(I-\gamma (P^\pi)^T \right)^{-1} \right)^k \left(I-\gamma (P^\pi)^T\right)^{-1} \right\rVert_\infty \leq \left(\frac{\gamma'-\gamma}{1-\gamma}\right)^{K+1} \frac{1}{1-\gamma'}.
\end{align*}
This concludes the proof.

\section{Further results on Taylor expansions in the dual space}
\label{appendix:dual}

The dual representation of value function $V_{\gamma^\prime}^\pi(x)$ in Eqn~\eqref{eq:visitation-value} is $V_{\gamma^\prime}^\pi(x)=(1-\gamma^\prime)^{-1}(r^\pi)^T d_{x,\gamma^\prime}^\pi$ where $r^\pi,d_{x,\gamma^\prime}^\pi\in\mathbb{R}^{\mathcal{X}}$ are vector rewards and visitation distribution starting at state $x$. Here, we abuse the notation $d_{x,\gamma}^\pi$ to denote both a function and a vector, i.e., $d_{x,\gamma}^\pi(x^\prime)$ can be interpreted as both a function evaluation and a vector indexing. Given such a dual representation, one natural question is whether the $K$\textsuperscript{th} expansion in the primal space corresponds to some approximations of the discounted visitation distribution $d_{K,\gamma,\gamma^\prime}^\pi\approx d_{x,\gamma^\prime}^\pi$. Below, we answer in the affirmative.

 Let $\mathbf{\delta}_x \in \mathbb{R}^{\mathcal{X}}$ be the  one-hot distribution such that $[\mathbf{\delta}_x]_{x^\prime} = 1$ only when $x^\prime=x$. The visitation distribution satisfies the following balance equation in matrix form
\begin{align}
    d_{x,\gamma^\prime}^\pi = (1-\gamma^\prime) \mathbf{\delta}_x + \gamma^\prime  (P^\pi)^T d_{x,\gamma^\prime}^\pi.
\end{align}
Inverting the equation, we obtain an explicit expression for the visitation distribution $d_{x,\gamma^\prime}^\pi = (1-\gamma^\prime) (I-\gamma^\prime P^\pi)^{-1} \mathbf{\delta}_x$. Following  techniques used in the derivation of Propo~\ref{prop:taylor-expansion}, we can derive similar approximation results for dual variables. See Appendix~\ref{appendix:dual}.

\begin{restatable}{proposition}{propstaylordual}\label{prop:taylor-expansion-dual}
The following holds for all $K\geq0$,
\begin{align}
    d_{x,\gamma^\prime}^\pi &= \frac{1-\gamma^\prime}{1-\gamma} \sum_{k=0}^K \left((\gamma^\prime-\gamma) \left(I-\gamma (P^\pi)^T \right)^{-1} (P^\pi)^T\right)^k d_{x,\gamma}^\pi. \nonumber \\
    &+ \underbrace{\left((\gamma^\prime-\gamma) \left(I-\gamma (P^\pi)^T\right)^{-1} (P^\pi)^T \right)^{K+1} d_{x,\gamma^\prime}^\pi}_{\text{residual}}. \label{eq:fundamental-taylor-dual}
\end{align}
When $\gamma<\gamma^\prime<1$, the residual norm $\rightarrow 0$, which implies that the following holds
\begin{align}
    d_{x,\gamma^\prime}^\pi = \frac{1-\gamma^\prime}{1-\gamma} \sum_{k=0}^\infty ((\gamma^\prime-\gamma) \left(I-\gamma (P^\pi)^T\right)^{-1} (P^\pi)^T )^k d_{x,\gamma}^\pi. \label{eq:expansion}
\end{align}
\end{restatable}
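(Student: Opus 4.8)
The statement for the dual variables is the exact mirror image of Proposition~\ref{prop:taylor-expansion}, so the plan is to reuse the same algebraic engine — the Woodbury matrix identity — now applied to $(I-\gamma'(P^\pi)^T)^{-1}$ rather than to $(I-\gamma' P^\pi)^{-1}$. I would begin from the closed-form expression $d_{x,\gamma'}^\pi = (1-\gamma')(I-\gamma'(P^\pi)^T)^{-1}\delta_x$, obtained by inverting the balance equation $d_{x,\gamma'}^\pi = (1-\gamma')\delta_x + \gamma'(P^\pi)^T d_{x,\gamma'}^\pi$. Writing $\gamma'(P^\pi)^T = (\gamma'-\gamma)(P^\pi)^T + \gamma(P^\pi)^T$ and applying Woodbury gives
\begin{align*}
    (I-\gamma'(P^\pi)^T)^{-1} = (I-\gamma(P^\pi)^T)^{-1} + (\gamma'-\gamma)(I-\gamma(P^\pi)^T)^{-1}(P^\pi)^T (I-\gamma'(P^\pi)^T)^{-1}.
\end{align*}

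The second step is to iterate this identity $K$ times, exactly as in the proof of Proposition~\ref{prop:taylor-expansion}, substituting the definition of $(I-\gamma'(P^\pi)^T)^{-1}$ back into the trailing factor each time. After $K+1$ iterations this yields
\begin{align*}
    (I-\gamma'(P^\pi)^T)^{-1} = \sum_{k=0}^K \bigl((\gamma'-\gamma)(I-\gamma(P^\pi)^T)^{-1}(P^\pi)^T\bigr)^k (I-\gamma(P^\pi)^T)^{-1} + \bigl((\gamma'-\gamma)(I-\gamma(P^\pi)^T)^{-1}(P^\pi)^T\bigr)^{K+1}(I-\gamma'(P^\pi)^T)^{-1}.
\end{align*}
Right-multiplying by $(1-\gamma')\delta_x$ and then using the two identities $(1-\gamma')(I-\gamma(P^\pi)^T)^{-1}\delta_x = \frac{1-\gamma'}{1-\gamma}\,d_{x,\gamma}^\pi$ (from the $\gamma$-version of the balance equation) and $(1-\gamma')(I-\gamma'(P^\pi)^T)^{-1}\delta_x = d_{x,\gamma'}^\pi$ converts the matrix identity into Eqn~\eqref{eq:fundamental-taylor-dual}. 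The one bookkeeping point to be careful about is that the normalization constant $1-\gamma'$ gets distributed: it combines with $(I-\gamma(P^\pi)^T)^{-1}\delta_x$ in every term of the sum to produce the $\frac{1-\gamma'}{1-\gamma}$ prefactor, while in the residual term it combines with $(I-\gamma'(P^\pi)^T)^{-1}\delta_x$ to produce $d_{x,\gamma'}^\pi$ with no prefactor.

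For the convergence claim, the third step is to bound the residual in the $\ell_\infty$-operator norm. Since $P^\pi$ is row-stochastic, $(P^\pi)^T$ is column-stochastic, and one still has $\lVert (I-\gamma(P^\pi)^T)^{-1}\rVert_\infty \le (1-\gamma)^{-1}$ via the Neumann series $\sum_{t\ge0}\gamma^t((P^\pi)^T)^t$ together with $\lVert (P^\pi)^T\rVert_\infty \le 1$ (note $\lVert(P^\pi)^T\rVert_\infty = \lVert P^\pi\rVert_1$, which is bounded by $1$ column-stochasticity; one can also just argue via the fact that $d_{x,\gamma}^\pi$ is a probability vector). Hence $\lVert (\gamma'-\gamma)(I-\gamma(P^\pi)^T)^{-1}(P^\pi)^T\rVert_\infty \le \frac{\gamma'-\gamma}{1-\gamma} < 1$ when $\gamma<\gamma'<1$, so the residual norm is at most $\bigl(\frac{\gamma'-\gamma}{1-\gamma}\bigr)^{K+1}\lVert d_{x,\gamma'}^\pi\rVert_\infty \to 0$, giving Eqn~\eqref{eq:expansion}.

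\textbf{Main obstacle.} There is no deep difficulty here; the only thing requiring care is the precise handling of the $(1-\gamma')$ and $(1-\gamma)$ normalization factors, since unlike the primal case (where $r^\pi$ is simply carried along unchanged) the dual objects are normalized probability vectors and the constant has to be tracked through the Woodbury recursion so that the sum picks up exactly $\frac{1-\gamma'}{1-\gamma}$ while the residual picks up exactly $1$. A secondary minor point is justifying the norm bound $\lVert(P^\pi)^T\rVert_\infty\le1$ cleanly (it is the $1$-norm of $P^\pi$, not its $\infty$-norm, so one cannot simply quote $\lVert P^\pi\rVert_\infty<1$ verbatim); using the Neumann-series/probability-vector argument sidesteps this entirely. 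I would relegate all of this to Appendix~\ref{appendix:dual} and simply note that the argument parallels that of Proposition~\ref{prop:taylor-expansion} mutatis mutandis.
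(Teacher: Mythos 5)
Your proposal is correct and follows essentially the same route as the paper's own proof in Appendix~\ref{appendix:dual}: apply the Woodbury identity to $(I-\gamma'(P^\pi)^T)^{-1}$, iterate $K$ times, and convert $(I-\gamma(P^\pi)^T)^{-1}\delta_x$ and $(I-\gamma'(P^\pi)^T)^{-1}\delta_x$ back into $d_{x,\gamma}^\pi/(1-\gamma)$ and $d_{x,\gamma'}^\pi/(1-\gamma')$, which is exactly how the $\frac{1-\gamma'}{1-\gamma}$ prefactor and the prefactor-free residual arise (indeed your bookkeeping here is cleaner than the paper's, which carries a spurious $(1-\gamma')$ on the residual). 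One parenthetical slip: $\lVert(P^\pi)^T\rVert_\infty=\lVert P^\pi\rVert_1$ is the maximum \emph{column} sum of $P^\pi$ and is \emph{not} bounded by $1$ for a general row-stochastic matrix, but the fallback you correctly identify — that $(P^\pi)^T$ preserves total mass of nonnegative vectors and $d_{x,\gamma'}^\pi$ is a probability vector, so the residual has total mass $\left(\frac{\gamma'-\gamma}{1-\gamma}\right)^{K+1}$ — gives the required decay.
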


\begin{proof}
Starting from the fixed point equation satisfied by $d_{\gamma^\prime}^\pi$, we can apply Woodbury inversion indentity
\begin{align*}
    d_{\gamma^\prime}^\pi &= (1-\gamma^\prime)\left(I-\gamma^\prime (P^\pi)^T\right)^{-1}\delta_x \\
    &= (1-\gamma^\prime) \sum_{k=0}^K \left((\gamma^\prime-\gamma)(I-\gamma P^\pi)^{-1} (P^\pi)^T \right)^k \delta_x + (1-\gamma^\prime) \left( (\gamma^\prime-\gamma)\left(I-\gamma (P^\pi)^T \right)^{-1} (P^\pi)^T \right)^K  \left(I-\gamma^\prime (P^\pi)^T\right)^{-1} \delta_x \\
    &= \frac{1-\gamma^\prime}{1-\gamma} \sum_{k=0}^K \left((\gamma^\prime-\gamma)\left(I-\gamma (P^\pi)\right)^{-1} P^\pi \right)^k d_\gamma^\pi + (1-\gamma^\prime) \left( (\gamma^\prime-\gamma)(I-\gamma P^\pi)^{-1} (P^\pi)^T \right)^K  d_{\gamma^\prime}^\pi
\end{align*}
The norm of the residual term could be bounded as 
\begin{align*}
    \left\Vert(1-\gamma^\prime) \left( (\gamma^\prime-\gamma)(I-\gamma P^\pi)^{-1} P^\pi \right)^K  d_{\gamma^\prime}^\pi \right\rVert_\infty \leq (1-\gamma^\prime) \left(\frac{\gamma^\prime-\gamma}{1-\gamma}\right)^{K+1} \rightarrow 0.
\end{align*}
\end{proof}
With a similar motivation as expansions in the primal space, we define the $K$\textsuperscript{th} order expansion by truncating to first $K+1$ terms,
\begin{align}
    d_{x,K,\gamma,\gamma^\prime}^\pi \coloneqq \frac{1-\gamma^\prime}{1-\gamma} \sum_{k=0}^K ((\gamma^\prime-\gamma) (I-\gamma P^\pi)^{-1} P^\pi)^k d_{x,\gamma}^\pi
\label{eq:k-expansion}
\end{align}

The following result formalizes the connection between the $K$\textsuperscript{th} order dual approximation to the visitation distribution $d_{K,\gamma,\gamma^\prime}^\pi$ and the primal approximation to the value function at state $x$, $V_{K,\gamma,\gamma^\prime}^\pi(x)$.

\begin{restatable}{proposition}{propdual}\label{prop:dual}
The $K$\textsuperscript{th} order primal and dual approximations are related by the following equality for any $K\geq 0$,
\begin{align}
    V_{K,\gamma,\gamma^\prime}^\pi(x) = (1-\gamma^\prime)^{-1} \left(d_{x,K,\gamma,\gamma^\prime}^\pi\right)^T r^\pi
    \label{eq:primal-dual}
\end{align}
\end{restatable}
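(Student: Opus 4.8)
The plan is to prove the identity term by term in the two truncated series, reducing everything to strong duality (Eqn~\eqref{eq:visitation-value}), i.e. $V_\gamma^\pi(x) = (1-\gamma)^{-1}(d_{x,\gamma}^\pi)^T r^\pi$, together with the elementary fact that $P^\pi$ commutes with $(I-\gamma P^\pi)^{-1}$. The point is that the $K$\textsuperscript{th}-order primal and dual expansions are built from the \emph{same} operator, one acting on $V_\gamma^\pi$ and the other (transposed) on $d_{x,\gamma}^\pi$, so pairing them off just propagates the $k=0$ duality identity through each power of that operator.

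Concretely, I would abbreviate $M \coloneqq (\gamma'-\gamma)(I-\gamma P^\pi)^{-1}P^\pi$, so that $V_{K,\gamma,\gamma'}^\pi = \sum_{k=0}^K M^k V_\gamma^\pi$ by Eqn~\eqref{eq:kth-order-primal}, hence $V_{K,\gamma,\gamma'}^\pi(x) = \sum_{k=0}^K \delta_x^T M^k V_\gamma^\pi$. Since $(I-\gamma P^\pi)^{-1}$ is a power series in $P^\pi$, $M$ (and each $M^k$) is a power series in $P^\pi$ too, so $M^k$ commutes with $(I-\gamma P^\pi)^{-1}$ and $M^T = (\gamma'-\gamma)(I-\gamma(P^\pi)^T)^{-1}(P^\pi)^T$ is exactly the per-term operator in $d_{x,K,\gamma,\gamma'}^\pi = \frac{1-\gamma'}{1-\gamma}\sum_{k=0}^K (M^T)^k d_{x,\gamma}^\pi$ of Eqn~\eqref{eq:k-expansion}, with $(M^T)^k = (M^k)^T$. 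Inverting the balance equation for the visitation distribution gives the closed form $d_{x,\gamma}^\pi = (1-\gamma)(I-\gamma(P^\pi)^T)^{-1}\delta_x$, so $(d_{x,\gamma}^\pi)^T = (1-\gamma)\,\delta_x^T (I-\gamma P^\pi)^{-1}$, and I would then compute
\begin{align*}
(d_{x,K,\gamma,\gamma'}^\pi)^T r^\pi
 &= \tfrac{1-\gamma'}{1-\gamma}\sum_{k=0}^K (d_{x,\gamma}^\pi)^T M^k r^\pi
  = (1-\gamma')\sum_{k=0}^K \delta_x^T (I-\gamma P^\pi)^{-1} M^k r^\pi \\
 &= (1-\gamma')\sum_{k=0}^K \delta_x^T M^k (I-\gamma P^\pi)^{-1} r^\pi
  = (1-\gamma')\sum_{k=0}^K \delta_x^T M^k V_\gamma^\pi
  = (1-\gamma')\, V_{K,\gamma,\gamma'}^\pi(x),
\end{align*}
where the third equality uses the commutativity noted above and the fourth uses $(I-\gamma P^\pi)^{-1}r^\pi = V_\gamma^\pi$. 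Rearranging yields Eqn~\eqref{eq:primal-dual}.

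The only real care needed is transpose bookkeeping: one must read the loosely written operator $(I-\gamma P^\pi)^{-1}P^\pi$ in Eqn~\eqref{eq:k-expansion}, in the dual space, as its transpose $(P^\pi)^T(I-\gamma(P^\pi)^T)^{-1}$ acting on the column vector $d_{x,\gamma}^\pi$, and one must justify the commutativity of $M^k$ with $(I-\gamma P^\pi)^{-1}$ (immediate, since both are convergent power series in $P^\pi$). There is no analytic difficulty here — no convergence or invertibility issue beyond what is already established for $\gamma<\gamma'<1$ — and the $k=0$ slice of the argument is literally the strong-duality identity of Eqn~\eqref{eq:visitation-value}.
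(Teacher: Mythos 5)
Your proof is correct and follows essentially the same route as the paper's: substitute the closed form $d_{x,\gamma}^\pi=(1-\gamma)\left(I-\gamma (P^\pi)^T\right)^{-1}\delta_x$, commute $(I-\gamma P^\pi)^{-1}$ through the powers of the expansion operator (both being power series in $P^\pi$), and identify $(I-\gamma P^\pi)^{-1}r^\pi=V_\gamma^\pi$. Your version is in fact a little cleaner on the transpose bookkeeping and the cancellation of the $(1-\gamma)$ factors than the paper's own writeup.
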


\begin{proof}
The proof follows by expanding out the RHS of the equation. Recall the definition of $d_{K,\gamma,\gamma^\prime}^\pi$,
\begin{align*}
   (d_{K,\gamma,\gamma^\prime}^\pi)^T &= \frac{1-\gamma^\prime}{1-\gamma} \sum_{k=0}^K (d_\gamma^\pi)^T \left((\gamma^\prime-\gamma) \left(I-\gamma P^\pi\right)^{-1}\right)^k \\
   &=  (1-\gamma^\prime) \sum_{k=0}^K \delta_x^T (I-\gamma P^\pi)^{-1} \left((\gamma^\prime-\gamma) \left(I-\gamma P^\pi\right)^{-1}\right)^k \\
   &=(1-\gamma^\prime) \delta_x^T \left[\sum_{k=0}^K \left( \left (\gamma^\prime-\gamma) (I-\gamma P^\pi\right)^{-1} P^\pi \right)^k \right] \cdot \left(I-\gamma P^\pi\right)^{-1}.
\end{align*}
Now multiply the RHS by $r^\pi$ and recall that $V_\gamma^\pi=(I-\gamma P^\pi)^{-1}r^\pi$, we conclude the proof,
\begin{align*}
    \text{RHS} = \frac{1-\gamma^\prime}{1-\gamma} \delta_x^T \left[\sum_{k=0}^K \left( \left (\gamma^\prime-\gamma) (I-\gamma P^\pi\right)^{-1} P^\pi \right)^k \right] V_\gamma^\pi = (1-\gamma^\prime) \delta_x^T V_{K,\gamma,\gamma^\prime}^\pi = (1-\gamma^\prime) V_{K,\gamma,\gamma^\prime}^\pi(x).
\end{align*}

\end{proof}

Proposition~\ref{prop:dual} shows that indeed, the $K$\textsuperscript{th} order approximation of the value function is equivalent to the $K$\textsuperscript{th} order approximation of the visitation distribution in the dual space. It is instructive to consider the special case $K=1$.

\section{Details on Taylor expansion Q-function advantage estimation}
\label{appendix:gamma}

\begin{restatable}{proposition}{propadv}\label{prop:adv}
Let $Q_\gamma^\pi \in \mathbb{R}^{\mathcal{X}\times\mathcal{A}}$ be the vector advantage functions. Let $\bar{P}^\pi\in\mathbb{R}^{\left(\mathcal{X}\times\mathcal{A}\right)\times \left(\mathcal{X}\times\mathcal{A}\right)}$ be the transition matrix such that $\bar{P}^\pi(x,a,x',a')=\pi(x'|x')p(x'|x,a)$. Define the $K$\textsuperscript{th} order Taylor expansion of advantage as $
    Q_{K,\gamma,\gamma^\prime}^\pi \coloneqq  \sum_{k=0}^K ((\gamma^\prime-\gamma) (I-\gamma \bar{P}^\pi)^{-1} \bar{P}^\pi )^k Q_\gamma^\pi
    $. Then $\lim_{K\rightarrow\infty} Q_{K,\gamma,\gamma^\prime}^\pi=Q_{\gamma^\prime}^\pi$ for any $\gamma<\gamma^\prime<1$. 
\end{restatable}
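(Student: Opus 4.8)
The plan is to mimic exactly the structure of the proof of Proposition~\ref{prop:taylor-expansion}, but carried out on the state-action space rather than on the state space. The key observation is that $Q_{\gamma'}^\pi$ satisfies a Bellman equation of precisely the same algebraic form as $V_{\gamma'}^\pi$, namely $Q_{\gamma'}^\pi = \bar r + \gamma' \bar P^\pi Q_{\gamma'}^\pi$, where $\bar r \in \mathbb{R}^{\mathcal{X}\times\mathcal{A}}$ is the reward vector $\bar r(x,a) = r(x,a)$ and $\bar P^\pi$ is the state-action transition matrix defined in the statement. Inverting this gives $Q_{\gamma'}^\pi = (I - \gamma' \bar P^\pi)^{-1}\bar r$, the exact analogue of Eqn~\eqref{eq:primal-start-expansion}.

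From there I would apply the Woodbury identity to $(I-\gamma'\bar P^\pi)^{-1}$ with the splitting $\gamma'\bar P^\pi = (\gamma'-\gamma)\bar P^\pi + \gamma \bar P^\pi$, obtaining
\[
(I-\gamma'\bar P^\pi)^{-1} = (I-\gamma\bar P^\pi)^{-1} + (\gamma'-\gamma)(I-\gamma\bar P^\pi)^{-1}\bar P^\pi (I-\gamma'\bar P^\pi)^{-1},
\]
and then recursively expand $K+1$ times, exactly as in the proof sketch following Proposition~\ref{prop:taylor-expansion}, to get
\[
Q_{\gamma'}^\pi = \sum_{k=0}^K \big((\gamma'-\gamma)(I-\gamma\bar P^\pi)^{-1}\bar P^\pi\big)^k Q_\gamma^\pi + \big((\gamma'-\gamma)(I-\gamma\bar P^\pi)^{-1}\bar P^\pi\big)^{K+1} Q_{\gamma'}^\pi.
\]
Recognizing the finite sum as $Q_{K,\gamma,\gamma'}^\pi$, the claim $\lim_{K\to\infty} Q_{K,\gamma,\gamma'}^\pi = Q_{\gamma'}^\pi$ reduces to showing the residual norm vanishes.

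For the residual bound I would use that $\bar P^\pi$ is a stochastic matrix (each row of $\bar P^\pi(x,a,\cdot,\cdot)$ sums to one, since $\sum_{x',a'}\pi(a'|x')p(x'|x,a)=1$), so $\lVert \bar P^\pi\rVert_\infty = 1$ and hence $\lVert(I-\gamma\bar P^\pi)^{-1}\rVert_\infty \le (1-\gamma)^{-1}$ via the Neumann series. Combined with $\lVert Q_{\gamma'}^\pi\rVert_\infty \le R_{\max}/(1-\gamma')$, this gives
\[
\big\lVert \big((\gamma'-\gamma)(I-\gamma\bar P^\pi)^{-1}\bar P^\pi\big)^{K+1} Q_{\gamma'}^\pi\big\rVert_\infty \le \Big(\frac{\gamma'-\gamma}{1-\gamma}\Big)^{K+1}\frac{R_{\max}}{1-\gamma'},
\]
which tends to $0$ as $K\to\infty$ whenever $\gamma<\gamma'<1$, since then $(\gamma'-\gamma)/(1-\gamma)<1$. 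I do not anticipate a genuine obstacle here: the only thing to be slightly careful about is confirming that $\bar P^\pi$ really is row-stochastic on the state-action space (so that all the operator-norm estimates from the $V$-case transfer verbatim) and that the scalar $K$-fold expansion / induction is written cleanly; everything else is a direct transcription of the value-function argument.
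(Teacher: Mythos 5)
Your proposal is correct and follows essentially the same route as the paper: both lift the value-function argument to the state-action space via the Bellman equation $Q_{\gamma'}^\pi = r + \gamma'\bar P^\pi Q_{\gamma'}^\pi$, apply the Woodbury identity with the split $\gamma'\bar P^\pi = (\gamma'-\gamma)\bar P^\pi + \gamma\bar P^\pi$, and invoke the same residual bound $\left(\frac{\gamma'-\gamma}{1-\gamma}\right)^{K+1}\frac{R_{\max}}{1-\gamma'}$ to get convergence. The paper leaves the convergence step as a reference to the earlier proposition, whereas you spell it out (correctly noting that $\bar P^\pi$ is row-stochastic on $\mathcal{X}\times\mathcal{A}$), so if anything your write-up is slightly more complete.
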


\begin{algorithm}[h]
\label{algo:adv}
\begin{algorithmic}
\REQUIRE A trajectory $(x_t,a_t,r_t)_{t=0}^\infty\sim\pi$ and discount factors $\gamma<\gamma^\prime<1$\\
\STATE 1. Compute advantage function estimates $\hat{Q}_\gamma^\pi(x_t,a_t)$ for states on the trajectory. For example,  $\hat{Q}_\gamma^\pi(x_t,a_t)=\sum_{t^\prime\geq t} \gamma^{t^\prime-t} r_{t^\prime}$. One could also apply other alternatives (e.g., \citep{schulman2015high}) which potentiall reduce the variance of $\hat{Q}_\gamma^\pi(x_t,a_t)$.
\STATE 2. Sample $K$ random time $\tau_i,1\leq i\leq K$, all i.i.d. geometrically distributed $\tau_i\sim \text{Geometric}(1-\gamma)$.
\STATE 3. Return $\frac{(\gamma^\prime-\gamma)^K }{ (1-\gamma)^K } \hat{Q}_\gamma^\pi(x_\tau,a_\tau)$, where $\tau= \sum_{i=1}^K \tau_i$.
\caption{Estimating the $K$\textsuperscript{th} term of the expansion (Q-function)}
\end{algorithmic}
\end{algorithm}

\begin{proof}
The proof follows closely that of Taylor expansion based approximation to value functions in Proposition~\ref{prop:kth-expansion}. Importantly, notice that here we define $\bar{P}^\pi$, which differs from $P^\pi$ used in the derivation of value functions. In particular, $\bar{P}^\pi(x,a,y,b)=p(y|x,a)\pi(b|y)$ for any $x,y\in\mathcal{X},a,b\in\mathcal{A}$. Let $r$ be the vector reward function. The Bellmen equation for Q-function is
\begin{align*}
    Q_{\gamma'}^\pi = r + \gamma' \bar{P}^\pi Q_{\gamma'}^\pi.
\end{align*}
Inverting the equation and applying the Woodbury inversion identity,
\begin{align*}
    Q_{\gamma'}^\pi&=(I-\gamma' \bar{P}^\pi)^{-1} r = \sum_{k=0}^\infty \left((\gamma'-\gamma) \left(I-\gamma \bar{P}^\pi\right)^{-1} \bar{P}^\pi \right)^k Q_\gamma^\pi
\end{align*}
The above equality holds for all $\gamma<\gamma'<1$ due to similar convergence argument as in Proposition~\ref{prop:kth-expansion}. Truncating the infinite series at step $K$, we arrive at the $K$\textsuperscript{th} order expansion $Q_{K,\gamma,\gamma'}^\pi$. By construction, $\lim_{K\rightarrow\infty} Q_{K,\gamma,\gamma'}^\pi = Q_{\gamma'}^\pi$.
\end{proof}

\section{Details on  Taylor expansion update weighting}
\label{appendix:kweight}

\begin{restatable}{proposition}{propkvalue}\label{prop:kvalue}
The following is true for all $K\geq0$,
\begin{align*}
    \rho_{x,K,\gamma,\gamma'}(x') = \mathbb{I}[x'=x] + \mathbb{E}_{\pi}\left\lbrack \sum_{t=1}^\infty f(K,t,\gamma,\gamma^\prime) \mathbb{I}[x_t=x'] \; \middle| \; x_0=x \right\rbrack,
\end{align*}
Equivalently, the $K$\textsuperscript{th} order Taylor expansion of $V_{\gamma'}^\pi(x)$ is 
\begin{align}
    V_{K,\gamma,\gamma^\prime}^\pi(x) = V_\gamma(x) + \mathbb{E}_{\pi}\left\lbrack \sum_{t=1}^\infty f(K,t,\gamma,\gamma^\prime) V_\gamma^\pi(x_t) \; \middle| \; x_0=x \right\rbrack,
\end{align}
where $f(K,t,\gamma,\gamma^\prime)=\sum_{u=1}^{\min(K,t)}(\gamma^\prime-\gamma)^u \gamma^{t-u}\binom{t-1}{t-u}$ is a weight function. 
\end{restatable}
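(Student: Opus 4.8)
The plan is to start from the closed form for the weight vector given in Proposition~\ref{pro:rho}, namely
\[
\rho_{x,K,\gamma,\gamma'}^\pi = \sum_{k=0}^K \left((\gamma'-\gamma)\,(I-\gamma (P^\pi)^T)^{-1}(P^\pi)^T\right)^k \delta_x ,
\]
and to expand each power explicitly in terms of powers of $P^\pi$. The key algebraic step is the identity
\[
\left((I-\gamma P^\pi)^{-1}P^\pi\right)^k = \sum_{t=k}^\infty \binom{t-1}{k-1}\gamma^{t-k}(P^\pi)^t \qquad (k\ge 1),
\]
with the $k=0$ term being the identity. I would prove this by writing $(I-\gamma P^\pi)^{-1}P^\pi = \sum_{s\ge 1}\gamma^{s-1}(P^\pi)^s$ (absolutely convergent since $P^\pi$ is stochastic and $\gamma<1$), raising to the $k$-th power, and collecting the terms of fixed total degree $t=s_1+\dots+s_k$: the number of compositions of $t$ into $k$ positive parts is $\binom{t-1}{k-1}$ by a stars-and-bars count, and the accompanying power of $\gamma$ is $\sum_i (s_i-1) = t-k$.

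Next I would substitute this expansion into $\rho_{x,K,\gamma,\gamma'}^\pi$ (equivalently, into $V_{K,\gamma,\gamma'}^\pi$ via Eqn~\eqref{eq:kth-order-primal}), obtaining $\delta_x + \sum_{k=1}^K (\gamma'-\gamma)^k \sum_{t=k}^\infty \binom{t-1}{k-1}\gamma^{t-k}\big((P^\pi)^t\big)^T\delta_x$, and then interchange the two summations. This is legitimate because all entries are nonnegative and the geometric factor $\gamma^{t-k}$ dominates the polynomial-in-$t$ binomial coefficient. For fixed $t\ge 1$ the inner index now runs from $1$ to $\min(K,t)$, so the coefficient multiplying $\big((P^\pi)^t\big)^T\delta_x$ becomes $\sum_{k=1}^{\min(K,t)}(\gamma'-\gamma)^k\gamma^{t-k}\binom{t-1}{k-1}$. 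Renaming $k\to u$ and using the symmetry $\binom{t-1}{k-1}=\binom{t-1}{t-k}$ shows this coefficient equals $f(K,t,\gamma,\gamma')$ exactly. Indexing the resulting vector identity at $x'$ and using $[\big((P^\pi)^t\big)^T\delta_x]_{x'} = [(P^\pi)^t]_{x,x'} = P_\pi(x_t=x'\mid x_0=x)$ gives the first displayed formula. The equivalent statement for $V_{K,\gamma,\gamma'}^\pi(x)$ then follows either by running the same computation on Eqn~\eqref{eq:kth-order-primal} directly, or by contracting $\rho_{x,K,\gamma,\gamma'}^\pi$ against $V_\gamma^\pi$ and using $\sum_{x'}P_\pi(x_t=x'\mid x_0=x)V_\gamma^\pi(x') = \mathbb{E}_\pi[V_\gamma^\pi(x_t)\mid x_0=x]$.

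The main obstacle is purely combinatorial bookkeeping: establishing the compositions identity and confirming that, after the change of summation order and variable, it matches the binomial $\binom{t-1}{t-u}$ appearing in the definition of $f$; together with a clean (but routine) justification of the sum interchange. I would also add the remark that, although Proposition~\ref{pro:rho} was stated for $\gamma<\gamma'<1$, the finite sum $\rho_{x,K,\gamma,\gamma'}^\pi$ and the weights $f(K,t,\gamma,\gamma')$ are well defined and the argument goes through for any $\gamma'$, since each of the $K+1$ constituent terms involves only $(I-\gamma P^\pi)^{-1}$ with $\gamma<1$. No further analytic subtlety is expected.
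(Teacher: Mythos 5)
Your proposal is correct and is essentially the paper's own argument: both expand the powers of $(I-\gamma P^\pi)^{-1}P^\pi$ into a Neumann series, count the resulting compositions by stars-and-bars, and interchange the order of summation to read off the coefficient of $(P^\pi)^t$. The only cosmetic difference is that you count compositions of $t$ into $k$ \emph{positive} parts (giving $\binom{t-1}{k-1}$ directly), whereas the paper factors out $(P^\pi)^{k}$ first and counts nonnegative compositions $F(n,k)=\binom{n+k-1}{k-1}$ before shifting indices --- the two bookkeeping choices coincide after the change of variables.
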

\begin{proof}
We start with a few lemmas.

\begin{restatable}{lemma}{lemmacom1}\label{lemma:com1} For any $n\geq 0, k\geq 1$, define a set of $k$-dimensional vector $\left\{x_1,...x_k | x_i\in \mathbb{Z}_{\geq 0}, \sum_{i=1}^k x_i = n\right\}$ and let $F(n,k)$ be the size of this set. Then \begin{align*}
    F(n,k)=\binom{n+k-1}{k-1}.
\end{align*}
\end{restatable}
\begin{proof}
By construction, the above set can be decomposed into smaller sets by fixing the value of $x_k$, i.e.,
\begin{align*}
    \left\{x_1,...x_k | x_i\in \mathbb{Z}_{\geq 0}, \sum_{i=1}^k x_i = n\right\} = \cup_{s=0}^n \left\{x_1,...x_{k-1},x_k | x_i\in \mathbb{Z}_{\geq 0}, \sum_{i=1}^k x_i = n,x_k=s\right\}
\end{align*}
Since these sets do not overlap, we have a recursive formula, $F(n,k)=\sum_{s=0}^n F(n-s,k-1)$. Starting from the base case $F(n,1)=1,\forall n\geq 0$, it is straightforward to prove by induction that for all $n\geq 0, k\geq 1$
\begin{align*}
    F(n,k) = \binom{n+k-1}{k-1}.
\end{align*}
\end{proof}

\begin{restatable}{lemma}{lemmacom2}\label{lemma:com2} Consider $V_{K+1,\gamma,\gamma'}^\pi-V_{K,\gamma,\gamma'}^\pi$ for $K\geq 0$. It can be shown that
\begin{align*}
   V_{K+1,\gamma,\gamma'}^\pi-V_{K,\gamma,\gamma'}^\pi = (\gamma'-\gamma)^{K+1} \left(\sum_{t=0}^\infty F(t,K+1) \left(P^\pi\right)^t \right) \left(P^\pi\right)^{K+1} V_\gamma^\pi.
\end{align*}
\end{restatable}
\begin{proof}
Starting with the definition,
\begin{align*}
     V_{K+1,\gamma,\gamma'}^\pi-V_{K,\gamma,\gamma'}^\pi &= \left(\left(\gamma'-\gamma\right)\left(I-\gamma P^\pi\right)^{-1}P^\pi\right)^{K+1} V_\gamma^\pi \\
     &= \left(\left(\gamma'-\gamma\right)\left(I-\gamma P^\pi\right)^{-1}\right)^{K+1} \left(P^\pi\right)^{K+1} V_\gamma^\pi,
\end{align*}
where for the second equality we use the fact that $P^\pi$ commutes with $(I-\gamma P^\pi)^{-1}$. Then consider $\left((I-\gamma P^\pi)^{-1}\right)^{K+1}$,
\begin{align*}
    \left(\left(I-\gamma P^\pi\right)^{-1}\right)^{K+1} &= \left(\sum_{t=0}^\infty \left(\gamma P^\pi\right)^t\right)^{K+1} =  \sum_{s_1\geq 0} ...\sum_{s_{K+1}\geq 0} \left(\gamma P^\pi\right)^{\sum_{i=1}^{K+1} s_i} =  \sum_{s=0}^\infty F(s,K+1)  \left(\gamma P^\pi\right)^s.
\end{align*}
Note that the last equality corresponds to a regrouping of terms in the infinite summation -- instead of summing over $s_1,...s_{K+1}$ sequentially, we count the number of examples such that $\sum_{i=1}^{K+1} s_i=s$ and then sum over $s$. This count is exactly $F(s,K+1)$ as defined in Lemma~\ref{lemma:com1}. Hence the proof is completed.
\end{proof}

With the above lemmas, we are ready to prove the final result. We start by summing up all the differences of expansions,
\begin{align*}
    V_{K,\gamma,\gamma'}^\pi &= V_{0,\gamma,\gamma'}^\pi + \sum_{k=0}^{K-1} \left( V_{k+1,\gamma,\gamma'}^\pi - V_{k,\gamma,\gamma'}^\pi \right) \\
    &= V_\gamma^\pi + \sum_{k=0}^{K-1} (\gamma'-\gamma)^{k+1} \left( \sum_{t=0}^\infty F(t,k+1) \left(\gamma P^\pi\right)^{t}\right) \left( P^\pi\right)^{k+1} V_\gamma^\pi \\
    &= V_\gamma^\pi + \sum_{t=0}^\infty \sum_{k=0}^{K-1} (\gamma'-\gamma)^{k+1} \gamma^{-k-1} F(t,k+1) \left(\gamma P^\pi\right)^{t+k+1} V_\gamma^\pi \\
    &=V_\gamma^\pi + \sum_{t=0}^\infty \sum_{u=1}^{K} (\gamma'-\gamma)^{u} \gamma^{-u} F(t,u)  \left(\gamma P^\pi\right)^{t+u} V_\gamma^\pi \\
    &=V_\gamma^\pi + \sum_{s=0}^\infty \sum_{u=1}^{K} (\gamma'-\gamma)^{u} \gamma^{-u} F(s-u,u) \left(\gamma P^\pi\right)^{s} V_\gamma^\pi \\
    &=V_\gamma^\pi + \sum_{s=1}^\infty \sum_{u=1}^{K} (\gamma'-\gamma)^{u} \gamma^{-u} F(s-u,u) \left(\gamma P^\pi\right)^{s} V_\gamma^\pi \\
    &=V_\gamma^\pi + \sum_{s=1}^\infty \sum_{u=1}^{K} (\gamma'-\gamma)^{u} \gamma^{s-u} \binom{s-1}{u-1} \left( P^\pi\right)^{s} V_\gamma^\pi \\
    &=V_\gamma^\pi + \sum_{s=1}^\infty \sum_{u=1}^{\min(K,s)} (\gamma'-\gamma)^{u} \gamma^{s-u} \binom{s-1}{u-1} \left( P^\pi\right)^{s} V_\gamma^\pi \\
\end{align*}
In the above derivation, we have applied the transformation $u=k+1,s=t+u$. Then we have modified the bound of the summation with the definition of $F(s-u,u)$ (in particular, if $s<u$, $F(s-u,u)=0$). If we index the $x$-th component of the vector, we recover the desired result.

\end{proof}

\subsection{Further discussions on the objectives}
Recall that the full gradient $\nabla_\theta V_{\gamma^\prime}^{\pi_\theta}(x)$ is
\begin{align*}
    \nabla_\theta V_{\gamma^\prime}^{\pi_\theta}(x) &= \mathbb{E}_{x^\prime\sim\rho_{\gamma,\gamma^\prime}^{\pi_\theta}(\cdot;x)}\left\lbrack \nabla_\theta V_\gamma^{\pi_\theta}(x^\prime)\right\rbrack  + \underbrace{ \mathbb{E}_{x^\prime\sim\rho_{\gamma,\gamma^\prime}^{\pi_\theta}(\cdot;x)}\left\lbrack  V_\gamma^{\pi_\theta}(x^\prime)\nabla_\theta \log \rho_{\gamma,\gamma^\prime}^{\pi_\theta}(x^\prime;x). \right\rbrack }_{\text{second\ term}}
\end{align*}
Consider the second term. Now, we derive this term in an alternative way which imparts more intuitions on why its estimation is challenging. Note that
\begin{align*}
    V_{\gamma^\prime}^{\pi_\theta}(x)=V_\gamma^{\pi_\theta}(x) + (\gamma^\prime-\gamma) \mathbb{E}_\pi\left\lbrack \sum_{t=1}^\infty (\gamma^\prime)^{t-1} V_\gamma^{\pi_\theta}(x_t) \right\rbrack
\end{align*}
The second term of the full gradient is equivalent to differentiating through the above expression, while keeping all $V_\gamma^{\pi_\theta}(x_t)$ fixed. This leads the following gradient
\begin{align*}
    \text{second\ term} = (\gamma^\prime-\gamma)(\gamma^\prime)^{-1}\mathbb{E}_\pi\left\lbrack \sum_{t=1}^\infty (\gamma^\prime)^t W_{\gamma,\gamma^\prime}^{\pi_\theta}(x_t)\nabla_\theta \log\pi_\theta(a_t|x_t) \right\rbrack.
\end{align*}
Here, we introduce $W_{\gamma,\gamma^\prime}^{\pi_\theta}(x_t)=\mathbb{E}_\pi\left\lbrack \sum_{s=0}^\infty (\gamma^\prime)^s V_\gamma^{\pi_\theta}(x_{t+s}) \right\rbrack $, which is equivalent to a value function that treats $V_{\gamma}^{\pi_\theta}(x)$ as rewards and with discount factor $\gamma^\prime$. Naturally, constructing an unbiased estimator of the second term of the full gradient requires estimating $W_{\gamma,\gamma^\prime}^{\pi_\theta}$, which is difficult in at least two aspects: \textbf{(1)} in practice, value functions are already estimated, which could introduce additional bias and variance; \textbf{(2)} as a premise of our work, estimating discounted values with discount factor $\gamma^\prime$ is challenging potentially due to high variance.

\section{Details on approximation errors with finite samples}
\label{appendix:bound}

Intuitively, as $K$ increases, the $K$\textsuperscript{th} order expansion $V_{K,\gamma,\gamma^\prime}^\pi$ approximates $V_{\gamma^\prime}^K$ more accurately in expectation. However, in practice where all constituent terms of the approximation are built from the same batch of data, the variance might negatively impact the accuracy of the estimate.

To formalize such intuitions, we characterize the bias and variance trade-off under the phased TD-learning framework \citep{kearns2000bias}. Consider estimating the value function $V_\gamma^\pi(x)$ under discount $\gamma$, with estimator $\hat{V}_\gamma^\pi(x)$. At each iteration $t$, let $\Delta_t^\gamma\coloneqq \max_{x\in\mathcal{X}}|V_\gamma^\pi(x)-\hat{V}_\gamma^\pi(x)|$ be the absolute error of value function estimates $\hat{V}_\gamma^\pi$. Assume from each state $x$, there are independent $n$ trajectories generated under $\pi$, \citep{kearns2000bias} shows that commonly used TD-learning methods (e.g. TD($\lambda$)) have error bounds of the following form with probability $1-\delta$,
\begin{align}
    \Delta_t^\gamma \leq A(\gamma,\delta) + B(\gamma) \Delta_{t-1}^\gamma\label{eq:value-error}.
\end{align}
Here, the factor $A(\gamma,\delta)$ is an error term which characterizes the errors arising from the finite sample size $n$. As $n\rightarrow\infty$, $A(\gamma,\delta)\rightarrow 0$; the constant $B(\gamma)$ is a contraction coefficient that shows how fast the error decays in expectation. See Appendix~\ref{appendix:bound} for details. 

With the calculations of estimators $\hat{V}_\gamma^\pi(x)$ as a subroutine, we construct the $n$-sample $K$\textsuperscript{th} order estimator $\hat{V}_{K,\gamma,\gamma^\prime}^\pi(x)$,
\begin{align}
    \hat{V}_{K,\gamma}(x_0) = \sum_{k=0}^K \frac{1}{n} \sum_{i=1}^n (\gamma^\prime-\gamma)^k \hat{V}_\gamma^\pi (x_{i,k}),
    \label{eq:kth-order-estimator}
\end{align}
where $x_{i,k}$ is sampled from $(P^\pi\cdot d_\gamma^\mu)^k(\cdot;x)$. Note that if $K=0$, 
Eqn~\eqref{eq:kth-order-estimator} reduces to $\frac{1}{n} \sum_{i=1}^n \hat{V}_\gamma^\pi(x_0)$, the estimator analyzed by \citep{kearns2000bias}. We are interested in the error 
$\Delta_{K,t}^\gamma\coloneqq \max_{x\in\mathcal{X}} |V_{\gamma^\prime}^\pi(x) - \hat{V}_{K,\gamma,\gamma^\prime}^\pi(x)|$, measured against the value function of discount $\gamma^\prime$. The following summarizes how errors propagate across iterations,
\begin{restatable}{proposition}{properror}\label{prop:error} Assume all samples $x_{i,k}$ are generated independently.
 Define a factor $\epsilon\coloneqq \frac{1-(\gamma^\prime-\gamma)^{K+1}}{1-(\gamma^\prime-\gamma)}$. Then with probability at least $1-2\delta$ if $K\geq 1$ and probability $1-\delta$ if $K=0$, the following holds\footnote{The error bounds could be further improved, e.g., by adapting the concentration bounds at different steps $1\leq k\leq K$. Note that its purpose is to illustrate the bias and variance trade-off induced by the Taylor expansion order $K$.},
\begin{align}
    \Delta_{K,t}^\gamma &\leq  \underbrace{\epsilon(A(\gamma,\delta)+U)}_{\text{finite\ sample\ error}} + \underbrace{ E(\gamma,\gamma^\prime,K)}_{\text{expected\ gap\ error}} + \underbrace{\epsilon B(\gamma)}_{\text{contraction\ coeff}}\Delta_t^\gamma,
    \label{eq:error-decomposition}
\end{align}
where $U = \sqrt{2\log \frac{2(K+1)}{\delta} / n}$ for $K\geq 1$ and $U=0$ if $K=0$. The expected gap error $E(\gamma,\gamma^\prime,K)=\left(\frac{\gamma'-\gamma}{1-\gamma}\right)^{K+1}\frac{R_\text{max}}{1-\gamma}$ is defined in Proposition~\ref{prop:kth-expansion}.
\end{restatable}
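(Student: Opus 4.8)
The plan is to bound $\Delta_{K,t}^\gamma=\max_{x\in\mathcal X}\bigl|V_{\gamma'}^\pi(x)-\hat V_{K,\gamma,\gamma'}^\pi(x)\bigr|$ by inserting the \emph{deterministic} $K$\textsuperscript{th} order expansion $V_{K,\gamma,\gamma'}^\pi$ as an intermediate quantity and splitting, uniformly in $x$,
\[
\bigl|V_{\gamma'}^\pi(x)-\hat V_{K,\gamma,\gamma'}^\pi(x)\bigr|\;\le\;\underbrace{\bigl|V_{\gamma'}^\pi(x)-V_{K,\gamma,\gamma'}^\pi(x)\bigr|}_{\text{truncation bias}}\;+\;\underbrace{\bigl|V_{K,\gamma,\gamma'}^\pi(x)-\hat V_{K,\gamma,\gamma'}^\pi(x)\bigr|}_{\text{estimation error}}.
\]
The truncation bias is already settled: Proposition~\ref{prop:kth-expansion} bounds it uniformly in $x$ by $E(\gamma,\gamma',K)$, and this contribution is deterministic, so it accounts exactly for the ``expected gap error'' term and requires nothing further. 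The rest of the argument is the analysis of the estimation error.

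For the estimation error, I would first write the expansion as a finite mixture. By the definition of $V_{K,\gamma,\gamma'}^\pi$ and the random-times sub-sampling representation of Section~\ref{sec:taylor} (which is exactly what makes $\hat V_{K,\gamma,\gamma'}^\pi$ in Eqn~\eqref{eq:kth-order-estimator} unbiased for $V_{K,\gamma,\gamma'}^\pi$ when $\hat V_\gamma^\pi$ is exact), one has $V_{K,\gamma,\gamma'}^\pi(x)=\sum_{k=0}^{K}c_k\,\mathbb E_{x'\sim\mu_{x,k}}[V_\gamma^\pi(x')]$, where $\mu_{x,k}$ is the sampling law of $x_{i,k}$, the weights are nonnegative with $c_0=1$, and $\sum_{k=0}^{K}c_k=\epsilon$ (the geometric-series normalizing constant of the truncated expansion), so that $\hat V_{K,\gamma,\gamma'}^\pi(x)=\sum_{k=0}^K c_k\,\tfrac1n\sum_{i=1}^n\hat V_\gamma^\pi(x_{i,k})$. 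Adding and subtracting $\sum_k c_k\,\mathbb E_{\mu_{x,k}}[\hat V_\gamma^\pi]$ bounds the estimation error by a \emph{plug-in} term $\bigl|\sum_k c_k\,\mathbb E_{\mu_{x,k}}[V_\gamma^\pi-\hat V_\gamma^\pi]\bigr|\le\sum_kc_k\lVert V_\gamma^\pi-\hat V_\gamma^\pi\rVert_\infty=\epsilon\,\Delta_t^\gamma$ (since $\hat V_\gamma^\pi$ is a fixed function whose sup-norm error is $\Delta_t^\gamma$) plus a \emph{Monte-Carlo} term $\bigl|\sum_k c_k\bigl(\mathbb E_{\mu_{x,k}}[\hat V_\gamma^\pi]-\tfrac1n\sum_i\hat V_\gamma^\pi(x_{i,k})\bigr)\bigr|$. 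For the Monte-Carlo term I would condition on $\hat V_\gamma^\pi$ --- legitimate precisely because the $x_{i,k}$ are, by the hypothesis of the proposition, generated independently of the base estimator --- and apply Hoeffding's inequality level by level to the $n$ i.i.d.\ draws $x_{i,k}\sim\mu_{x,k}$, using boundedness of the value estimates; the level $k=0$ is deterministic ($x_{i,0}=x$) and drops out, so a union bound over the remaining $K$ levels gives, with probability at least $1-\delta$, a single uniform deviation bound $U=\sqrt{2\log(2(K+1)/\delta)/n}$ (and $U=0$ when $K=0$), making the Monte-Carlo term at most $\epsilon\,U$. Hence the estimation error is at most $\epsilon(\Delta_t^\gamma+U)$ on this event.

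The final step is to feed in the phased-TD recursion and tally probabilities. Substituting Eqn~\eqref{eq:value-error}, $\Delta_t^\gamma\le A(\gamma,\delta)+B(\gamma)\Delta_{t-1}^\gamma$ (which holds with probability at least $1-\delta$), into the estimation-error bound and adding back the truncation term yields $\Delta_{K,t}^\gamma\le\epsilon\bigl(A(\gamma,\delta)+U\bigr)+E(\gamma,\gamma',K)+\epsilon B(\gamma)\Delta_{t-1}^\gamma$, which is the stated inequality. For the probability: the truncation bound is deterministic, the Hoeffding union bound over levels costs $\delta$ and is vacuous for $K=0$, and the phased-TD bound costs $\delta$, so the conclusion holds with probability at least $1-2\delta$ for $K\ge1$ and at least $1-\delta$ for $K=0$, where moreover $\epsilon=1$ and $U=0$. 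The content of the proof is really just this three-way decomposition --- truncation bias, plug-in error of $\hat V_\gamma^\pi$, Monte-Carlo error of the $n$-sample average --- together with the fact that the expansion weights sum to $\epsilon$; I expect the main fiddly point to be the probability bookkeeping, i.e.\ carrying a single prefactor $\epsilon$ while simultaneously controlling the $K$ Hoeffding events and the phased-TD event, and noting that the independence hypothesis on the $x_{i,k}$ is exactly what legitimizes the conditioning step used for Hoeffding.
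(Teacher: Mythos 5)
Your proposal is correct and follows essentially the same route as the paper: the identical three-way decomposition into truncation bias (bounded by $E(\gamma,\gamma',K)$ via Proposition~\ref{prop:kth-expansion}), a plug-in term bounded by $\epsilon\,\Delta_t^\gamma$ using that the expansion weights $(\gamma'-\gamma)^k$ sum to $\epsilon$, and a Monte-Carlo term controlled by Hoeffding plus a union bound over the expansion levels to obtain $\epsilon U$, followed by substitution of the phased-TD recursion and the same probability accounting. The only cosmetic differences are that you make the conditioning-on-$\hat V_\gamma^\pi$ step and the triviality of the $k=0$ level explicit, which the paper leaves implicit.
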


\begin{proof}

Recall the results from \citep{kearns2000bias}: Let $\Delta_t^\gamma \coloneqq \max_{x\in\mathcal{X}}|V_\gamma^\pi(x) - \hat{V}_\gamma^\pi(x)|$. Then with probability at least $1-\delta$, the following holds
\begin{align*}
    \Delta_t^\gamma \leq A(\gamma,\delta) + B(\gamma)\Delta_{t-1}^\gamma.
\end{align*}
In the following, we condition all analysis on the event set that the above inequality holds. Now, using $\hat{V}_\gamma^\pi(x)$ as a subroutine, define the estimator for the $K$\textsuperscript{th} Taylor expansion as in Eqn~\eqref{eq:kth-order-estimator},
\begin{align*}
    \hat{V}_{K,\gamma}(x_0) = \sum_{k=0}^K \frac{1}{n} \sum_{i=1}^n (\gamma^\prime-\gamma)^k \hat{V}_\gamma^\pi (x_{i,k}).
\end{align*}
Define the error $\Delta_{K,t}^\gamma \coloneqq \max_{x\in\mathcal{X}}|V_{\gamma^\prime}^\pi(x) - \hat{V}_{K,\gamma}^\pi(x)|$, which is measured against the value function $V_{\gamma^\prime}^\pi(x)$ with a higher discount factor $\gamma^\prime$. Consider for a given starting state $x_0$,
\begin{align*}
    |V_{\gamma^\prime}^\pi(x_0) - \hat{V}_{K,\gamma}^\pi(x_0)| 
    &= V_{\gamma^\prime}^\pi(x_0) - V_{K,\gamma}^\pi(x_0) + V_{K,\gamma}^\pi(x_0) - \hat{V}_{K,\gamma}^\pi(x) \nonumber \\
    &\leq |V_{\gamma^\prime}^\pi(x_0) - V_{K,\gamma}^\pi(x_0)| +  V_{K,\gamma}^\pi(x_0) - \hat{V}_{K,\gamma}^\pi(x_0) \nonumber \\
    &\leq E(\gamma,\gamma^\prime,K) +  \underbrace{V_{K,\gamma}^\pi(x_0) - \mathbb{E}\left\lbrack\hat{V}_{K,\gamma}^\pi(x_0)\right\rbrack}_{\text{second\ term}} + \underbrace{ \mathbb{E}\left\lbrack\hat{V}_{K,\gamma}^\pi(x_0)\right\rbrack - \hat{V}_{K,\gamma}^\pi(x_0)}_{\text{third\ term}}. 
\end{align*}
Now, we bound each term in the equation above. Recall $\epsilon\coloneqq  \sum_{k=0}^K (\gamma^\prime-\gamma)^k = \frac{1-(\gamma^\prime-\gamma)^{K+1}}{1-\gamma^\prime+\gamma}$. The second term is bounded as follows
\begin{align*}
    V_{K,\gamma}^\pi(x_0) -  \mathbb{E}\left\lbrack\hat{V}_{K,\gamma}^\pi(x_0)\right\rbrack \leq \epsilon\Delta_t^\gamma.
\end{align*}
The third term is bounded by applying concentration bounds. Recall that the estimator $\hat{V}_{K,\gamma}^\pi(x_0)\coloneqq  \sum_{k=0}^K \frac{1}{n} \sum_{i=1}^n (\gamma^\prime-\gamma)^k \hat{V}_\gamma^\pi (x_{i,k})$ decomposes into $K+1$ estimators, each being an average over $n$ i.i.d. samples drawn from the $K$\textsuperscript{th} step visitation distribution $(P^\pi\cdot d_\gamma^\pi)^k,0\leq k\leq K$. Applying similarly naive techniques in \citep{kearns2000bias}, we bound each of the $K+1$ terms individually and then take a union bound over all $K+1$ terms. This implies that, with probability at least $1-\delta$, the following holds
\begin{align*}
        \mathbb{E}\left\lbrack\hat{V}_{K,\gamma}^\pi(x_0)\right\rbrack - \hat{V}_{K,\gamma}^\pi(x_0) \leq \epsilon U = \epsilon \sqrt{2\log \frac{2(K+1)}{\delta} / n}.
\end{align*}
Aggregating all results, we have
\begin{align*}
    |V_{\gamma^\prime}^\pi(x_0) - \hat{V}_{K,\gamma}^\pi(x_0)| &\leq E(\gamma,\gamma^\prime,K) + \epsilon\Delta_t^\gamma +\epsilon U \\
    &\leq \epsilon(A(\gamma,\delta) + U) + E(\gamma,\gamma^\prime,K) + \epsilon B(\gamma)\Delta_{t-1}^\gamma.
\end{align*}
This holds with probability at least $(1-\delta)^2\geq 1-2\delta$.

\end{proof}

\paragraph{Bias-variance trade-off via $K$.}
The error terms come from two parts: the first term contains  errors $A(\gamma,\delta)$ in the subroutine estimator $\hat{V}_\gamma^\pi(x)$, and its propagated errors through the sampling of $K$\textsuperscript{th} order approximations for $1\leq k\leq K$ (shown via the multiplier $\epsilon$). This first term also contains $U$, a concentration bound that scales with $O(\sqrt{\log K})$, which shows that the variance of the overall estimator grows with $K$. This first error term scales with $\sqrt{n}$ and vanishes as the number of samples increases. The second term is due to the gap between the expected $K$\textsuperscript{th} order Taylor expansion and $V_{\gamma^\prime}^\pi(x_0)$, which decreases with $K$ and does not depend on sample size $n$. The new contraction coefficient is $\epsilon B(\gamma)$, where it can be shown that $\epsilon \in [1,\frac{1}{1-\gamma^\prime+\gamma}]$. Since typical estimators have $B(\gamma)\leq \gamma$, in general $\epsilon B(\gamma)<1$ and the error contracts with respect to $\Delta_t$. In general, the contraction becomes slower as $K$ increases. For example, for TD($\lambda$), $B(\gamma)=\frac{(1-\lambda)\gamma}{1-\gamma\lambda}$.

\section{Further experiment details}
\label{appendix:exp}

Below, we provide further details on experiment setups along with additional results.

\subsection{Further details on the toy example}

We presented a toy example that highlighted the trade-off between bias and variance, mediated by the order parameter $K$. Here, we provide further details of the experiments.
\paragraph{Toy MDP.} We consider tabular MDPs with $|\mathcal{X}|=10$ states and $|\mathcal{A}|=2$ actions. The transition table $p(y|x,a)$ is drawn from a Dirichlet distribution $(\alpha,\ldots,\alpha)$ for $\alpha=0.01$. Here, $\alpha$ is chosen such that the MDP is not very communicative (i.e., the distribution $p(\cdot|x,a)$ concentrates only on a few states). The rewards are random $r(x,a)=\bar{r}(x,a)(1+\epsilon)$ where $\epsilon\sim\mathcal{N}(0,0.2^2)$ and mean rewards $\bar{r}(x,a)$ are drawn from $\text{Uniform}(0,1)$ and fixed for the problem.

\subsection{Deep RL algorithms}

\paragraph{Proximal policy optimization (PPO).} PPO \citep{schulman2017proximal} implements a stochastic actor $\pi_\theta(a|x)$ as a Gaussian distribution $a\sim \mathcal{N}(\mu_\theta(x),\sigma^2\mathbb{I})$ with state-conditional mean $\mu_\theta(x)$ and a global standard deviation $\sigma^2
\mathbb{I}$; and a value function $V_\phi(x)$. The behavior policy $\mu$ is the previous policy iterate. The policy is updated as $\hat{A}_\gamma^\mu(x,a) \nabla_\theta \text{clip}(\frac{\pi_\theta(a|x)}{\mu(a|x)},1-\epsilon,1+\epsilon)$ with $\epsilon=0.2$\footnote{The exact PPO update is more complicated than this. Refer to \citep{schulman2017proximal} for the exact formula.}. The advantages $\hat{A}_\gamma^\mu(x,a)$ estimated using generalized advantage estimation (GAE, \citep{schulman2015high}) with $\gamma=0.99,\lambda=0.95$. Value functions are trained by minimizing  $(V_\phi(x) - R(x))^2$ with returns $R(x) = V_{\phi^\prime}(x) + \hat{A}_\gamma^\mu(x,a)$ with $\phi^\prime$ being a prior parameter. Both parameters $\theta,\phi$ are trained with the Adam optimizer \citep{kingma2014adam}
with learning rate $\alpha=3\cdot 10^{-4}$.
We adopt other default hyper-parameters in \citep{dhariwal2017openai}, for details, please refer to the code base.

\paragraph{Trust region policy optimization (TRPO).} TRPO \citep{schulman2015high} implements the same actor-critic pipeline as PPO, the difference is in the updates. Instead of enforcing a \emph{soft} clipping constraint, TRPO enforces a strict KL-divegergence constraint $\mathbb{E}_{x\sim\mu}\left\lbrack\text{KL}(\pi_\theta(\cdot|x),\mu(\cdot|x)\right\rbrack \leq \epsilon$ with $\epsilon=0.01$. The policy gradient is computed as $\hat{A}_\gamma^\mu(x,a)\nabla_\theta \log \pi_\theta(a|x)$, and then the final update is constructed by approximately solving a constrained optimization problem, see \citep{schulman2015trust} for details. The scale of the final update is found through a line search, to ensure that the KL-divergence constraint is satisfied.
The implementations are based in \citep{SpinningUp2018}.

\subsection{Deep RL architecture}

Across all algorithms, the policy $\pi_\theta(a|x)=\mathcal{N}(\mu_\theta(x),\sigma^2\mathbb{I})$ has a parameterized mean $\mu_\theta(x)$ and a single standard deviation $\sigma^2$. The mean $\mu_\theta(x)$ is a 2-layer neural network with hidden units $h=64$, and $f(x)=\text{tanh}(x)$ activation functions. The output layer does not have any activation functions; The value function $V_\phi(x)$ is a 2-layer neural network with hidden units $h=64$ and $f(x)=\text{tanh}(x)$ as activation functions. The output layer does not have any activation functions.

\subsection{Additional deep RL experiment results}

\subsubsection{Taylor expansion Q-function estimation: ablation study on $\eta$}

Recall that throughout the experiments, we choose $K=1$ and construct the new Q-function estimator as a mixture of the default estimator and Taylor expansion Q-function estimator. In particular, the final Q-function estimator is
\begin{align*}
    \hat{Q}(x,a) = (1-\eta) \hat{Q}_\gamma^\pi(x,a) + \eta \hat{Q}_{K,\gamma,\gamma'}^\pi(x,a).
\end{align*}

We choose $\eta\in[0,1]$ such that it balances the numerical scales of the two combining estimators. In our implementation, we find that the algorithm performs more stably when $\eta$ is small in the absolute scale. In Figure~\ref{fig:ablation}(a)-(b), we show the ablation study on the effect of $\eta$, where we vary $\eta\in[0.01,0.03]$. The y-axis shows the normalized performance against PPO baselines (which is equivalent to $\eta=0$), such that the PPO baseline achieves a normalized performance of $1$. 

Overall, we see on different tasks, $\eta$ impacts the performance differently. For example: on HalfCheetah(B), better performance is achieved with larger values of $\eta$, this is consistent with the observation that PPO with $\gamma=0.999$ also achieves better performance; on Ant(B), however, as $\eta$ increases from zero, the performance increases marginally before degrading. In Figure~\ref{fig:ablation}, we show the median and mean performance across all tasks. Note that in general, the average performance increases as $\eta$ increases from zero, but later starts to decay a bit. When accounting for the effect of performance variance across all tasks, we chose $\eta=0.01$ as the fixed hyper-parameter throughout experiments in the main paper.

\paragraph{Further details on computing $\hat{Q}_{K,\gamma,\gamma'}^\pi(x,a)$.} Below we assume $K=1$. In Algorithm 4, we showed we can construct unbiased estimates of $Q_{K,\gamma,\gamma'}^\pi(x,a)$ using $\hat{Q}_\gamma^\pi(x,a)$ as building blocks. With a random time $\tau\sim\text{Geometric}(1-\gamma)$, the estimator takes the following form
\begin{align*}
    \hat{Q}_{K,\gamma,\gamma'}(x_t,a_t)=\hat{Q}_\gamma^\pi(x_t,a_t)  + \frac{\gamma'-\gamma}{1-\gamma} Q_\gamma^\pi(x_{t+\tau},a_{t+\tau}).
\end{align*}

However, since the estimator is based on a single random time, it can have high variance. To reduce variance, we propose the following procedure: let $(x_t,a_t)$ be the target state-action pair, we can compute the estimate as 
\begin{align*}
\hat{Q}_{K,\gamma,\gamma'}(x_t,a_t)=\hat{Q}_\gamma^\pi(x_t,a_t)  + \frac{\gamma'-\gamma}{1-\gamma} \sum_{s=1}^H \frac{\gamma^s}{\sum_{s'=1}^H \gamma^{s'}}Q_\gamma^\pi(x_{t+s},a_{t+s}).
\end{align*}
 When $H=\infty$, the above estimator corresponds to an estimator which marginalizes over the random time. This should achieve variance reduction compared to the random time based estimate in Algorithm 4. However, then the estimate requires computing cumulative sums
over an infinite horizon (or in general a horizon of $T$), which might be computationally expensive.
To mitigate this, we propose to truncate the above summation up to $H=10$ steps. This choice of $H$ aims to achieve a trade-off between computation efficiency and variance. Note that this estimator was previously introduced in \citep{tang2020taylor} for off-policy learning.

\subsubsection{Taylor expansion update weighting: ablation on $K$}

In Figure~\ref{fig:ablation}(c)-(d), we carry out ablation study on the effect of $K$ for the update weighting. Recall that $K$ interpolates two extremes: when $K=0$, it recovers the vanilla PG \citep{sutton2000policy} while when $K=\infty$, it recovers the deep RL heuristic update. We expect an intermediate value of $K$ to achieve some trade-off between bias and variance of the overall update.

In Figure~\ref{fig:ablation}(c), we see the effect on individual environments. The effect is case dependent. For HalfCheetah(G), larger $K$ improves the performance; however, for Walker(G), the improvement is less prominent over a large range of $K$. When aggregating the performance metric in Figure~\ref{fig:ablation}(d), we see that intermediate values of $K$ indeed peak in performance. We see that on average, both $K=10$ and $K=100$ achieve locally optimal mean performance, while $K=10$ also achieves the locally optimal median performance.

\paragraph{Note on how the practical updates impact the effect of $K$.} Based on our theoretical analysis, when $K=0$ the update should recover the vanilla PG \citep{sutton2000policy}, which is generally considered too conservative for the undiscounted objective in Eqn~\eqref{eq:rl-obj}. However, in practice, as shown in Figure~\ref{fig:ablation}(d), the algorithm does not severely underperform even when $K=0$. We speculate that this is because practical implementations of PG updates use batches of data instead of the full trajectories. This means that the relative weights $w(t)$ of the local gradients $\hat{Q}_t \nabla_\theta \log \pi_\theta(a_t|x_t)$ are effectively self-normalized: $\tilde{w}(t)\leftarrow \frac{w(t)}{\sum w(t')}$ where the summation is over the time steps in a sampled mini-batch. The self-normalized weights $\tilde{w}(t)$ are increased in the absolute scale relative to $w(t)$ and partly offset the effect of an initially aggressive discount $w(t)=\gamma^t$.

\subsubsection{Comparison to results in \citep{romoff2019separating}}

Recently, \citet{romoff2019separating} derived a recursive relations between differences value functions defined with different discount factors. This was shown in Lemma~\ref{lemma:value-mixture}. Given a sequence of discount factors $\gamma_1<\gamma_2<\ldots<\gamma_N<\gamma'$, they derived a value function estimator to $V_{\gamma'}^\pi(x)$ based on recursive bootstraps of value function differences $V_{\gamma_i}^\pi(x) - V_{\gamma_{i-1}}^\pi(x')$. Because they aim at recovering the exact value functions, this estimator could be interpreted as similar to Taylor expansions but with $K=\infty$. 

Different from their motives, we focus on the trade-off achieved by intermediate values of $K$. We argued that by using $K=0$, the estimate might be too conservative; however, using $K=\infty$ might be challenging due to the variance induced in the recursive bootstrapping procedure. Though it is not straightforward to theoretically show, we conjecture that using the Taylor expansion Q-function estimator with $K=\infty$ is as difficult as directly estimating $V_{\gamma'}^\pi(x)$.

\begin{figure}
    \centering
 \includegraphics[width=.5\linewidth]{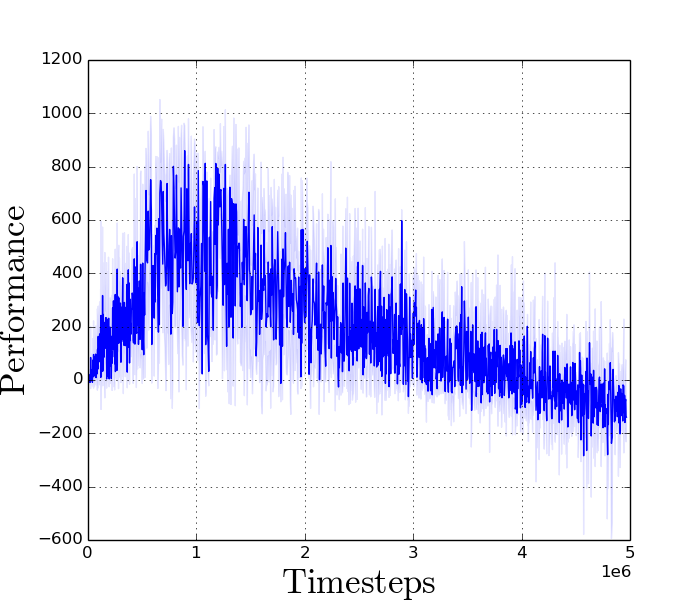}
   \caption{Learning curves generated by running the open source implementation of \citep{romoff2019separating} on Walker2d(G), averaged across $5$ runs. There is little progress of learning for the algorithm on other benchmark tasks.}
   \label{fig:separating}
\end{figure}

\paragraph{Empirical comparison.}
The base algorithm of \citep{romoff2019separating} is  PPO\citep{schulman2017proximal}. Their algorithm uses the recursive bootstraps to estimate Q-functions and advantage functions. The new estimate is used as a direct plug-in replacement to $\hat{Q}_\gamma^\pi(x,a)$ and $\hat{A}_\gamma^\pi(x,a)$ adopted in the PPO algorithm. We run experiments with the open source implementation of \citep{romoff2019separating} from the original authors\footnote{See \url{https://github.com/facebookresearch/td-delta}.}. We evaluate the algorithm's performance over continuous control benchmark tasks.  We applied the default configurations from the code base with minimum changes to run on continuous problems (note that \citep{romoff2019separating} focused on a few discrete control problems). Overall, we find that the algorithm does not learn stably (see Figure~\ref{fig:separating}).

\begin{figure}[t]
    \centering
    \subfigure[Ablation on $\eta$ (individual)]{\includegraphics[keepaspectratio,width=.22\textwidth]{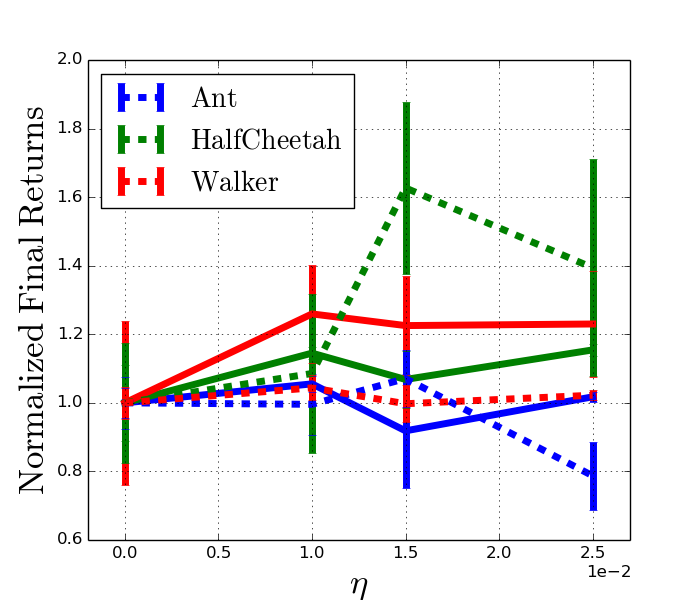}}
    \subfigure[Ablation on $\eta$ (average)]{\includegraphics[keepaspectratio,width=.22\textwidth]{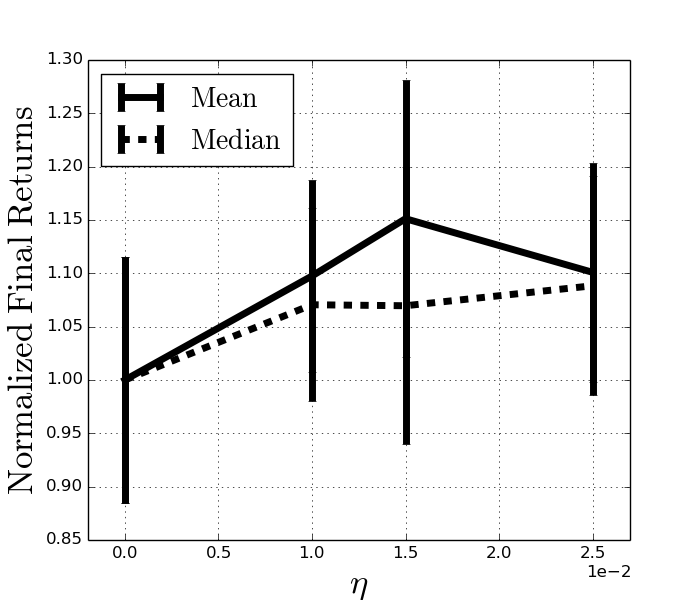}}
    \subfigure[Ablation on $K$ (individual)]{\includegraphics[keepaspectratio,width=.22\textwidth]{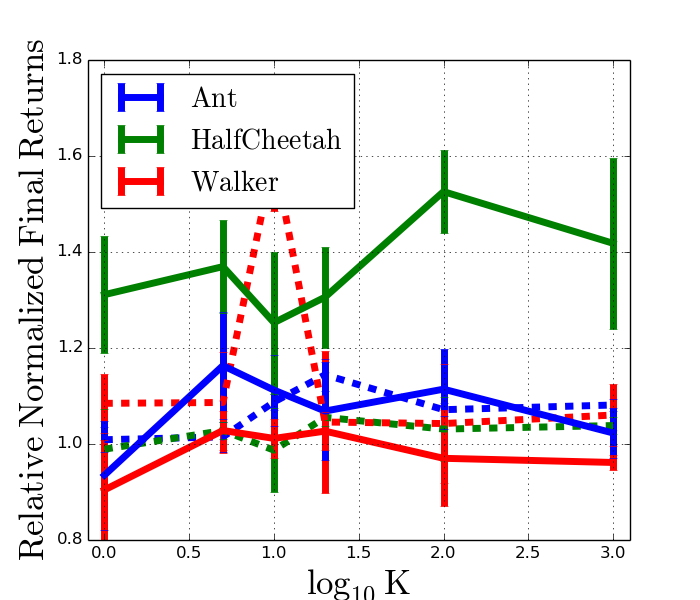}}
    \subfigure[Ablation on $K$ (average)]{\includegraphics[keepaspectratio,width=.22\textwidth]{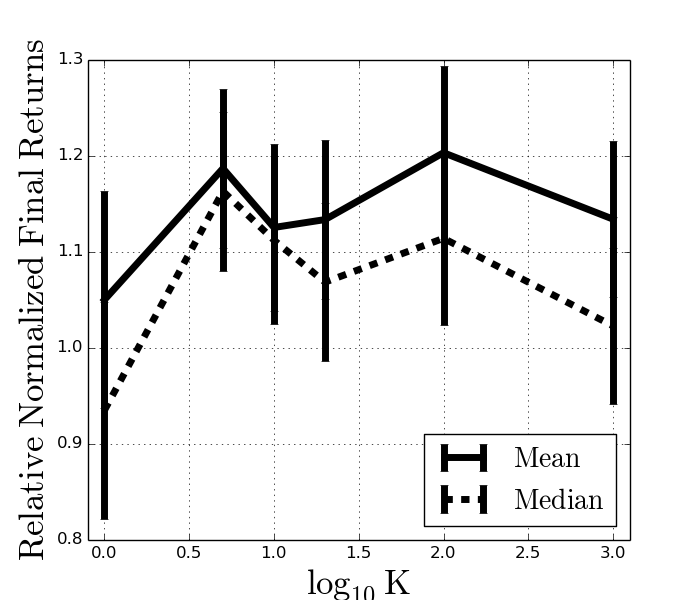}}
    \caption{Ablation study of hyper-parameters. We study two hyper-parameters: (a) $\eta$ (b) $K$. In both cases, we calculate the task-dependent normalized final returns after training for $10^6$ steps. See Appendix~\ref{appendix:exp} for how such normalized returns are computed. In (a),  normalized returns are computed with respect to $\eta=0$ (i.e, the PPO baseline), such that when $\eta=0$, the normalized returns are ones; in (b), normalized returns are computed with respect to the default PPO baseline, such that values of ones imply that the baseline performs the same as the default PPO baseline. Dashed curves (bullet tasks) and solid curves (gym tasks) are both mean scores averaged over $5$ seeds.}
    \label{fig:ablation}
\end{figure}

\section{Extensions of update weighting techniques to off-policy algorithms}

Below, we show that techniques developed in this paper could be extended to off-policy learning algorithms. We provide both details in theoretical derivations, algorithms, as well as experimental results.

\subsection{Off-policy actor-critic algorithms}

Off-policy actor-critics \citep{mnih2015human,lillicrap2015continuous} maintain a deterministic policy $\pi_\theta(x)$ and a Q-function critic $Q_\phi(x,a)$. The agent takes exploratory actions under the environment, and saves data $(x_t,a_t,r_t)$ into a common replay buffer $\mathcal{D}$. At training time, the algorithm samples data from the replay to update parameters. The policy is updated via the deterministic policy gradient \citep{silver2014deterministic}, $\theta\leftarrow\theta + \alpha\nabla_\theta \mathbb{E}_\mu\left\lbrack Q_\phi(x,\pi_\theta(x)) \right\rbrack$, where $\mu$ is implicitly defined by the past behavior policy.

\paragraph{Deep deterministic policy gradient (DDPG).} DDPG \citep{lillicrap2015continuous} maintains a deterministic policy network $\pi_\theta(a|x)\equiv \pi_\theta(x)$ and a Q-function critic $Q_\phi(x,a)$. The algorithm explores by executing a perturbed policy $a = \epsilon + \pi_\theta(x)$ where $\epsilon\sim\mathcal{N}(0,\sigma^2)$ for $\sigma=0.1$, and then saves the data $(x,a,r,x^\prime)$ into a replay buffer $\mathcal{D}$. At training time, the behavior data is sampled uniformly from the replay buffer $(x_i,a_i,r_i,x_i^\prime)_{i=0}^{B-1}\sim\mathcal{U}(\mathcal{D})$ with $B=100$. The critic is updated via TD($0$), by minimizing: $\frac{1}{B}\sum_{i=0}^{B-1}(Q_\phi(x_i,a_i)-Q_{\text{target}}(x_i,a_i))^2$ where $Q_\text{target}(x_i,a_i) = r_i + \gamma Q_{\phi^\prime}(x_i^\prime,\pi_{\theta^\prime}(x_i^\prime))$, where $\theta^\prime,\phi^\prime$ are delayed versions of $\theta,\phi$ respectively \citep{mnih2015human}. The policy is updated by maximizing $\frac{1}{B}\sum_{i=0}^{B-1} Q_\phi(x_i,\pi_\theta(x_i))$ with respect to $\theta$. Both parameters $\theta,\phi$ are trained with the Adam optimizer \citep{kingma2014adam} with learning rate $\alpha=10^{-4}$. We adopt other default hyper-parameters in \citep{SpinningUp2018}, for details, please refer to the code base.

\paragraph{Twin-delayed DDPG (TD3).} TD3 \citep{fujimoto2018addressing} adopts the same training pipeline and architectures as DDPG. TD3 also adopts two critic networks $Q_{\phi_1}(x,a),Q_{\phi_2}(x,a)$ with parameters $\phi_1,\phi_2$, in order to minimize the over-estimation bias \citep{hasselt2010double,van2016deep}.

\paragraph{Soft actor-critic.} SAC \citep{haarnoja2018soft} adopts a similar training pipeline and architectures as TD3. A major conceptual difference is that SAC is based on the maximum-entropy formulation of RL \citep{ziebart2008maximum,fox2015taming}. The Q-function is augmented by entropy regularization bonus and the policy is optimized such that it does not collapse to a deterministic policy.

\subsection{Architecture}

All algorithms share the same architecture. The policy network $\pi_\theta(x)$ takes as input the state $x$, and is a 2-layer neural network with hidden units $h=256$ and $f(x) =\text{relu}(x)$ activation functions. The output is squashed by $f(x)=\text{tanh}(x)$ to comply with the action space boundaries; The critic $Q_\phi(x,a)$ takes a concatenated vector $[x,a]$ as inputs, is 2-layer neural network with hidden units $h=256$ and $f(x) =\text{relu}(x)$ activation functions. The output does not have any activation functions.

For stochastic policies, the policy network parameterizes a Gaussian also parameterizes a log standard deviation vector $\log \sigma(x)$, which is a neural network with the same architecture above. The stochastic output is a reparameterized function $a=\pi_\theta(x)+\exp(\log\sigma(x))\cdot \epsilon$ where the noise $\epsilon\sim\mathcal{N}(0,1)$. Finally, the action output is squashed by $\text{tanh}(x)$ to comply with the action boundary \citep{haarnoja2018soft}.

\subsection{Algorithm details for update weighting}
To derive an update based on update weighting, we start with the undiscounted on-policy objective $V_{\gamma'}(x)=\mathbb{E}_{x'\sim \rho_{x,\gamma,\gamma'}^{\pi_\theta}}\left[V_\gamma^{\pi_\theta}(x')\right]$. Given behavior data generated under $\mu$, we abuse the notation and also use $\mu$ to denote the state distribution under $\mu$ (usually implicitly defined by sampling from a replay buffer $\mathcal{D}$). By rewriting the objective with importance sampling (IS), 
\begin{align}
    V_{\gamma'}^{\pi_\theta}(x) = \mathbb{E}_{x^\prime\sim\rho_{x,\gamma,\gamma^\prime}^{\pi_\theta}}\left\lbrack V_\gamma^{\pi_\theta}(x^\prime)\right\rbrack =  \mathbb{E}_{x'\sim\mu}\left\lbrack \frac{\rho_{x,\gamma,\gamma'}^{\pi_\theta}(x')}{\mu(x')}V_\gamma^{\pi_\theta}(x^\prime)\right\rbrack,
\end{align}
we derive an off-policy learning objective. By dropping a certain terms (see \citep{degris2012off} for details about the justifications for dropping such terms), we can derive the IS-based gradient update
\begin{align*}
    \mathbb{E}_{x^\prime\sim\mu}\left\lbrack \frac{\rho_{x,\gamma,\gamma^\prime}^{\pi_\theta}(x^\prime)}{\mu(x^\prime)} \nabla_\theta V_\gamma^{\pi_\theta}(x^\prime)\right\rbrack \approx  \mathbb{E}_{x^\prime\sim\mu}\left\lbrack \frac{\rho_{x,\gamma,\gamma^\prime}^{\pi_\theta}(x^\prime)}{\mu(x^\prime)}  \nabla_\theta Q_\phi\left(x',\pi_\theta(x')\right)\right\rbrack
\end{align*}
To render the update feasible, we need to estimate the ratio $\frac{\rho_{x,\gamma,\gamma^\prime}^{\pi_\theta}(x^\prime)}{\mu(x^\prime)} $. Inspired by \citep{sinha2020experience}, we propose to maintain a \emph{fast replay buffer} $\mathcal{D}_f$ which contains the most recent sampled data (which implicitly defines $\rho_{x,\gamma,\gamma'}^{\pi_\theta}$), then the estimator $w_\psi$ is trained to estimate the density ratio between $\mathcal{D}$ (which implicitly defines $\mu$) and $\mathcal{D}_f$. See Appendix \ref{appendix:exp} for further details. The full off-policy actor-critic algorithm is summarized in Algorithm~\ref{algo:off}. In practice, we implement a undiscounted uniform distribution instead of $\rho_{x,\gamma,\gamma^\prime}^\pi(x')$ with $\gamma'=1$. The main motivation is that this distribution is much easier to specify as it corresponds to sampling from the replay buffer uniformly without discounts, as explained below.

As an important observation for practical implementations, note that
\begin{align*}
    \rho_{x,\gamma,\gamma'}^\pi(x') = \frac{\gamma}{\gamma'} \mathbb{I}[x_0=x'] + (\gamma'-\gamma) \mathbb{E}_\pi\left[\sum_{t\geq 1} (\gamma')^{t-1} \mathbb{I}[x_t=x'] \; \middle| \; x_0=x \right]
\end{align*}
when setting $\gamma'=1$, we see that the second term of the distribution is proportional to  $\mathbb{E}_\pi\left[ \sum_{t\geq 1} \mathbb{I}[x_t=x'] \; \middle| \; x_0=x \right] $, which corresponds to a uniform distribution over states on sampled trajectories, \emph{without} discounting. This will make implementations much simpler. We will see that this could also lead to performance gains. We leave Taylor expansion based extension of this method for future work.

\paragraph{Details on training the density estimator $w_\psi(x)$.} The density estimator $w_\psi(x)$ is parameterized with exactly the same architecture as the policy network $\pi_\theta(x)$, except that its output activation is replaced by $\log(1+\exp(x))$ to ensure that $w_\psi(x)> 0$. The off-policy actor-critic algorithm maintains an original buffer $\mathcal{D}$ of size $|\mathcal{D}|=10^6$; in addition, we maintain a fast replay buffer $\mathcal{D}_f$ with $|\mathcal{D}_f|=10^4$, which is used for saving the most recently generated data points. For ease of analysis, assume that the data sampled from $\mathcal{D}_f$ come from $\pi_\theta$, while the data sampled from $\mathcal{D}$ come from $\mu$. 

To learn the ratio $\frac{\rho_{x,\gamma,\gamma'}^{\pi_\theta}(x')}{\mu(x')}$, we adopt a simple discriminative loss function as follows
\begin{align*}
    L(\psi) = -\mathbb{E}_{x'\sim \rho_{x,\gamma,\gamma'}^{\pi_\theta}}\left\lbrack \log \frac{w_\psi(x')}{1+w_\psi(x')} \right\rbrack
    - \mathbb{E}_{x'\sim \mu}\left\lbrack \log \frac{1}{1+w_\psi(x')} \right\rbrack
    \approx -\mathbb{E}_{x\sim \mathcal{D}_f}\left\lbrack \log \frac{w_\psi(x')}{1+w_\psi(x')} \right\rbrack
    - \mathbb{E}_{x\sim \mathcal{D}}\left\lbrack \log \frac{1}{1+w_\psi(x')} \right\rbrack.
\end{align*}
The optimal solution to $\psi^\ast = \arg\min_\psi L(\psi)$ is $w_{\psi^\ast}(x') = \frac{\rho_{x,\gamma,\gamma'}^{\pi_\theta}(x')}{\mu(x')}$(assuming enough expressiveness). Then, the density estimator is used for weighting the policy update: when sampling a batch of $B$ data from the buffer, the weight $w_\psi(x_i),1\leq i\leq B$ is computed for each data point $x_i$. Then the weights are normalized across batch $\tilde{w}_i = \frac{w_\psi(x_i)^\tau}{\sum_{j=1}^B w_\psi(x_j)^\tau}$ where the inverse temperature is $\tau=0.1$. Then $\tilde{w}_i$ is used for weighting the such that the policy is updated as $\theta\leftarrow\theta + \alpha \frac{1}{B}\sum_{i=1}^B \tilde{w}_i \nabla_\theta Q_\phi(x_i,\pi_\theta(x_i))$.

\begin{algorithm}[h]
\label{algo:off}
\begin{algorithmic}
\REQUIRE  policy $\pi_\theta(x)$, Q-function critic $Q_\phi(x,a)$, density estimator $w_\psi(x)$ and learning rate $\alpha \geq 0$ \\
\WHILE{not converged}
\STATE 1. Collect data $(x_t,a_t,r_t)\sim\mu$ and save to the buffer $\mathcal{D}$ and the fast buffer $\mathcal{D}_f$
\STATE 2. Estimate the density by the discriminative loss between $\mathcal{D},\mathcal{D}_f$, such that $w_\psi(x^\prime)\approx \rho_{x,\gamma,\gamma^\prime}^{\pi_\theta}(x^\prime)/\mu(x^\prime)$, where $x$ is the initial state of the MDP.
\STATE 3. Sample data from $(x_i,a_i,r_i)_{i=1}^B \sim \mathcal{D}$.
\STATE 3(a). Update the Q-function critic $Q_\phi(x,a)$ via TD-learning, such that $Q_\phi(x,a)\approx Q_\gamma^{\pi_\theta}(x,a)$.
\STATE 3(b). Update the policy parameter with the gradient $\theta\leftarrow\theta + \alpha \sum_{i=1}^B w_\psi(x_i) \nabla_\theta Q_\phi(x_i,\pi_\theta(x_i))$.
\ENDWHILE
\caption{Update weighting Off-policy actor-critic}
\end{algorithmic}
\end{algorithm}

We carry out the update in Algorithm 2, where the density estimator $w_\psi(x)$ is trained based on a discriminative loss between $\mathcal{D}$ and $\mathcal{D}_f$. For any given batch of data $\{x_i\}_{i=1}^B$, we normalize the prediction $\tilde{w}_i=w_\psi(x_i)^{\tau}/\sum_{j=1}^B w_\psi(x_j)^\tau$ with hyper-parameter $\tau=0.1$ as similarly implemented in \citep{sinha2020experience}. The temperature annealing moves $\tilde{w}_i$ closer to a uniform distribution and tends to stabilize the algorithm. See Appendix~\ref{appendix:exp} for further details.

\paragraph{Discussion on relations to other algorithms.} Previous work focuses on re-weighting transitions to stabilize the training of critics. For example, prioritized replay \citep{schaul2015prioritized} prioritizes samples with high Bellman errors. Instead, Algorithm 2 reweighs samples to speed up the training of the policy. Our observation above also implies that when sampling from $\mathcal{D},\mathcal{D}_f$ for training the estimates $w_\psi\approx \frac{\rho_{x,\gamma,\gamma'}^{\pi_\theta}(x')}{\mu(x')}$, it is not necessary to discount the transitions. This is in clear contrast to prior work, such as \citep{sinha2020experience}, where they propose to train $w_\psi(x')\approx d_{x,\gamma}^{\pi_\theta}(x') / d_{x,\gamma}^\mu(x')$ , which is the fully discounted  visitation distribution under $\gamma$ based on the derivation of optimizing a discounted objective $V_\gamma^{\pi_\theta}(x)$.

\begin{figure}
    \centering
    \subfigure[HalfCheetah(G)]{\includegraphics[keepaspectratio,width=.22\textwidth]{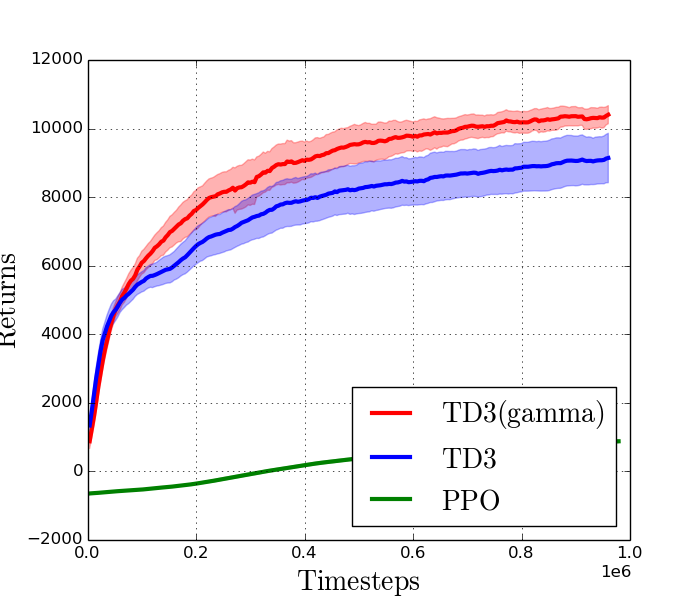}}
    \subfigure[Ant(G)]{\includegraphics[keepaspectratio,width=.22\textwidth]{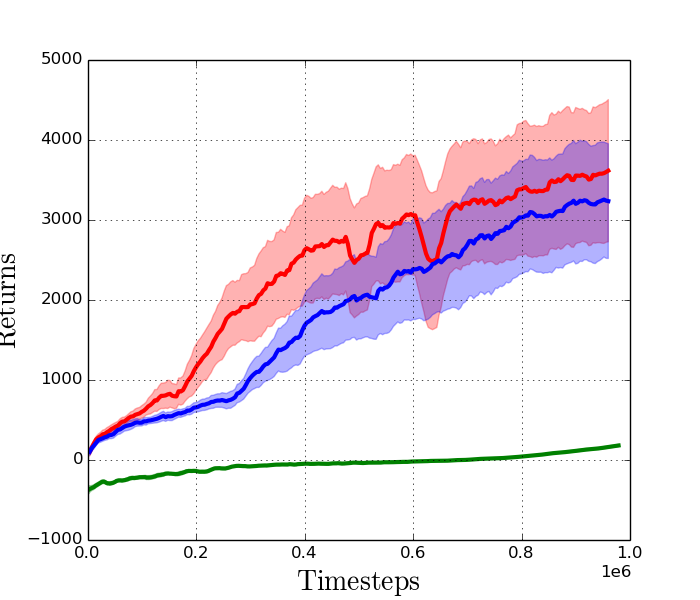}}
    \subfigure[Walker2d(G)]{\includegraphics[keepaspectratio,width=.22\textwidth]{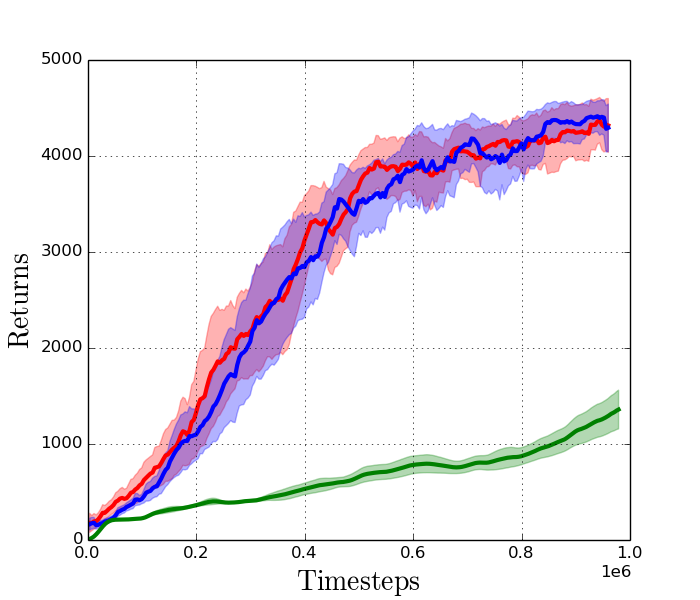}}
    \subfigure[Hopper(G)]{\includegraphics[keepaspectratio,width=.22\textwidth]{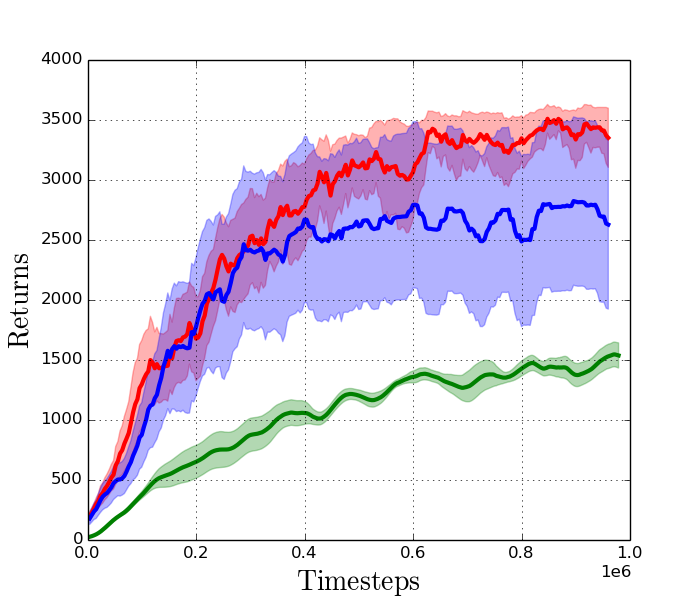}}
    \caption{Evaluation of near off-policy actor-critic algorithms over continuous control domains. Each curve corresponds to a baseline algorithm averaged over $5$ random seeds. TD3($\gamma$) (red curve) consistently outperforms or performs similarly as other baselines.}
    \label{fig:td3}
\end{figure}

\begin{figure}
    \centering
    \subfigure[HalfCheetah(G)]{\includegraphics[keepaspectratio,width=.22\textwidth]{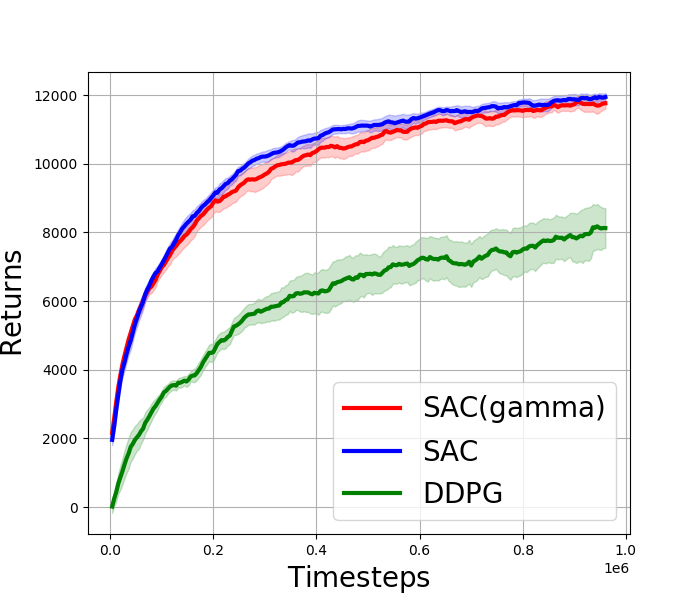}}
    \subfigure[Ant(G)]{\includegraphics[keepaspectratio,width=.22\textwidth]{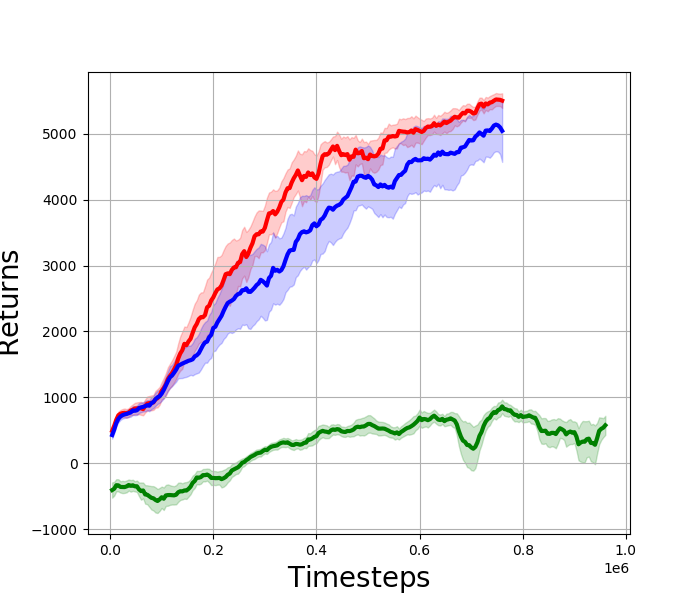}}
    \subfigure[Walker2d(G)]{\includegraphics[keepaspectratio,width=.22\textwidth]{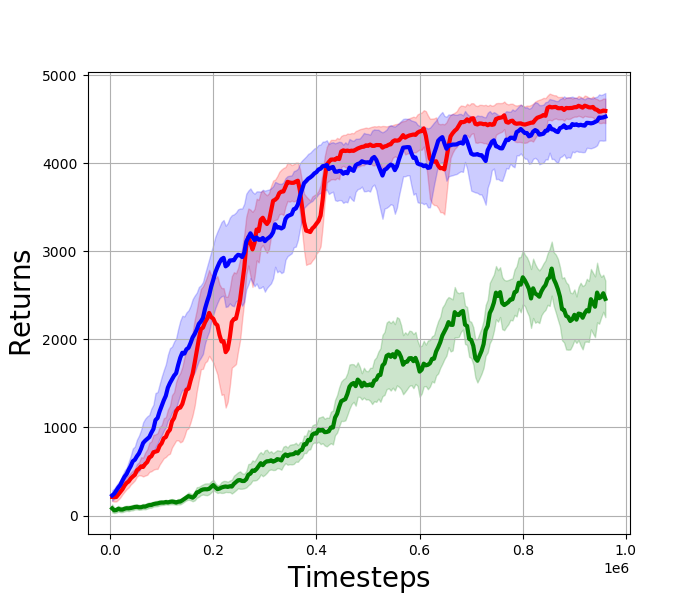}}
    \subfigure[Hopper(G)]{\includegraphics[keepaspectratio,width=.22\textwidth]{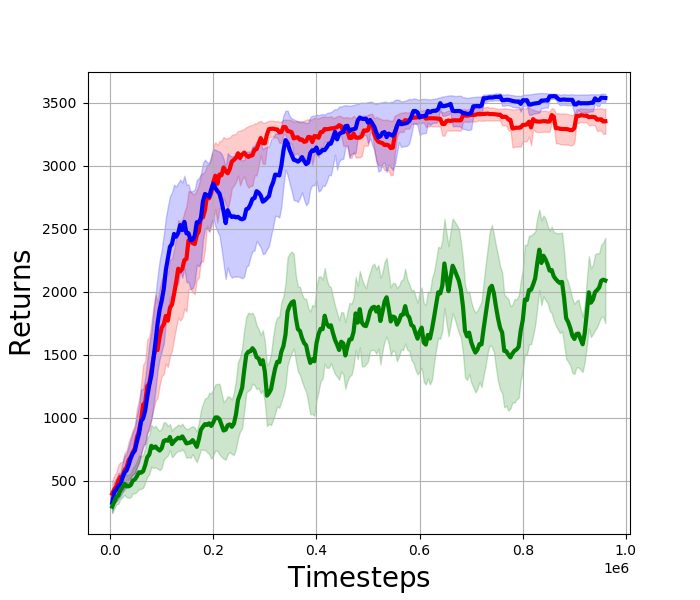}}
    \caption{Evaluation of near off-policy actor-critic algorithms over continuous control domains. Each curve corresponds to a baseline algorithm averaged over $5$ random seeds. SAC($\gamma$) (red curve) consistently outperforms or performs similarly as other baselines.}
    \label{fig:sac}
\end{figure}

\paragraph{Results.} We build the algorithmic improvements based on TD3 \citep{fujimoto2018addressing} and SAC \citep{haarnoja2018soft}, and name the correspinding algorithms TD3($\gamma$) and SAC($\gamma$) respectively. We compare with TD3, SAC, and DDPG \citep{lillicrap2015continuous}, all of which are off-policy algorithms.

We first compare TD3($\gamma$) with TD3 in Figure~\ref{fig:td3}. To highlight the default sample efficiency of off-policy methods, we include PPO as a baseline as well. Across all four presented tasks, we see that TD($\gamma$) performs similarly or marginally outperforms the TD3 baseline. To make concrete the comparison between final performance, we report the final score $\text{mean}\pm0.5\text{std}$ of each algorithm in Table~\ref{table:td3}. As a default baseline, we also show the results of DDPG reported in \citep{SpinningUp2018}. Overall, TD3($\gamma$) provides a modest yet consistent boost over baseline TD3.

Then we compare SAC($\gamma$) with SAC in Figure~\ref{fig:sac} and Table~\ref{table:td3}. We see that SAC($\gamma$) provides marginal performance gains over Walker2d and Ant, while it is slightly overperformed by baseline SAC for HalfCheetah and Hopper. We speculate that this is partly because the hyper-parameters of baseline SAC are well tuned on HalfCheetah, and it is difficult to achieve further significant gains without exhaustive hyper-parameter search. Overall, SAC($\gamma$) is competitive compared to SAC.

\begin{table}
    \vskip 0.1in
    \begin{center}
    \footnotesize
    \begin{sc}
    \begin{tabular}{c|c|c|c}\toprule[1.5pt]
        \bf Tasks & \bf TD3($\gamma$) & \bf TD3 & \bf DDPG-v1 \\\midrule
Ant(G) & $\mathbf{3601 \pm 879}$ & $\mathbf{3269 \pm 686}$ & $\approx 1000$    \\
HalfCheetah(G) & $\mathbf{10350 \pm 279}$ & $9156 \pm 718$ & $\approx 8500$    \\
Walker2D(G) & $\mathbf{4090 \pm 440}$ & $\mathbf{4233 \pm 314}$ & $\approx 2000$    \\
Hopper(G) & $\mathbf{3340 \pm 262}$ & $2626 \pm 677$ & $\approx 1800$   \\  \toprule[1.5pt]
\bf Tasks & \bf SAC($\gamma$) & \bf SAC & \bf DDPG-v2  \\\midrule
Ant(G) & $\mathbf{5572 \pm 115}$ & $4886 \pm 530$ & $706 \pm 123$    \\
HalfCheetah(G) & $11774 \pm 96$ & $\mathbf{12059 \pm 91}$ & $7957 \pm 527$    \\
Walker2D(G) & $\mathbf{4626 \pm 165}$ & $\mathbf{4522 \pm 269}$ & $2261 \pm 147$    \\
Hopper(G) & $3384 \pm 81$ & $\mathbf{3557 \pm 20}$  & $2024 \pm 297$    \\
 \bottomrule[1.46pt]
    \end{tabular} \par
    \end{sc}
    \end{center}
    \caption{Final performance of baseline algorithms over benchmark tasks. The final performance is computed as the mean scores over the last $10$ iterations of each algorithm, averaged over $5$ seeds. When compared with TD3, the performance of DDPG-v1 is taken from \citep{SpinningUp2018}; when compared with SAC, the performance is based on re-runs of the DDPG-v2 baselines with \citep{SpinningUp2018}. For each task, the best algorithms are highlighted in bold fonts (potentially with ties).}
    \label{table:td3}
\end{table}

\end{appendix}

\end{document}